\documentclass[a4paper,fleqn]{cas-sc}
\usepackage[T1]{fontenc}
\usepackage[utf8]{inputenc}
\usepackage{multirow}

\usepackage[linesnumbered,ruled,vlined,longend]{algorithm2e}
\usepackage{amsmath,amssymb}
\usepackage{caption}
\usepackage{graphicx}
\usepackage[table]{xcolor}
\usepackage{float}
\usepackage{mathtools}
\usepackage{subcaption}
\usepackage{bm}
\usepackage{todonotes}
\usepackage{amsthm}
\usepackage{makecell}
\usepackage{placeins}
\usepackage{tikz}
\usetikzlibrary{positioning, arrows.meta}
\usetikzlibrary{matrix}
\usepackage{listings}
\usepackage{cleveref}

\usepackage[numbers]{natbib}

\def\tsc#1{\csdef{#1}{\textsc{\lowercase{#1}}\xspace}}
\tsc{WGM}
\tsc{QE}

\definecolor{kOneTint}{RGB}{220,240,255}
\definecolor{kTenTint}{RGB}{225,245,225}
\definecolor{kHundredTint}{RGB}{255,235,225}
\definecolor{kOne}{RGB}{220,235,250}
\definecolor{kTen}{RGB}{220,245,220}
\definecolor{kHundred}{RGB}{235,220,245}

\theoremstyle{plain}
\newtheorem{theorem}{Theorem}[section]

\renewcommand{\L}{\mathcal{L}}
\newcommand{\btheta}{{\bm{\theta}}}

\begin{document}
\let\WriteBookmarks\relax
\def\floatpagepagefraction{1}
\def\textpagefraction{.001}

\shorttitle{Curvature-Aware Optimization for High-Accuracy PINNs}
\shortauthors{Jnini et al.}

\title[mode=title]{Curvature-Aware Optimization for High-Accuracy Physics-Informed Neural Networks}

\author[1]{Anas Jnini}
\fnmark[1]

\author[2]{Elham Kiyani}
\fnmark[1]

\author[2]{Khemraj Shukla}
\fnmark[1]

\author[3]{Jorge F. Urb\'an}
\fnmark[1]

\author[4]{Nazanin {Ahmadi Daryakenari}}

\author[5]{Johannes M\"uller}

\author[6]{Marius Zeinhofer}
\cormark[1]
\ead{marius.zeinhofer@math.ethz.ch}

\author[2]{George {Em Karniadakis}}
\cormark[1]
\ead{george\_karniadakis@brown.edu}

\fntext[1]{These authors contributed equally to this work and are listed in alphabetical order by last name.}
\cortext[1]{Corresponding author}

\affiliation[1]{
    organization={Department of Information Engineering and Computer Science, University of Trento},
    city={Trento},
    country={Italy}
}

\affiliation[2]{
    organization={Division of Applied Mathematics, Brown University},
    city={Providence},
    state={RI},
    country={USA}
}

\affiliation[3]{
    organization={Departament de F\'isica, Universitat d'Alacant},
    country={Spain}
}

\affiliation[4]{
    organization={Center for Biomedical Engineering, Brown University},
    city={Providence},
    state={RI},
    country={USA}
}

\affiliation[5]{
    organization={Institute of Mathematics, TU Berlin},
    country={Germany}
}

\affiliation[6]{
    organization={ETH Zurich},
    country={Switzerland}
}

\begin{abstract}
Efficient and robust optimization is essential for neural networks, enabling scientific machine learning models to converge rapidly to very high accuracy---faithfully capturing complex physical behavior governed by differential equations. In this work, we present advanced optimization strategies to accelerate the convergence of physics-informed neural networks (PINNs) for challenging partial (PDEs) and ordinary differential equations (ODEs). Specifically, we provide efficient implementations of the Natural Gradient (NG) optimizer, Self-Scaling BFGS and Broyden optimizers, and demonstrate their performance on problems including the Helmholtz equation, Stokes flow, inviscid Burgers equation, Euler equations for high-speed flows, and stiff ODEs arising in pharmacokinetics and pharmacodynamics. Beyond optimizer development, we also propose new PINN-based methods for solving the inviscid Burgers and Euler equations, and compare the resulting solutions against high-order numerical methods to provide a rigorous and fair assessment. Finally, we address the challenge of scaling these quasi-Newton optimizers for batched training, enabling efficient and scalable solutions for large data-driven problems.
\end{abstract}

\begin{keywords}
Physics-informed neural networks \sep Curvature-aware optimization \sep Quasi-Newton methods  \sep Self-Scaling BFGS and Broyden \sep Natural Gradient \sep Scientific machine learning \sep Partial differential equations
\sep HLLC Flux
\end{keywords}

\maketitle

\section{Introduction}
Optimizing neural networks is a central challenge in modern machine learning and an active area of theoretical research~\cite{lecun2015deep, arora2018optimization, kawaguchi2016deep,kopanivcakova2026introduction}. Despite their highly non-convex structure, neural network optimization problems are often more tractable in practice than worst-case non-convex problems, suggesting the presence of exploitable structure in their landscapes. Most training algorithms are variants of gradient-based methods, where gradients are efficiently computed via backpropagation, and parameters are updated iteratively using carefully tuned step sizes. The associated optimization difficulties can be broadly decomposed into three aspects: ensuring convergence to stationary points, achieving fast convergence, and attaining solutions of high global quality. These challenges manifest through local issues such as vanishing or exploding gradients and slow convergence, as well as global issues including poor local minima, flat plateaus, and complex loss landscapes. Understanding these phenomena and developing principled algorithmic and theoretical tools to address them is key to making neural networks reliable and powerful optimization models.

Because training ultimately comes down to choosing parameters that minimize a loss, optimization acts as the mechanism that turns data and model structure into learned behavior. A useful analogy is navigating rugged terrain in dense fog: the loss surface may have valleys, cliffs, and flat basins, and the algorithm must decide how to move using only limited local information. Gradients provide a direction of steepest descent, but without any sense of curvature, progress can be slow in flat regions or unstable in sharply curved ones.

This limitation motivates a natural hierarchy of methods. Formally, first-order methods such as gradient descent, update parameters using only gradient evaluations. They are simple and scalable, but they can zigzag in narrow valleys and require careful step-size tuning to avoid divergence. Second-order methods, in contrast, incorporate curvature through matrices like the Hessian, Gauss–Newton, or Fisher information matrices~\cite{amari1997information, amari1998natural}. By capturing local geometry, these methods aim to produce better-scaled search directions and step sizes, often enabling faster and more reliable convergence.

\begin{figure}[h!]
  \centering
  \includegraphics[width=0.99\linewidth, height=0.89\textheight, keepaspectratio]{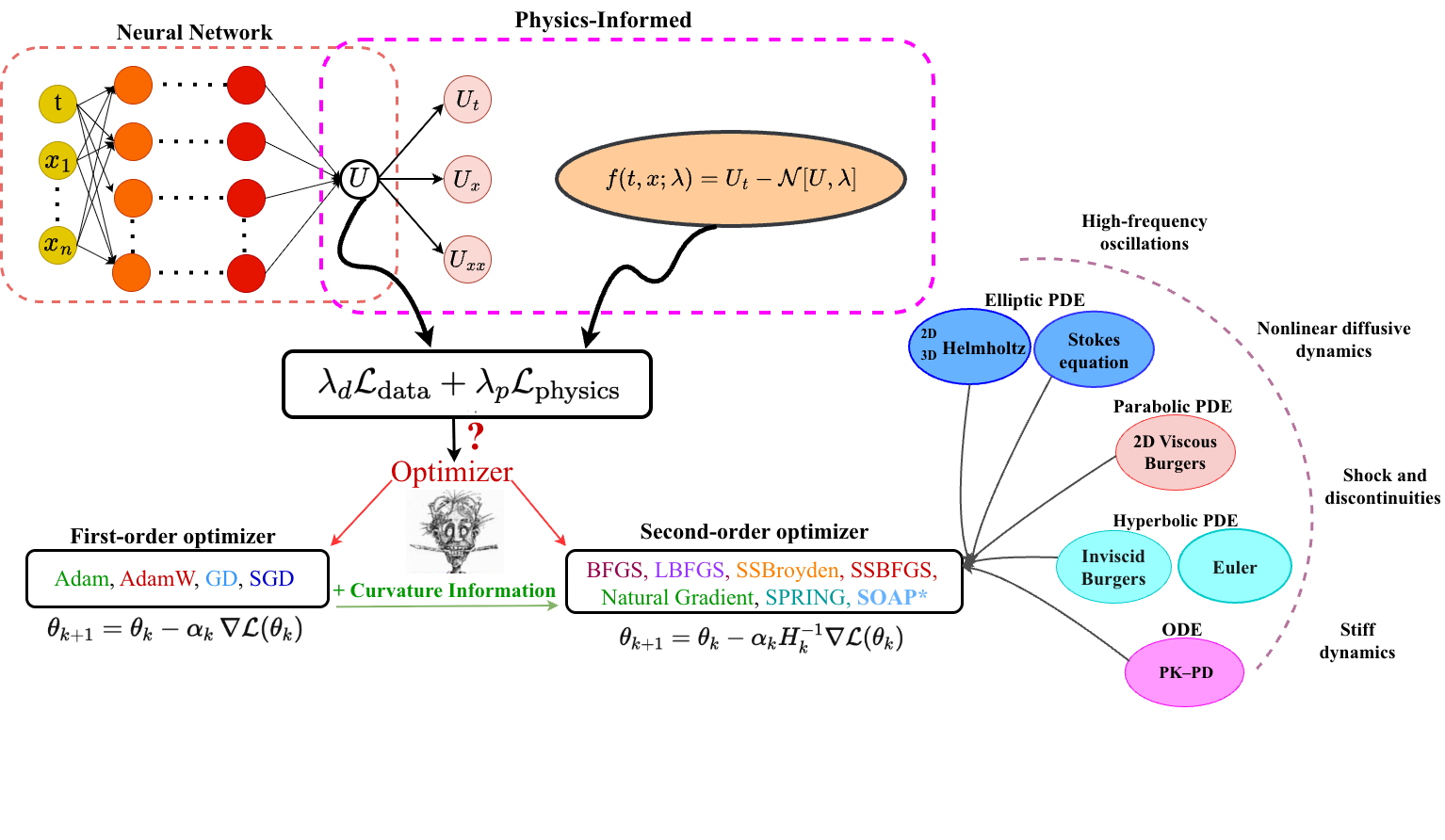}
\caption{Schematic illustration of PINN training and optimization methods considered in this work. Our benchmarks include  PDEs of elliptic (Helmholtz and Stokes), parabolic (2D viscous Burgers), hyperbolic (inviscid Burgers and 1D Euler) type, and a stiff PK--PD ODE system, characterized by oscillatory solutions, nonlinear diffusion, shock-induced discontinuities, and stiffness. 
The methods studied here are BFGS, SSBFGS, SSBroyden, NG, SPRING, and SOAP. SOAP$^{*}$ is included for comparison as a curvature-aware preconditioned method, although it is not a classical Newton or quasi-Newton method.}
\label{Fig:Schematic}
\end{figure}


The physical intuition mirrors the terrain analogy. At first, you may only know which direction leads uphill or downhill---this corresponds to gradient descent, which follows the local slope (the gradient) but ignores how the landscape bends. As a result, the optimizer may creep across plateaus or oscillate across steep ravines. Now imagine discovering a detailed map that reveals not only the slope but also how the ground curves beneath your feet. That curvature information is encoded in second-order objects such as the Hessian, Gauss--Newton, and Fisher information. Leveraging this structure enables updates to adapt to the local shape of the loss surface, which is the central idea behind second-order optimization.
Together, these lines of work suggest that the key bottleneck is not whether curvature helps, but how to exploit it under tight computational and memory constraints. This motivates methods that approximate second-order information in ways that are both scalable and robust, aiming to bridge the gap between the efficiency of first-order training and the geometric awareness of curvature-based updates.

To better understand the role of curvature information in PINN training, the following functional interpretation of second-order methods is useful.
%
%
Specifically, the training of PINNs can be interpreted as the minimization of a nonlinear least-squares objective of the form
\begin{equation*}
\mathcal{L}(\btheta)=\frac{1}{2}\|r(\btheta)\|^{2},
\end{equation*}
where \(r(\btheta)\) denotes the collection of residuals associated with the governing equations, boundary conditions, and data constraints. Linearizing the residual around the current iterate \(\btheta_k\) yields
\begin{equation*}
r(\btheta)\approx r(\btheta_k)+J_k(\btheta-\btheta_k),
\end{equation*}
where \(J_k\) is the Jacobian of the residual with respect to the parameters. In this setting, classical gradient descent updates take the form
\begin{equation*}
\btheta_{k+1}
=
\btheta_k-\eta_k J_k^\top r(\btheta_k),
\end{equation*}
which can be interpreted as steepest descent in parameter space.
In contrast, curvature-aware methods such as Gauss--Newton introduce the approximate Hessian \(H_k \approx J_k^\top J_k\), leading to updates of the form
\begin{equation*}
\btheta_{k+1}
=
\btheta_k-\eta_k (J_k^\top J_k)^{-1}J_k^\top r(\btheta_k).
\end{equation*}

This interpretation provides a unifying explanation for the improved performance of second-order methods in PINNs. In particular, the poor training behavior commonly observed in PINNs can be attributed to ill-conditioning of the NTK, which leads to slow convergence and biased learning toward low-frequency components (spectral bias). By incorporating curvature information, Gauss--Newton-type methods effectively regularize and rescale the NTK spectrum, improving conditioning and enabling more uniform error reduction across modes. The proposed Gram--Gauss--Newton approximation can thus be viewed as a tractable surrogate for this ideal preconditioning, balancing computational cost and conditioning improvement in overparameterized regimes.

In the present work, we conduct a broad empirical study of optimizer performance, including SSBFGS, SSBroyden, NG, SOAP~\cite{vyas2024soap}, and AdamW~\cite{zhou2024towards}. 
Our primary focus is on challenging training regimes arising in PINNs, a framework originally introduced to solve forward and inverse problems for differential equations by embedding the governing physics directly into the training objective~\cite{RAISSI2019686}.
Since their introduction, PINNs have attracted substantial attention in scientific machine learning because they provide a mesh-free and flexible approach for incorporating physical constraints into neural-network training. At the same time, a growing body of work has shown that PINN training is often significantly more delicate than standard supervised learning, owing to gradient-flow pathologies, imbalanced loss terms, spectral bias, and optimization failure modes that become especially severe for stiff, multiscale, highly oscillatory, or shock-dominated problems~\cite{wang2021understanding,NEURIPS2021_df438e52,KIYANI2023116258,toscano2025pinns,review}.
Recent studies have also emphasized that, beyond architecture and sampling choices, the optimizer itself can be a decisive factor in determining whether PINNs converge to low-error solutions and whether they can reach very high accuracy in practice~\cite{kiyani2025optimizing,urban2025unveiling,jnini2025gauss,jnini2025dual,guzman-cordero2025improving,schwencke2025anagramnaturalgradientrelative,schwencke2025amstramgramadaptivemulticutoffstrategy}.
Motivated by these observations, we benchmark curvature-aware optimizers on a suite of demanding problems spanning a stiff ODE system (PK--PD) and three classes of PDEs: elliptic problems (2D and 3D Helmholtz), parabolic problems (2D viscous Burgers), and hyperbolic problems (inviscid Burgers and the 1D Euler equations). These test cases represent distinct sources of training difficulty, including stiff dynamics, high-frequency oscillatory solutions, nonlinear diffusive behavior, and shock-induced discontinuities. Since our earlier comprehensive study~\cite{kiyani2025optimizing} has already established that double precision yields superior performance compared to single precision on such problems, we restrict attention here to double precision computations unless noted otherwise. A detailed comparison of single- and double-precision performance can be found in~\cite{kiyani2025optimizing}.
%
%
Beyond benchmarking, this work also introduces novel PINN-based methodologies for solving hyperbolic PDEs with discontinuous solutions. In particular, convergence is demonstrated for both the inviscid Burgers equation and the compressible Euler equations. Furthermore, new batch-training variants of the SSBFGS and SSBroyden algorithms are proposed to improve efficiency and scalability in large-scale training settings.
The aforementioned functional perspective motivates the design and evaluation of curvature-aware optimization strategies for PINNs. The main contributions of this work are summarized as follows:
\begin{itemize}

\item We introduce novel methods in PINNs for solving hyperbolic conservation laws governing high-speed flows, with applications to Burgers’ and Euler equations.

\item We provide a systematic analysis of optimizer performance across different classes of PDEs, including elliptic, parabolic, and hyperbolic regimes.

\item We develop a new framework for analyzing PDE learning dynamics through loss landscapes in both function and parameter spaces.

\item We present a detailed roofline analysis to characterize the computational scaling and efficiency of quasi-Newton optimizers in PINNs.

\item We propose a new batch training strategy tailored for quasi-Newton optimization in PINNs.

\item We perform a comprehensive comparison of multiple optimization methods, including SSBFGS, SSBroyden, NG, SPRING, and SOAP, across a range of PDEs solved using PINNs. The benchmark problems include the Helmholtz equation, Stokes flow, inviscid Burgers' equation, Euler equations, as well as stiff ODEs arising in pharmacokinetics and pharmacodynamics modeling.


\item We  propose a new methodology to mitigate the spectral bias in Helmholtz equation for higher higher wavenumbers.

\item We demonstrate the applicability of PINNs to stiff ODE systems arising in pharmacokinetics and pharmacodynamics.

\item We propose a novel incorporation of Jacobi scaling within NG methods to improve optimization performance in terms of stability.

\item While the present study focuses on forward problems, we outline how the proposed methods naturally extend to inverse problems, including parameter estimation and data assimilation.

\end{itemize}

\paragraph{Outline:}
The paper begins with a comprehensive review of the second-order and quasi-second-order optimization methods in Section~\ref{sec:Review_of_Second-order_Optimizers}, including the formulation of each optimizer and the line-search strategies used in this study. 
Evaluation metrics and landscape-visualization tools are introduced in Sections~\ref{sec:Metrics} and~\ref{sec:Landscape_Visualization}.
A broad set of challenging PINN benchmarks is then presented in Section~\ref{sec:PINNs_Benchmarks}, including the 2D and 3D Helmholtz problems (Section~\ref{sec:Helmholtz_Problem}), the inviscid Burgers equation (Section~\ref{sec:Inviscid_Burgers_Equation}), the Euler equations (Section~\ref{sec:euler_equation}), Stokes flow (Section~\ref{sec:Stokes_Flow}), and a stiff PK--PD ODE system (Section~\ref{sec:stiff_odes}). 
%

%
%
\section{Review of Considered Optimizers}\label{sec:Review_of_Second-order_Optimizers}

In this section, we briefly describe the underlying principles as well as implementations of each optimizer considered in our study. 
We focus on the core update rules, curvature approximations, and computational requirements, highlighting the practical differences most relevant to large-scale training and PINN applications. 
Figure~\ref{Fig:Schematic} provides an overview of the PINN training workflow and where optimization enters the pipeline. 
%
%
Let us consider the PDE problem
\begin{equation}
\begin{aligned}
    \mathcal{A}u &= f \quad \text{in } \Omega, \\
    u &= g \quad \text{on } \Gamma.
\end{aligned}
\end{equation}
where $\mathcal A$ is a PDE operator in strong form and $\Gamma\subseteq\partial\Omega$ denotes the boundary. For simplicity, we assumed Dirichlet boundary conditions in the formulation above. We understand this formulation to include both stationary and transient problems for appropriate choices of $\Omega$, $\Gamma$, and $\mathcal A$. For collocation points $x_i \in \Omega$ in the interior of the domain and $y_j \in \Gamma$ on the boundary, PINNs employ the training loss
\begin{equation}\label{eq:discrete_pinn_formulation}
\mathcal{L}(\btheta)
=
\frac{1}{2N_\Omega}\sum_{i=1}^{N_\Omega} \bigl(\mathcal{A}u_{\btheta}(x_i)-f(x_i)\bigr)^2
+
\frac{1}{2N_\Gamma}\sum_{j=1}^{N_\Gamma} \bigl(u_{\btheta}(y_j)-g(y_j)\bigr)^2.
\end{equation}
This is a nonlinear least-squares objective.
Here, \(\btheta \in \mathbb{R}^p\) denotes the vector of trainable parameters of the neural network \(u_{\btheta}\).
We consider iterative optimization methods, starting with the most widely known example, namely (stochastic) gradient descent:
\begin{equation}
    \btheta_{k+1} = \btheta_k - \eta_k \mathbf{g}_k,
    \qquad
    \mathbf{g}_k \coloneqq \nabla_{\btheta}\mathcal{L}(\btheta_k).
\end{equation}
where $\eta_k>0$ denotes the stepsize, which is also referred to as the learning rate. 
It is well documented that such simple gradient-based optimizers perform inadequately for PINNs.  
This is the reason why we consider optimizers that incorporate curvature information, which we review in this section. 
All of these methods incorporate this curvature information via preconditioning and yield update rules of the form 
\begin{equation}
    \btheta_{k+1} = \btheta_k - \eta_k P_k^{-1}\mathbf{g}_k,
    \qquad
    \mathbf{g}_k \coloneqq \nabla_{\btheta}\L(\btheta_k),
\end{equation}
where $P_k$ is the preconditioner.
We consider optimizers that can be interpreted as fast and scalable approximations of one of the three preconditioners shown in
\Cref{tab:optimizers}.
\renewcommand{\arraystretch}{1.5}
\begin{table}[h!]
    \centering
    \begin{tabular}{|c|c|c|c|}
         \hline
         \textbf{Family} & \textbf{Preconditioner} & \textbf{Properties} & \textbf{Variants} \\ \hline
         AdaGrad & $\left(\sum_{i=1}^k \mathbf{g}_i \mathbf{g}_i^\top\right)^{1/2}$ & \makecell{Scale invariant\\Rotation invariant} & \makecell{Adam\\Shampoo\\SOAP} \\ \hline
         Newton's method & $\nabla^2 \mathcal{L}(\btheta_k)$ & Affinely invariant & \makecell{SSBFGS\\SSBroyden} \\ \hline
         \makecell{Gauss-Newton\\Natural 
         Gradient} & $J_k^\top J_k$ & \makecell{Reparameterization invariant\\Function-space interpretation} & \makecell{NG / GN} \\ \hline
    \end{tabular}
    \caption{Overview of the optimizers considered in this work. The listed properties refer to the idealized exact methods, rather than to the practical variants, which typically rely on additional approximations.}
    \label{tab:optimizers}
\end{table}

\subsection{AdaGrad methods}
Adaptive gradient methods (AdaGrad) were introduced in~\cite{duchi2011adaptive}. 
In their full-matrix form, they use the iteration
\begin{equation}
    \btheta_{k+1}
    =
    \btheta_k - \eta_k S_k^{-1/2}\mathbf{g}_k,
    \qquad
    S_k \coloneqq \sum_{i=1}^k \mathbf{g}_i \mathbf{g}_i^\top,
    \qquad
    \mathbf{g}_i \coloneqq \nabla_{\btheta}\L(\btheta_i).
\end{equation}
Here, $S_k$ is the accumulated gradient outer-product matrix.
To stabilize and normalize the preconditioner, it is common to replace this sum by an exponentially weighted moving average, which gives
\begin{equation}
    \btheta_{k+1}
    =
    \btheta_k - \eta_k S_k^{-1/2}\mathbf{g}_k,
    \qquad
    S_k
    \coloneqq
    \frac{1-\beta}{1-\beta^k}
    \sum_{i=1}^k \beta^{\,k-i}\mathbf{g}_i\mathbf{g}_i^\top,
    \qquad
    \mathbf{g}_i \coloneqq \nabla_{\btheta}\L(\btheta_i),
\end{equation}
for some parameter $\beta \in [0,1)$. This corresponds to a full-matrix RMSProp-style preconditioner~\cite{goodfellow2016deep,tieleman2017divide}.
The square root in the preconditioner has recently been controversially discussed in the literature~\cite{lin2024can}, as it weakens the connections to curvature matrices of the method. 

As applying the full preconditioner is computationally prohibitive, a number of fast and scalable approximations have been proposed in the literature. Here, we review three of the most prominent ones. The first is Adam, the workhorse of deep learning, which effectively uses a diagonal approximation of the AdaGrad preconditioner. More recent methods, such as Shampoo and SOAP, exploit the Kronecker structure induced by linear layers in neural-network parameterizations. These methods can therefore be viewed as extensions of Adam that move closer to preconditioning with the full square root of the gradient outer-product matrix. 

\paragraph{Adam:} 
Adam~\cite{kingma2014adam} and its more recent variant AdamW~\cite{loshchilov2018decoupled}
are among the most important optimizers in deep learning.
They can be viewed as AdaGrad-type methods that use exponential moving averages,
a diagonal approximation of the preconditioner, and momentum at the gradient level. This leads to the moment estimates
%
%
%
\begin{equation*}
\begin{split}
    \mathbf{m}_k
    &=
    \frac{1-\beta_1}{1-\beta_1^k}
    \sum_{i=1}^k \beta_1^{\,k-i}\,\mathbf{g}_i,
    \qquad
    \mathbf{v}_k
    =
    \frac{1-\beta_2}{1-\beta_2^k}
    \sum_{i=1}^k \beta_2^{\,k-i}\,
    (\mathbf{g}_i \odot \mathbf{g}_i),
\end{split}
\end{equation*}
where \(\mathbf{g}_k = \nabla_{\btheta}\mathcal{L}(\btheta_k)\), and \(\odot\) denotes the Hadamard
(entrywise) product. 

The AdamW update with learning rate \(\alpha_k\) and weight decay coefficient \(\lambda\) is
\begin{equation}
    \btheta_{k+1}
    =
    \btheta_k
    -
    \alpha_k
    \frac{\mathbf{m}_k}{\sqrt{\mathbf{v}_k}+\varepsilon}
    -
    \alpha_k \lambda \btheta_k ,
\end{equation}
where the square root and division are understood componentwise.
The original Adam update is recovered when \(\lambda=0\).
This update can be written in the generic preconditioned form
\begin{equation}
    \btheta_{k+1}
    =
    \btheta_k
    -
    \alpha_k P_k^{-1} \mathbf{m}_k
    -
    \alpha_k \lambda \btheta_k,
    \qquad
    \text{with }
    P_k = \operatorname{diag}\!\left((\mathbf{v}_k + \varepsilon)^{1/2}\right).
\end{equation}
%

This formulation highlights Adam-type methods as diagonally preconditioned gradient descent schemes, where each parameter is rescaled independently using running gradient statistics.
Since \(\mathbf{g}_k \odot \mathbf{g}_k = \operatorname{diag}(\mathbf{g}_k \mathbf{g}_k^\top)\), it follows that
\begin{equation*}
    \mathbf{v}_k
    =
    \frac{1-\beta_2}{1-\beta_2^k}
    \operatorname{diag}\!\left(
        \sum_{i=1}^k \beta_2^{\,k-i}\, \mathbf{g}_i \mathbf{g}_i^\top
    \right).
\end{equation*}
Hence, we can interpret Adam’s diagonal preconditioner as the diagonal of the exponential moving average of the gradient outer-product matrix $\mathbf{g}_k \mathbf{g}_k^\top$, where $\mathbf{g}_k=\nabla_{\btheta}\L(\btheta_k)$. While the diagonal adaptivity of Adam provides robustness and scalability, it neglects cross-parameter correlations and offers limited correction for intra-layer ill-conditioning.

\paragraph{Shampoo:}
Shampoo~\cite{gupta2018shampoo} generalizes diagonal preconditioning by exploiting the tensor structure of neural network parameters. It relies on two levels of approximation. First, the full preconditioner is approximated by a block-diagonal matrix, where each block corresponds to the parameters of a single layer of the network. Second, each block is approximated using a Kronecker-product structure. In this way, Shampoo captures richer within-layer correlations than diagonal methods such as Adam, while remaining substantially more scalable than a dense preconditioner.

For the \(l\)-th layer, let \(S_k^{\,l}\) denote the diagonal block of \(S_k\) associated with the parameters of that layer. We denote the layer parameters by \(W_l\), and let \(\mathbf{g}_k^{\,l}=\nabla_{W_l}\mathcal{L}(\btheta_k)\) be the corresponding gradient. Let \(G_k^{\,l}\) denote the reshaped form of \(\mathbf{g}_k^{\,l}\) as a matrix, and define
\[
L_k^{\,l}=G_k^{\,l}(G_k^{\,l})^\top,
\qquad
R_k^{\,l}=(G_k^{\,l})^\top G_k^{\,l}.
\]
Shampoo then approximates the layerwise preconditioner using a Kronecker-factored structure built from these two matrices. In particular, the square root of the block \(S_k^{\,l}\) is approximated by
\[
(S_k^{\,l})^{1/2}\approx (R_k^{\,l})^{1/4}\otimes (L_k^{\,l})^{1/4}
\]
and efficiently applied via
\[
    (S^l_k)^{-1/2}\mathbf{g}_k^l
    \approx
    (R_k^{\,l})^{-1/4}\otimes (L_k^{\,l})^{-1/4}\mathbf{g}_k^l
    =
    (L^l_k)^{-1/4} G_k^l (R^l_k)^{-1/4}.
\]
Additionally, an exponential moving average, as in the Adam algorithm is also used.
This yields a scalable approximation of second-order information while capturing within-layer correlations much more effectively than diagonal methods. The quality of this approximation in Shampoo remains an active topic of discussion~\cite{lin2025understanding}.




\paragraph{SOAP:}
SOAP, which stands for \emph{Shampoo with Adam in the Preconditioner Eigenbasis}, was introduced in~\cite{vyas2024soap} and addresses some of the numerical and practical limitations of Shampoo. Like Shampoo, it is based on the Kronecker-factored second-moment matrices \(L_k^{\,l}\) and \(R_k^{\,l}\) for each layer \(l\), but it modifies how these matrices are used in the update.

Rather than directly applying the inverse square roots \((L_k^{\,l})^{-1/2}\) and \((R_k^{\,l})^{-1/2}\) to the gradient, as in Shampoo, SOAP computes eigendecompositions of \(L_k^{\,l}\) and \(R_k^{\,l}\) and performs Adam-style adaptive updates in the corresponding rotated coordinate system. In this basis, the curvature approximation becomes approximately diagonal, which enables elementwise adaptive scaling similar to Adam. The resulting update is then mapped back to the original parameter space; for details, we refer to~\cite[Algorithm~3]{vyas2024soap}.

Thus, whereas Shampoo applies explicit Kronecker inverse-square-root preconditioning, SOAP can be interpreted as running an Adam-like update in a slowly varying eigenbasis determined by the same curvature model. Consequently, SOAP combines structured layerwise curvature information with the robustness of diagonal adaptivity, providing a practical compromise between first-order stability and structured geometric awareness.


\subsection{Self-Scaled quasi-Newton methods}
Quasi-Newton methods are scalable approximations of Newton’s method that use gradient information to construct approximations of second-order curvature. At iteration \(k\), the parameters are updated according to
\begin{equation}\label{eq:qn_1}
    \btheta_{k+1} = \btheta_k - \alpha_k H_k \mathbf{g}_k,
\end{equation}
where
\[
\mathbf{g}_k \coloneqq \nabla_{\btheta}\mathcal{L}(\btheta_k)
\]
denotes the gradient of the loss, and \(H_k \approx \nabla_{\btheta}^2 \mathcal{L}(\btheta_k)^{-1}\) is an approximation of the inverse Hessian.
%
To achieve this, the updates of \(H_k\) are designed to preserve symmetry and positive definiteness while satisfying the secant condition
\begin{equation}
    H_{k+1}\mathbf{y}_k = \mathbf{s}_k,
\end{equation}
where
\[
\mathbf{s}_k \coloneqq \btheta_{k+1} - \btheta_k,
\qquad
\mathbf{y}_k \coloneqq \mathbf{g}_{k+1} - \mathbf{g}_k.
\]
Given the matrix \(H_k\) at iteration \(k\), there are many possible update formulas that produce \(H_{k+1}\) while enforcing the secant condition. A broad class of such updates is given by the \emph{self-scaled Broyden} family~\cite{urban2025unveiling,kiyani2025optimizing,pkpd2025}.

\begin{align}\label{eq:ssbroyden_Hk}
    H_{k+1}
    &=
    \frac{1}{\tau_k}
    \left[
        H_k
        - \frac{H_k \mathbf{y}_k \mathbf{y}_k^\top H_k}{\mathbf{y}_k^\top H_k \mathbf{y}_k}
        + \phi_k \mathbf{v}_k \mathbf{v}_k^\top
    \right]
    +
    \frac{\mathbf{s}_k \mathbf{s}_k^\top}{\mathbf{y}_k^\top \mathbf{s}_k},
    \\
    \mathbf{v}_k
    &=
    \left(\mathbf{y}_k^\top H_k \mathbf{y}_k\right)^{1/2}
    \left[
        \frac{\mathbf{s}_k}{\mathbf{y}_k^\top \mathbf{s}_k}
        -
        \frac{H_k \mathbf{y}_k}{\mathbf{y}_k^\top H_k \mathbf{y}_k}
    \right].
\end{align}
where \((\tau_k,\phi_k)\) are scalar parameters that may vary with the iteration index \(k\). By inverting \eqref{eq:ssbroyden_Hk}, one obtains the corresponding update formula for the matrices \(B_k \equiv H_k^{-1}\), for example by repeated application of the Sherman--Morrison formula:
\begin{align}
    \mathbf{B}_{k+1}
    &=
    \tau_k \left[
        \mathbf{B}_k
        - \frac{\mathbf{B}_k \mathbf{s}_k \mathbf{s}_k^\top \mathbf{B}_k}{\mathbf{s}_k^\top \mathbf{B}_k \mathbf{s}_k}
        + \eta_k \mathbf{w}_k \mathbf{w}_k^\top
    \right]
    + \frac{\mathbf{y}_k \mathbf{y}_k^\top}{\mathbf{y}_k^\top \mathbf{s}_k},
    \\
    \mathbf{w}_k
    &=
    \left(\mathbf{s}_k^\top \mathbf{B}_k \mathbf{s}_k \right)^{1/2}
    \left[
        \frac{\mathbf{y}_k}{\mathbf{y}_k^\top \mathbf{s}_k}
        - \frac{\mathbf{B}_k \mathbf{s}_k}{\mathbf{s}_k^\top \mathbf{B}_k \mathbf{s}_k}
    \right],
    \\
    \eta_k
    &= \frac{1-\phi_k}{1+a_k\phi_k},
    \\
    a_k
    &=
    \frac{\left(\mathbf{y}_k^\top \mathbf{H}_k \mathbf{y}_k \right)
          \left(\mathbf{s}_k^\top \mathbf{B}_k \mathbf{s}_k\right)}
         {\left(\mathbf{y}_k^\top \mathbf{s}_k \right)^2}
    - 1.
\end{align}
If $\tau_k = 1$, then the choice of $\phi_k$ recovers some of the most well-known quasi-Newton methods. In particular, if $(\tau_k,\phi_k)=(1,1)$, then the update reduces to the BFGS algorithm~\cite{broyden1970convergence,fletcher1970BFGS,goldfarb1970BFGS,shanno1970BFGS}, whereas if $(\tau_k,\phi_k)=(1,0)$, it reduces to the DFP algorithm~\cite{davidon1959DFP,fletcher1963DFP}. By contrast, the term \emph{self-scaled} refers to the case $\tau_k \neq 1$. In this setting, the method can be interpreted as first scaling the matrix $\mathbf{H}_k$ by the factor $1/\tau_k$ and then applying the corresponding standard quasi-Newton update, such as BFGS or DFP. For further motivation on the use of self-scaling in optimization, particularly in the context of PINNs, we refer the reader to~\cite{kiyani2025optimizing,albaali2014broydens,urban2025unveiling} and the references therein. A more detailed discussion is provided in Appendix~\ref{sec:appendix_Self_Scaled}.

\subsubsection*{Extensions to batched versions}
One limitation of quasi-Newton optimizers  especially (SSBFGS and SSBroyden) is that they are typically designed for full-batch training and do not naturally support stochastic gradient descent. This makes them less attractive for very large-scale, data-driven training. To address this, we propose a lazy stochastic quasi-Newton approach for problems that do not fit on a single GPU. Our stochastic variants of SSBFGS and SSBroyden are presented in Algorithm~\ref{alg:sQuasiNewton}. Earlier studies have investigated batch training using the L-BFGS optimizer \cite{bollapragada2018progressive}. 
However, these methods are not stable in the PINN setting, because incorporating the PDE residual into the loss increases the nonlinearity of the objective. 
This can destabilize the Hessian approximation due to large gradient variance. To implement any SSBFGS or SSBroyden algorithm in a batch training setting, 
the secant condition is often violated due to noise in the gradient. 
Therefore, we update the Hessian approximation only when the following condition is satisfied:
\begin{align}\label{eq:batch_condition}
y_{k-1}^\top s_{k-1} \ge \tau \|s_{k-1}\|^2,
\end{align}
where $\tau > 0$ is a small threshold. 

The curvature condition in Equation~\ref{eq:batch_condition} ensures that:
\begin{enumerate}
    \item the curvature along the step direction $s_{k-1}$ is positive, i.e.,
    \[
    y_{k-1}^{\top} s_{k-1} > 0;
    \]
    \item the quasi-Newton Hessian (or inverse Hessian) approximation remains positive definite;
    \item the resulting update is numerically stable.
\end{enumerate}

The condition in Equation~\ref{eq:batch_condition} is a modification of the classical curvature condition 
introduced by~\cite{nocedal2006numerical}, who showed that updating the Hessian requires the curvature condition
\[
y_k^\top s_k > 0.
\]
In practice, if this condition is violated, the update is either skipped or damped to ensure numerical stability. 
In the following, we state this condition formally as a theorem, referring to the condition in Equation~\ref{eq:batch_condition}.

\begin{theorem}\label{them:batch_th1eorem}
Let $f : \mathbb{R}^n \to \mathbb{R}$ be twice continuously differentiable and assume that its Hessian is uniformly positive definite, i.e., there exists a constant $\tau > 0$ such that
\begin{equation}
\nabla^2 f(x) \succeq \tau I
\quad \text{for all } x \in \mathbb{R}^n .
\label{eq:uniform_pd}
\end{equation}

For two successive iterates $x_{k-1}, x_k \in \mathbb{R}^n$, define
\begin{equation}
s_{k-1} := x_k - x_{k-1},
\qquad
y_{k-1} := \nabla f(x_k) - \nabla f(x_{k-1}).
\label{eq:sy_def}
\end{equation}

Then there exists a point
\[
\xi = x_{k-1} + \btheta s_{k-1}, \quad \text{for some } \btheta \in (0,1),
\]
such that
\begin{equation}
y_{k-1} = \nabla^2 f(\xi)\, s_{k-1}.
\label{eq:hessian_rep}
\end{equation}

Moreover, the curvature condition

\begin{equation}
y_{k-1}^{\top} s_{k-1} \ge \tau \, \|s_{k-1}\|^2
\label{eq:curvature_condition}
\end{equation}
holds.
\end{theorem}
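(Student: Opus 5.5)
The plan rests on one reduction: restrict $f$ to the segment joining the two iterates and work with a scalar function. For the curvature condition this is all that is needed. For the vector identity \eqref{eq:hessian_rep} it is not, and I expect that part to be the main obstacle.

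\textbf{Setup.} Write $s=s_{k-1}$ and $x=x_{k-1}$, and assume $s\neq 0$. If $s=0$, then $y_{k-1}=0$ and both claims hold trivially for any choice of the parameter in $(0,1)$.

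\textbf{Curvature condition.} Define the scalar function
\[
\varphi(t) := \nabla f(x+ts)^{\top} s, \qquad t\in[0,1].
\]
Because $f\in C^2$, $\varphi$ is $C^1$, and the chain rule gives $\varphi'(t)=s^{\top}\nabla^2 f(x+ts)\,s$. Also $\varphi(1)-\varphi(0)=y_{k-1}^{\top}s$. The one-dimensional mean value theorem then supplies some $\vartheta\in(0,1)$ with
\[
y_{k-1}^{\top}s = s^{\top}\nabla^2 f(\xi)\,s, \qquad \xi=x+\vartheta s.
\]
The uniform bound \eqref{eq:uniform_pd} gives $s^{\top}\nabla^2 f(\xi)s\ge \tau\|s\|^2$, which is exactly \eqref{eq:curvature_condition}. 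In particular $y_{k-1}^\top s_{k-1}>0$, which underpins the positive-definiteness remarks preceding the theorem.

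\textbf{Representation \eqref{eq:hessian_rep}.} The robust route is the fundamental theorem of calculus applied to $t\mapsto \nabla f(x+ts)$, which yields
\[
y_{k-1} = \Bigl(\int_0^1 \nabla^2 f(x+ts)\,dt\Bigr) s =: \bar H\, s .
\]
Here $\bar H$ is symmetric and satisfies $\bar H\succeq \tau I$, since it averages matrices that each dominate $\tau I$. From this identity the curvature condition follows a second time, directly as $s^\top \bar H s\ge\tau\|s\|^2$.

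\textbf{Main obstacle.} Upgrading $\bar H$ to a single pointwise Hessian $\nabla^2 f(\xi)$ is the hard step. The mean value theorem does not hold for vector-valued maps when $n>1$: the $n$ component functions generally require $n$ different intermediate points. My first attempt would be to look for a $\vartheta$ that works for all components at once, for instance by checking whether $\bar H s$ lies on the curve $\{\nabla^2 f(x+ts)s : t\in[0,1]\}$. In general it only lies in the closed convex hull of that curve, not on the curve itself.

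So I would prove \eqref{eq:hessian_rep} in one of two weaker forms. One form uses the averaged Hessian $\bar H$. The other is a scalar version along $s$, namely $y_{k-1}^\top s = s^\top\nabla^2 f(\xi)s$ with a single $\xi$, which is all that the curvature condition uses. I would add a remark that the literal single-$\xi$ form of \eqref{eq:hessian_rep} is valid for $n=1$ but should be read as the integral identity for $n>1$.
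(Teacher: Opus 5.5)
Your proof of \eqref{eq:curvature_condition} is correct, and you stopped exactly where the paper's proof goes wrong. The paper also writes $y_{k-1}=\int_0^1\nabla^2 f(x_{k-1}+ts_{k-1})\,s_{k-1}\,dt$, which is your $\bar H s_{k-1}$. It then invokes the mean value theorem for integrals to replace the integrand by its value at a single $\vartheta\in(0,1)$. That theorem holds only for scalar integrands. For $n\ge 2$ the pointwise identity \eqref{eq:hessian_rep} can actually fail under the stated hypotheses. For example, on $\mathbb{R}^2$ with $e_1=(1,0)^{\top}$, let
\[
f(x)=2\|x\|^2-\frac{1}{4\pi^2}\Bigl(\cos(2\pi x_1)+\sin\bigl(2\pi(x_1+x_2)\bigr)\Bigr),
\qquad
\nabla^2 f(x)=4I+\cos(2\pi x_1)\,e_1e_1^{\top}+\sin\bigl(2\pi(x_1+x_2)\bigr)\begin{pmatrix}1&1\\1&1\end{pmatrix}\succeq I ,
\]
where the bound uses $e_1e_1^{\top}\preceq I$ and that the all-ones matrix is $\preceq 2I$. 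With $x_{k-1}=0$ and $x_k=e_1$ one gets $y_{k-1}=(4,0)^{\top}$. On the other hand, $\nabla^2 f(\vartheta e_1)\,e_1=(4+\cos 2\pi\vartheta+\sin 2\pi\vartheta,\ \sin 2\pi\vartheta)^{\top}$, and equality would force $\sin 2\pi\vartheta=\cos 2\pi\vartheta=0$. So the literal statement is false, and declining to prove it is not a gap in your proposal.

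The comparison is therefore in your favour. The paper obtains the curvature condition by taking the inner product of the false pointwise identity with $s_{k-1}$. You obtain it in one of two sound ways:
\begin{itemize}
\item The scalar mean value theorem for $\varphi(t)=\nabla f(x+ts)^{\top}s$ gives a genuine single $\xi$, but only for the quadratic form $s^{\top}\nabla^2 f(\xi)s$.
\item The bound $\bar H\succeq\tau I$ gives the exact vector identity, but with an averaged rather than a pointwise Hessian.
\end{itemize}
Either suffices for \eqref{eq:curvature_condition}, which is the only part that feeds the batch-update test \eqref{eq:batch_condition}. Your proposed restatement of \eqref{eq:hessian_rep} as $y_{k-1}=\bar H\,s_{k-1}$, or as its projection onto $s_{k-1}$, is the right fix. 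Include an example like the one above so that your remark about $n>1$ is backed by a proof rather than a suspicion.
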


The proof of the theorem is provided in Appendix~\ref{app:ssbfgs_thm}. 
Detailed pseudocode for the stochastic SSBFGS (sSSBFGS) and SSBroyden (sSSBroyden) implementations is given in Appendix~\ref{app:algo_sSSBFGS_sSSBroyden}. 
A computational experiment demonstrating the correctness of Algorithm~\ref{app:algo_sSSBFGS_sSSBroyden} is presented in Appendix~\ref{app:correctness_algorithm}.


\subsection{Natural Gradient Methods}\label{subsec:NGs}
NG methods were introduced in the context of online learning in~\cite{amari1998natural}. 
In the setting of PINNs, they coincide with a Gauss-Newton method applied to the nonlinear least-squares objective, and were shown to improve PINN accuracy by several orders of magnitude in~\cite{muller2023achieving}. 
In their naive form, they require solving a linear system whose size matches the number of trainable parameters. However, efficient implementations have been developed using Kronecker-factored approximations~\cite{dangel2024kronecker}, randomized numerical linear algebra, or direct exploitation of the low-rank structure of the preconditioner~\cite{guzman-cordero2025improving}. We cast the PINN loss function into the standard non-linear least squares form
\begin{equation}
\begin{split}
    \mathcal{L}(\btheta)
    &=
    \frac{1}{2N_\Omega}\sum_{k=1}^{N_\Omega} \bigl(\mathcal{A}u_{\btheta}(x_k)-f(x_k)\bigr)^2
    +
    \frac{1}{2N_\Gamma}\sum_{l=1}^{N_\Gamma} \bigl(u_{\btheta}(y_l)-g(y_l)\bigr)^2 \\
    &=
    \frac12 \|r(\btheta)\|^2,
\end{split}
\end{equation}
where the residual \(r:\mathbb{R}^p \to \mathbb{R}^{N_\Omega+N_\Gamma}\) is defined by
\begin{equation*}
\begin{aligned}
    r(\btheta)
    &=
    \bigl(r_\Omega(\btheta)^\top,\, r_\Gamma(\btheta)^\top\bigr)^\top
    \in \mathbb{R}^{N_\Omega+N_\Gamma}, \\
    \bigl(r_\Omega(\btheta)\bigr)_k
    &=
    \frac{1}{\sqrt{N_\Omega}}\bigl(\mathcal{A}u_{\btheta}(x_k)-f(x_k)\bigr),
    \qquad k=1,\dots,N_\Omega, \\
    \bigl(r_\Gamma(\btheta)\bigr)_l
    &=
    \frac{1}{\sqrt{N_\Gamma}}\bigl(u_{\btheta}(y_l)-g(y_l)\bigr),
    \qquad l=1,\dots,N_\Gamma .
\end{aligned}
\end{equation*}
We denote the Jacobian of the residual at $\btheta$ by $J=J(\btheta)$ or at iteration $k$ by $J_k = J(\btheta_k)$. It is composed of the Jacobians $Jr_\Omega(\btheta) = J_\Omega$ and $Jr_\Gamma(\btheta) = J_\Gamma$. The corresponding Gauss-Newton matrix is denoted by $G=G(\btheta)=J^\top J$ and is given by

\begin{align}\label{eq:gramian}
\begin{split}
    G(\btheta)_{ij} 
    &=
    J^\top_\Omega J_\Omega + J^\top_\Gamma J_\Gamma
    \\
    &=
    \frac{1}{N_\Omega}\sum_{k=1}^{N_\Omega} 
    D\mathcal A(u_\btheta)[ \partial_{\btheta_j}u_\btheta(x_k)]D\mathcal A(u_\btheta)[\partial_{\btheta_i} u_\btheta(x_k)]
    \\
    &+
    \frac{1}{N_\Gamma}\sum_{l=1}^{N_\Gamma}\partial_{\btheta_i} u_\btheta(y_l) \partial_{\btheta_j} u_\btheta(y_l),
\end{split}
\end{align}
where $D\mathcal A$ denotes the linearization of the differential operator $\mathcal A$. 
In practice, we use a Levenberg-Marquardt regularization, meaning we add a scaled identity onto $G$. The regularization strength $\lambda$ is either hand-tuned or adapted using a trust-region strategy. The resulting iteration is
\begin{equation}\label{eq:NG_paramspace}
    \btheta_{k+1} 
    =  
    \btheta_k - \alpha_k [G(\btheta_k) + \lambda_k I]^{-1} \nabla \L(\btheta_k)  
\end{equation}

\paragraph{Jacobi Scaling:}
Numerical stability in Gauss-Newton updates is frequently compromised by the ill-conditioning of the Gramian $K  = J^{\top}J + \lambda I$, which arises from the spectral heterogeneity of the Jacobian $J$ and the disparate scales of PINN residuals \cite{vandersluis1969condition}. 
To equilibrate the system, we define $D = \text{diag}(K)$ and compute the normalized operator $\tilde{K} = D^{-1/2} K D^{-1/2}$, which is known as \emph{Jacobi scaling}. 
This ensures $\text{diag}(\tilde{K}) = \mathbf{1}$, a transformation shown by van der Sluis to be nearly optimal for minimizing the condition number of symmetric positive definite matrices \cite{vandersluis1969condition}. 

\paragraph{Kernel trick:}
The dense matrix $G$ in \eqref{eq:gramian} is of shape $(P,P)$, where $P$ is the number of trainable weights in the neural network ansatz. The cubic cost $\mathcal O(P^3)$ solving a system involving $G$ at every iteration is prohibitive, even for moderate values of $P$. However, the structure $G=J^\top J$ with $J$ being of shape $(N, P)$ reveals that $G$ is of low rank, whenever the batch size $N=N_\Omega + N_\Gamma$ is small. This can be explicitly exploited using the push-through identity 
\begin{equation*}
    (J^\top J + \lambda I)^{-1}J^\top = J^\top(JJ^T + \lambda I)^{-1},
\end{equation*}
where the matrix to be inverted on the right-hand side above is of shape $(N, N)$. This means that the overall complexity is dominated by the matrix-matrix product and in the case $N\leq P$ is given by $\mathcal O(N^2P)$, hence linear in $P$. Note that the regime $N<P$ is typical for PINNs. 
The resulting iteration is
\begin{equation}\label{eq:NG_samplespace}
    \btheta_{k+1} 
    =  
    \btheta_k - \alpha_k J_k^\top[J_kJ_k^\top  + \lambda_k I]^{-1}  r_k. 
\end{equation}
Additionally, randomized numerical linear algebra and matrix-free implementations can be used~\cite{guzman-cordero2025improving}. 
However, we find that the kernel trick, which is exact rather than an approximation, is powerful enough to enable efficient computations for our computational experiments. Alternatively, the kernel trick admits an equivalent primal-dual interpretation that reformulates the linearized least-squares problem within the residual feature space~\cite{jnini2025dual}.

\paragraph{Adding momentum via SPRING:}
The recently proposed SPRING~\cite{goldshlager2024kaczmarz} algorithm accelerates the linear solve at every iteration of the Gauss-Newton method in the sense of Nesterov \cite{goldshlager2025sketch}.
We can rewrite \eqref{eq:NG_samplespace} as a minimization problem
\begin{equation*}
    J_k^\top[J_kJ_k^\top  + \lambda_k I]^{-1}  r_k
    =
    \operatorname{argmin}_\phi\left[ \|J_k\phi - r_k \|^2 + \lambda\|\phi\|^2 \right]
\end{equation*}
and introduce momentum via modifying the above to 
\begin{equation*}
    \phi_k = \operatorname{argmin}_\phi\left[ \|J_k\phi - r_k \|^2 + \lambda\|\phi - \mu\phi_{k-1}\|^2 \right]
\end{equation*}
for a momentum parameter $\mu\in[0,1]$. 
Writing out the optimality conditions yields the iteration
\begin{align}\label{eq:spring_samplespace}
    \begin{split}
    \phi_k &= \mu\phi_{k-1} + J_k^\top[J_kJ_k^\top  + \lambda_k I]^{-1}(r_k - \mu J_k\phi_{k-1})
    \\
    \btheta_{k+1} &= \btheta_k - \alpha_k\phi_k.
    \end{split}
\end{align}
Just like in the case without momentum, we use the push-through identity in the regime $N<P$. 

\paragraph{Functional optimization:} 
The use of the Gauss-Newton method was popularized under the name \emph{energy natural gradient} in \cite{muller2023achieving} for PINNs and variational problems.
The reason for this is that it the Gauss-Newton method is indeed a special case of a much larger family of optimizers, which are obtained as finite-dimensional discretizations of infinite-dimensional optimization algorithms~\cite{muller2024position}.   
From this general infite-dimensional viewpoint efficient algorithms have been developed for computational fluid dynamics~\cite{jnini2025gauss}, quantum many-body problems~\cite{armegioiu2025functional}. 

%

Although in the context of PINNs, NGs agree with the Gauss-Newton method, deriving from a function space perspective, has the benefit that it allows us to understand the function space dynamics of the optimizer. 
Indeed, it can be shown that NG offers a locally optimal approximation of the infinite-dimensional algorithm and hence mimic the Gauss-Newton method in function space~\cite{muller2023achieving,jnini2025dual}. 
Further, NG methods are up to discretization errors invariant under general reparametrizations~\cite{van2023invariance}, where Newton's method is affinely invariant and AdaGrad type methods are merely scale and rotation invariant. 

\section{Landscape Visualization} 
\label{sec:Landscape_Visualization}

We visualize the qualititative differences of the different families of optimizers in two ways. 
Firstly, we visualize the loss landscape in parameter space and how the optimizers traverse this landscape. 
Secondly, we visualize the updates of the optimizers in function space and contrast them to the error of the current predictions.

\subsection{Loss landscape in parameter space}
\begin{figure}[h!]
\begin{tikzpicture}
\matrix (m) [
    matrix of nodes,
    nodes={anchor=center},
    column sep=0.5cm,
    row sep=0.5cm
]{
    & Random projection &  First pair of singular vectors & Second pair of singular vectors \\
    \rotatebox{90}{1D Euler} 
        & \includegraphics[trim={12cm 0cm 12cm 0cm}, clip, width=0.3\linewidth]{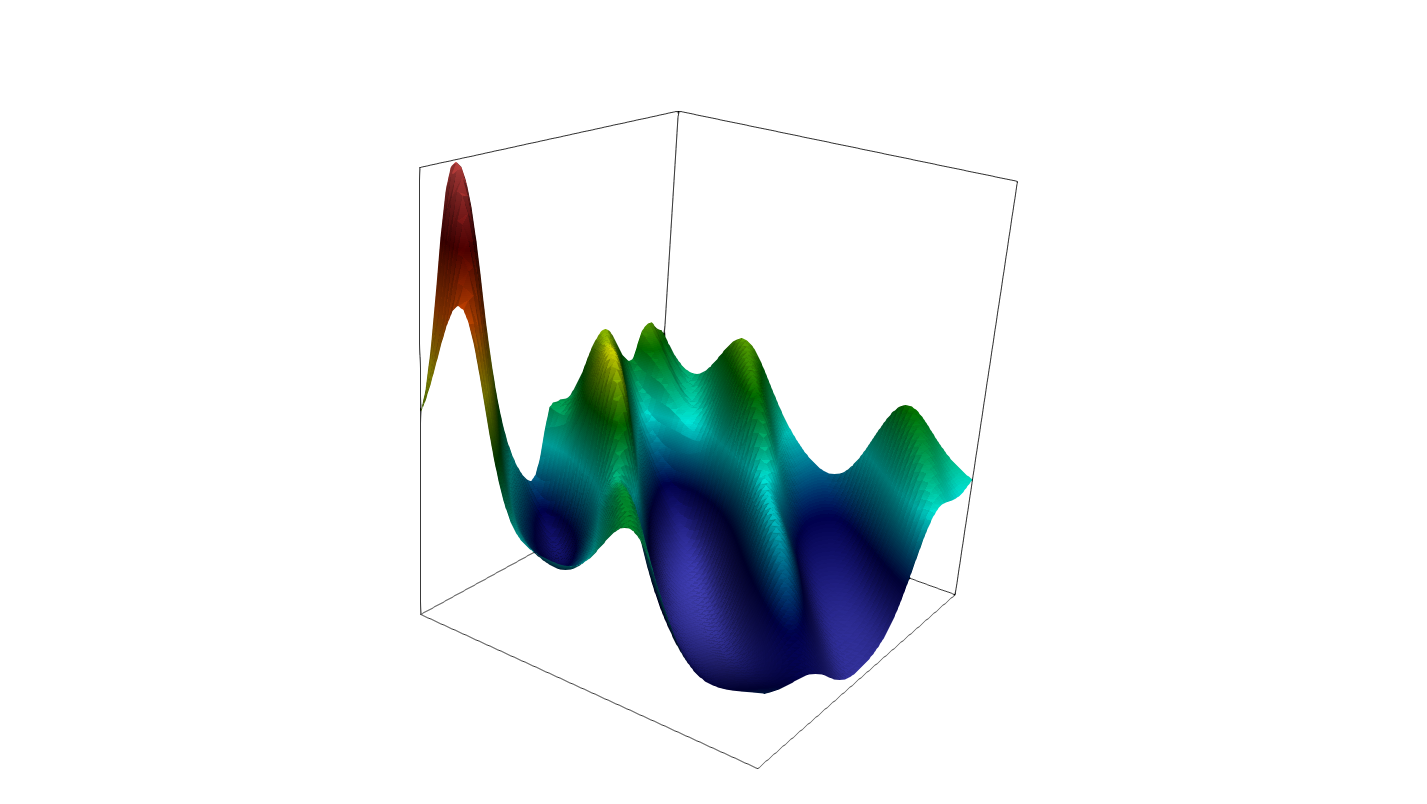}
        & \includegraphics[trim={8cm 0cm 8cm 0cm}, clip, width=0.3\linewidth]{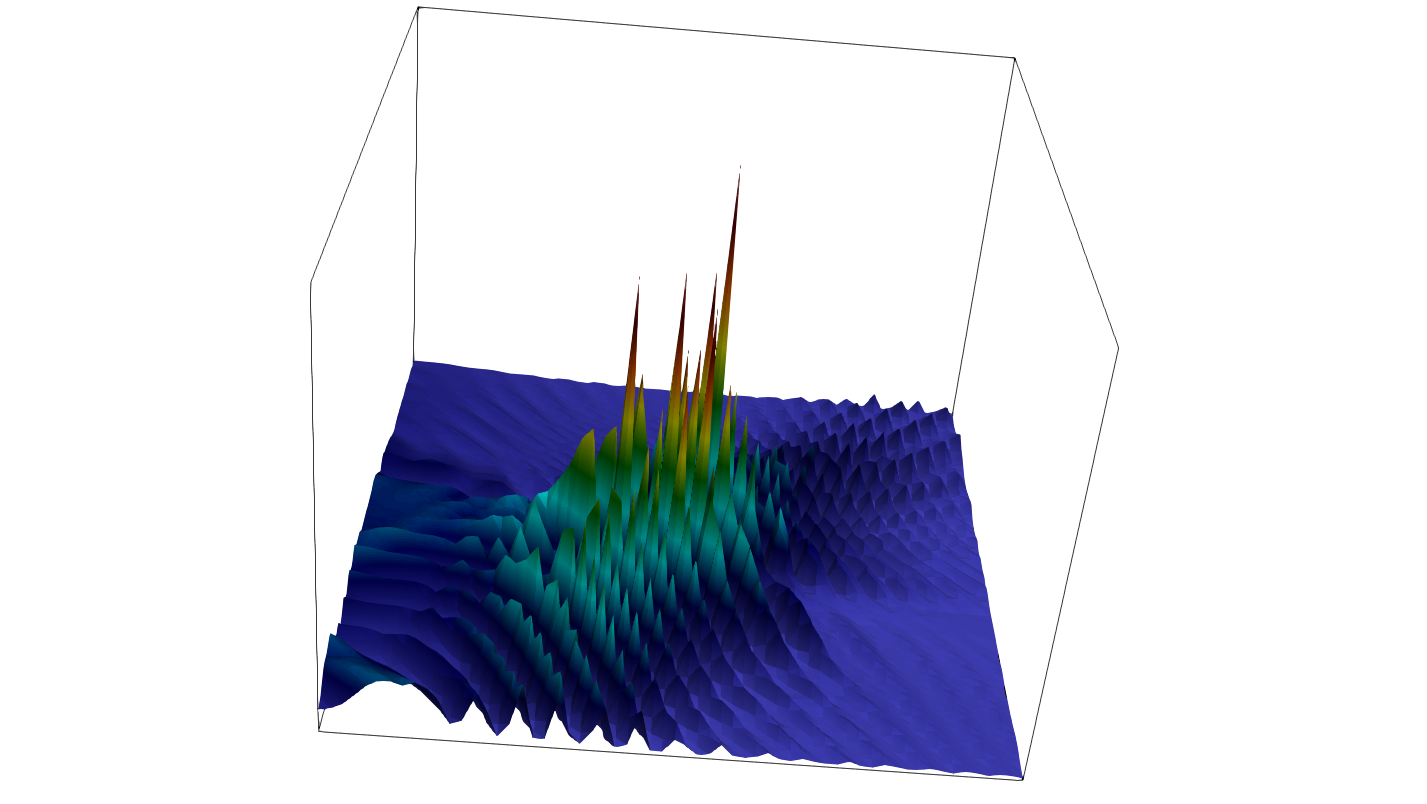}
        & \includegraphics[trim={6cm 0cm 6cm 0cm}, clip, width=0.3\linewidth]{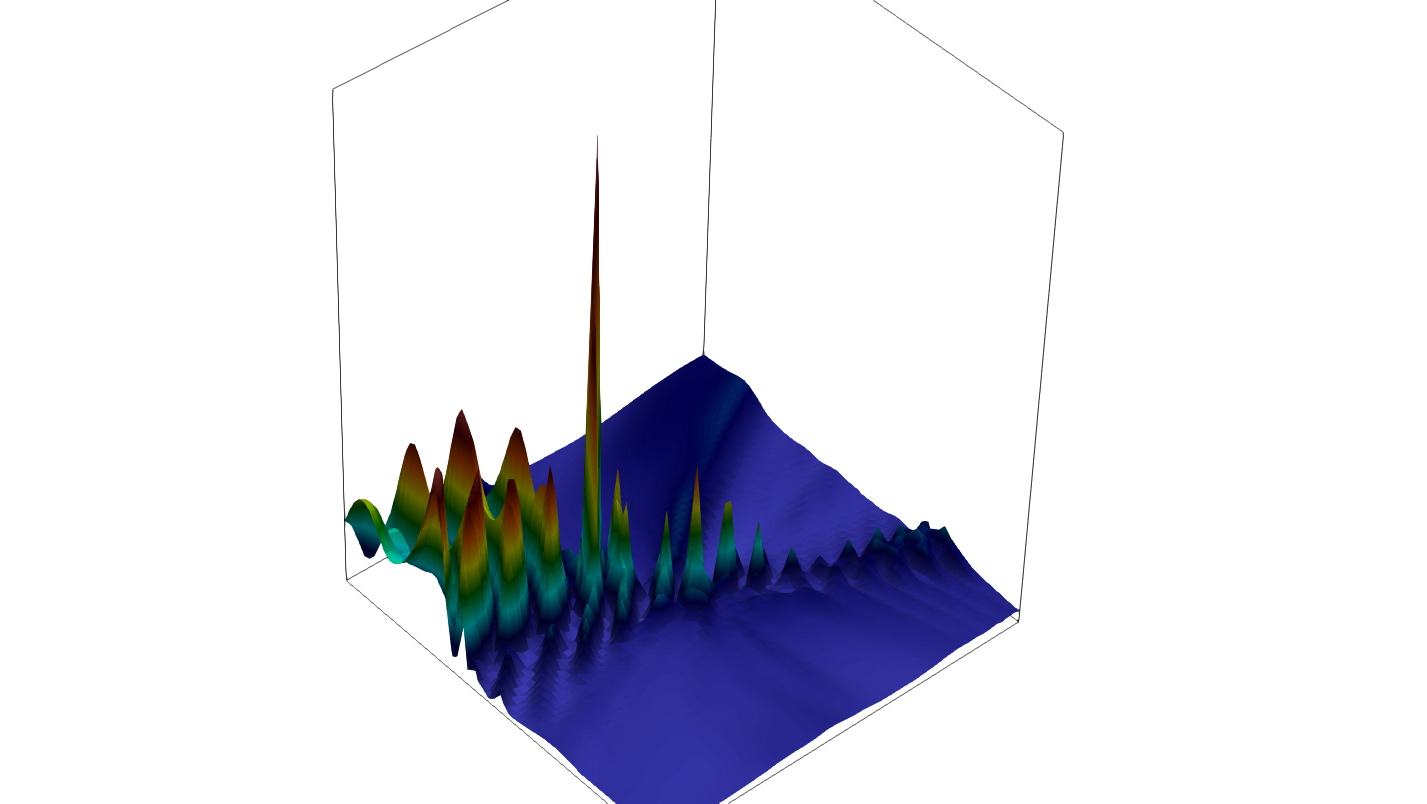} \\
    \rotatebox{90}{2D Stokes}
        & \includegraphics[trim={14cm 0cm 13cm 0cm}, clip, width=0.3\linewidth]{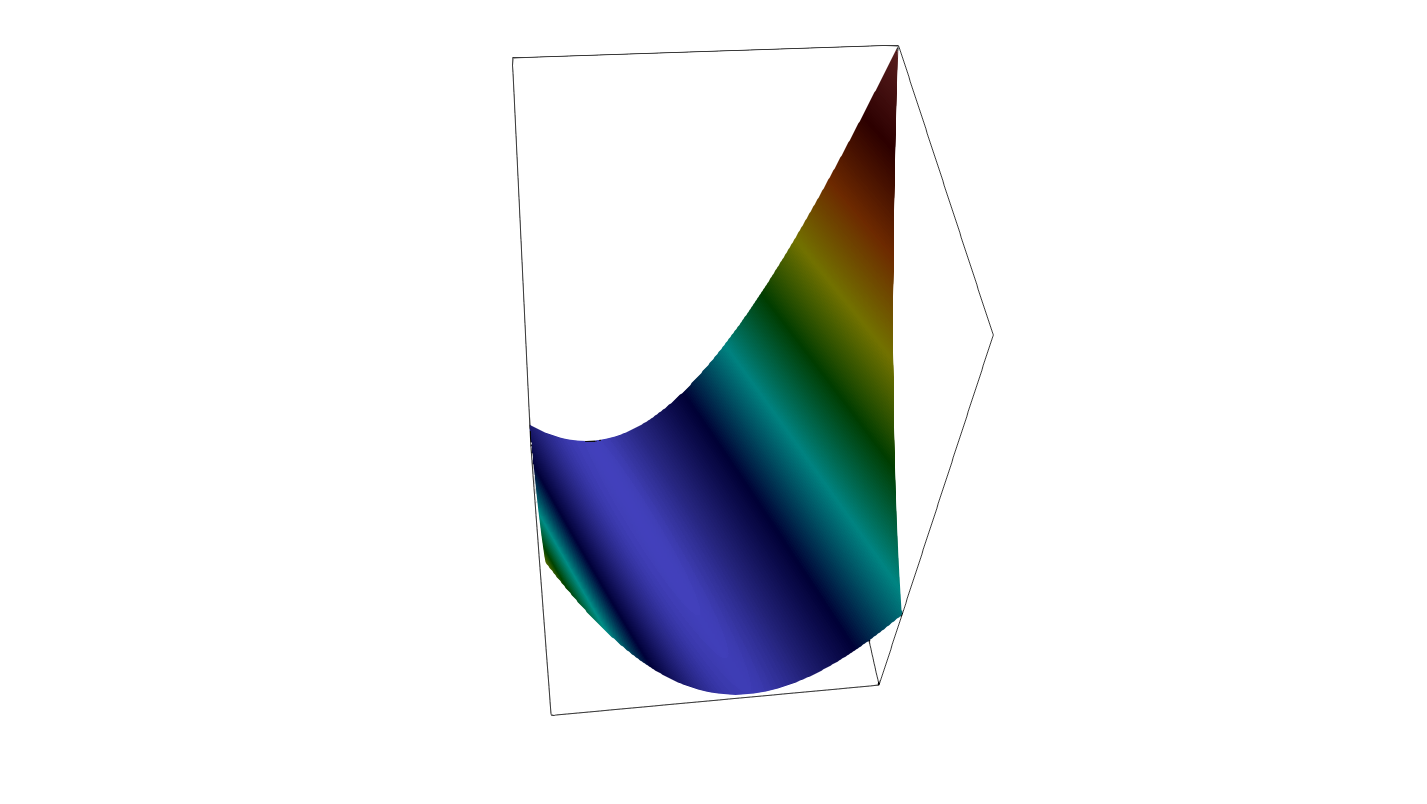}
        & \includegraphics[trim={12cm 0cm 12cm 2cm}, clip, width=0.3\linewidth]{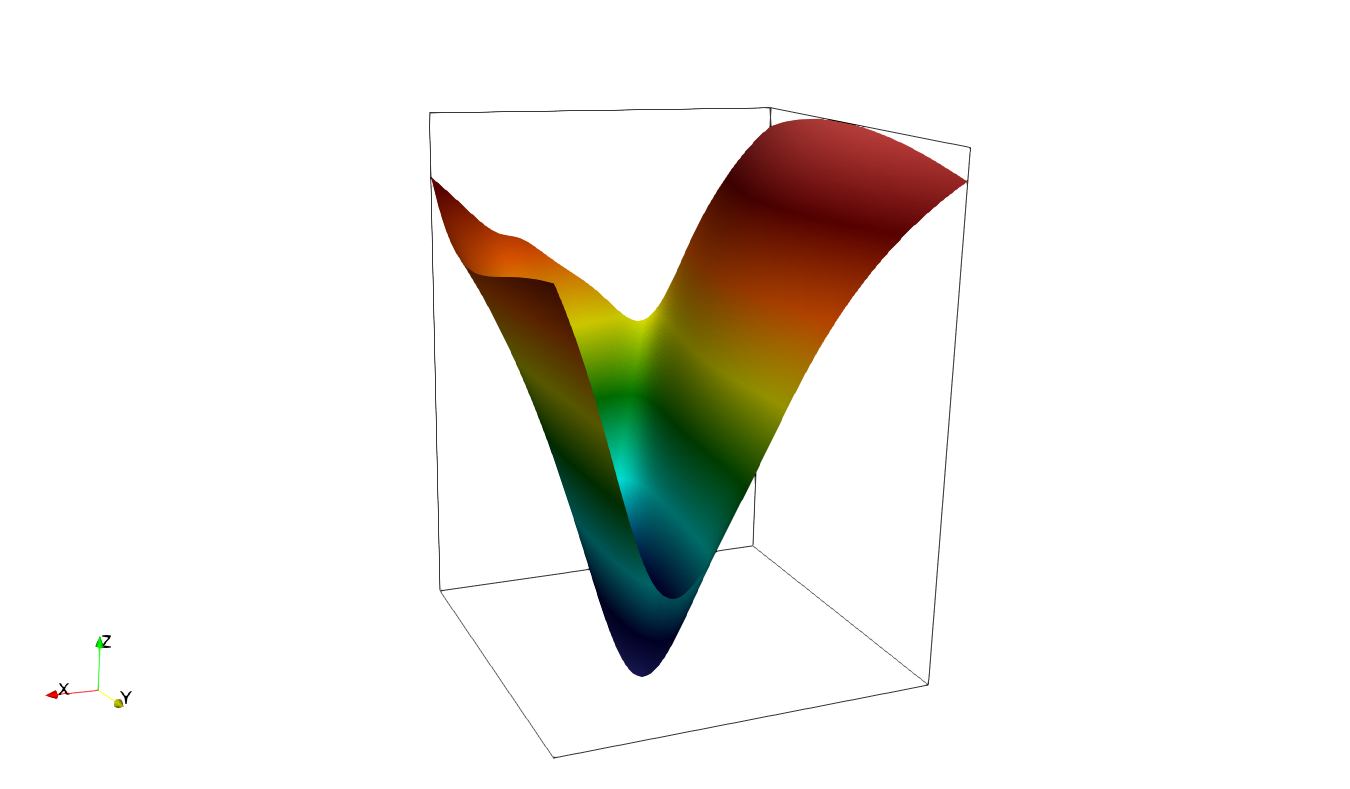}
        & \includegraphics[trim={14cm 4cm 14cm 4cm}, clip, width=0.3\linewidth]{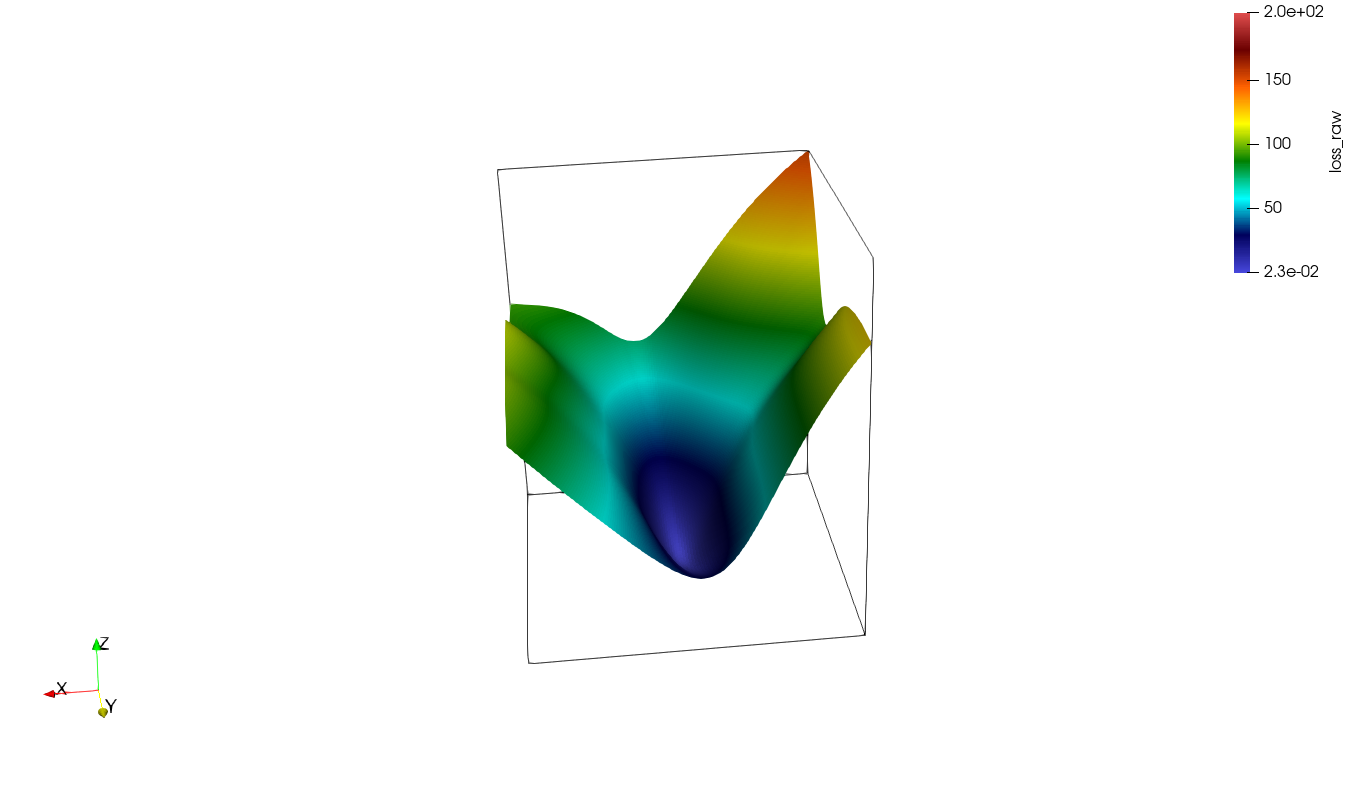} \\
};
\end{tikzpicture}
\caption{Loss landscapes for two representative PDEs are shown along random directions and projections onto pairs of singular vectors. (a) Top row: 1D Euler equation~\ref{eq:a_euler} (Sod problem), projected along orthogonal random directions, leading singular vectors, and next dominant pair (left to right). Bottom row: Stokes equation~\ref{eq:Stokes}, following the same projection scheme. Axes correspond to $(\alpha, \beta)$.}
\label{fig:loss_landscape}
\end{figure}

To analyze the geomtery of the optimization problem, we visualize a two-dimensional slice of the loss landscape around a reference parameter vector $\btheta_0$ for which we follow the methods proposed in~\cite{li2018visualizing}. 
Let $\mathcal{L}(\btheta)$ denote the total loss of the model. 
Since the parameter is space is high dimensional, we project the loss function onto a two-dimensional subspace spanned by two direction $v_1$ and $v_2$. 
Specifically, we evaluate the loss at the point of form $\mathcal{L}(\btheta_0 + \alpha v_1 + \beta v_2)$ where $(\alpha, \beta)$ are scalar cordinates defining the grid over the plane. 
To obtain the informative direction, we compue the top right-singular vector of the matrix per-sample gradients. let $l_i(\btheta)$ denote the loss contribution from the $i-$th training sample. 
We define the gradient matrix as follows
\[
\bm{G} =
\begin{bmatrix}
\nabla_\btheta \ell_1(\btheta_0)^\top \\
\nabla_\btheta \ell_2(\btheta_0)^\top \\
\vdots \\
\nabla_\btheta \ell_N(\btheta_0)^\top
\end{bmatrix}
\in \mathbb{R}^{N \times P} ,
\]
where $P$ is the number of parameters and $N$ is the number of samples. 
The singular value decomposition of $G$ is computed as 
\[
    \bm{G}=\bm{U}\bm{\Sigma}\bm{V}^\top
\]
The first two columns of $V$, denoted $v_1$ and $v_2$, correspond to the principal directions capturing the largest variation in the gradient space \cite{li2018visualizing}. 
These directions define the plane used to visualize the loss landscape. 
We evaluate the loss on a grid of $(\alpha,\beta)$ values and generate contour plots to study the curvature and anisotropy of the optimization landscape around $\btheta_0$.
In Figure~\ref{fig:loss_landscape}, loss landscapes for two representative PDE problems are visualized along random directions and along projections onto selected pairs of singular vectors. The top row illustrates the loss landscape for the 1D Euler equation~\ref{eq:a_euler} corresponding to the Sod shock tube problem. The panels show projections along: (i) two orthogonal random directions sampled from a multivariate normal distribution, (ii) the leading pair of dominant singular vectors, and (iii) the next pair of dominant singular vectors, progressing from left to right. The bottom row presents analogous plots for the elliptic Stokes equation~\ref{eq:Stokes}, following the same projection scheme as used for the Euler equation in the top row. In all figures, the horizontal and vertical axes correspond to the coordinates $(\alpha, \beta)$, representing the projection coefficients along the chosen directions.The Euler problem exhibits a relatively rough loss landscape due to the presence of shocks. In contrast, the Stokes problem, which is elliptic in nature, results in a much smoother loss landscape.

\subsection{Loss landscape in function space}

\begin{figure}[h!]
\begin{tikzpicture}
\node (img1) {
    \includegraphics[trim={0 0 0 5cm},clip,width=\textwidth]{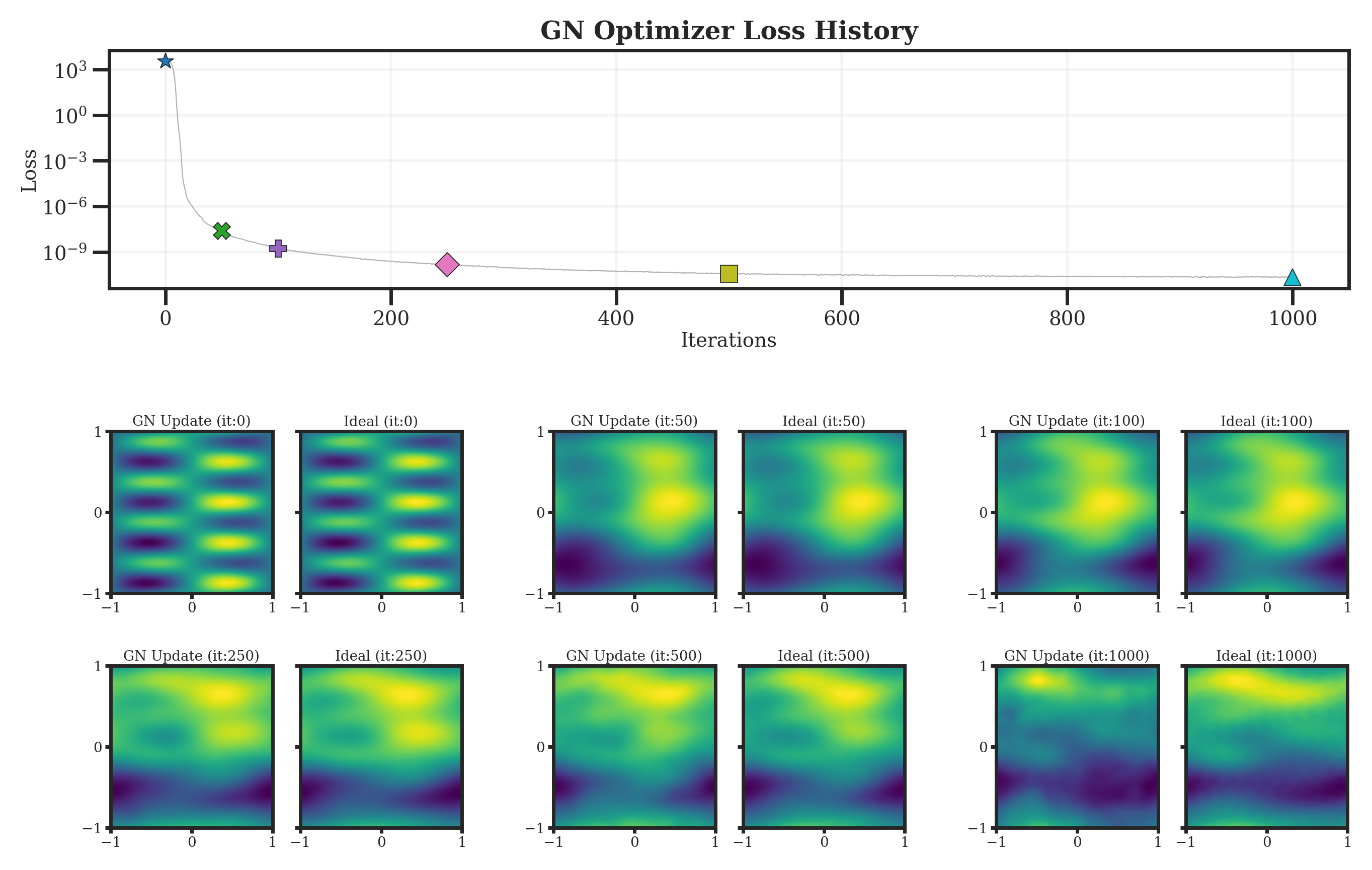}
};
\node[anchor=west] at (img1.west) {\rotatebox{90}{NG}};
\node (img2) [below=0.2cm of img1] {
    \includegraphics[trim={0 0 0 5cm},clip,width=\textwidth]{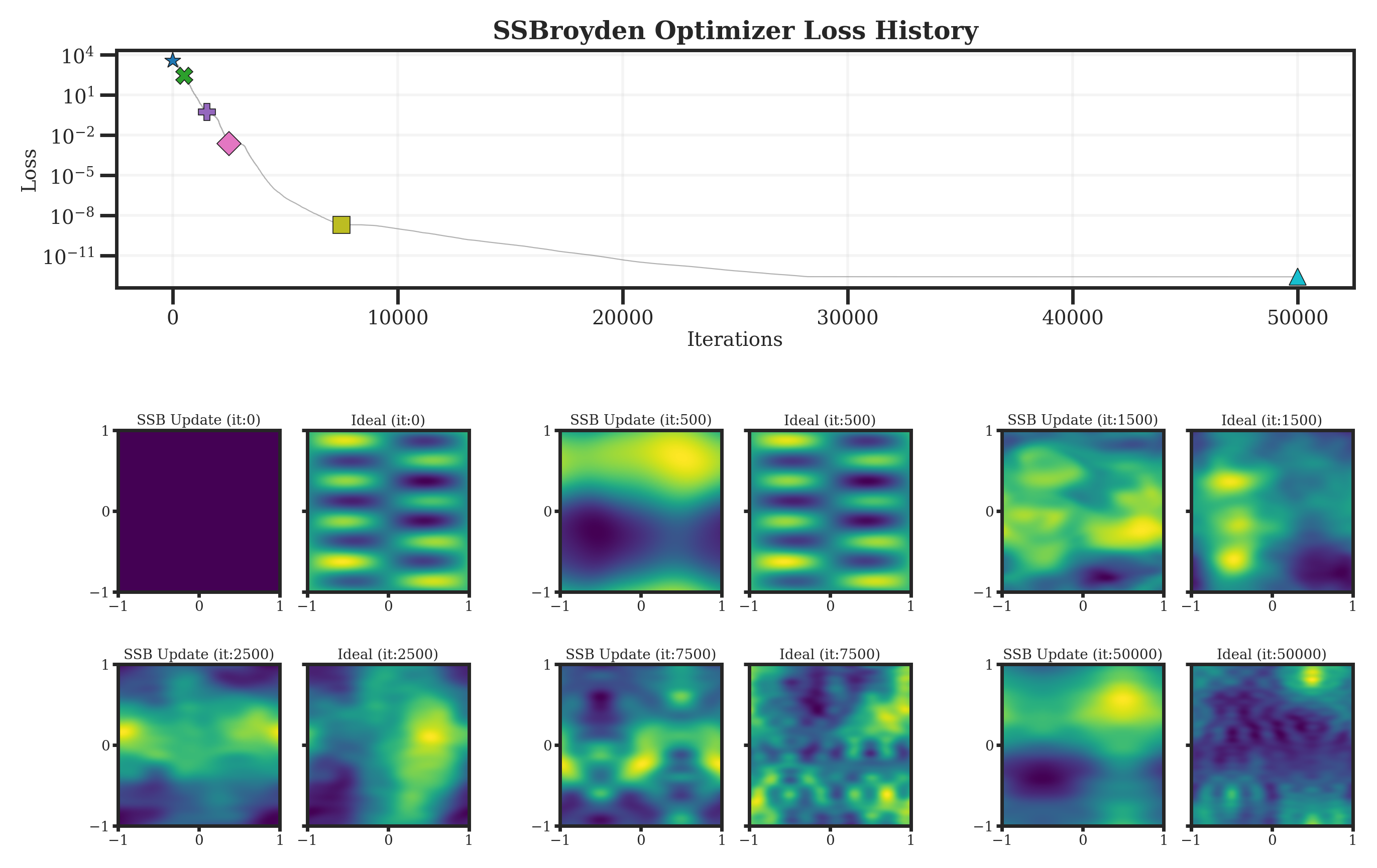}
};
\node[anchor=west] at (img2.west) {\rotatebox{90}{SSBroyden}};
\node (img3) [below=0.2cm of img2] {
    \includegraphics[trim={0 0 0 5cm},clip,width=\textwidth]{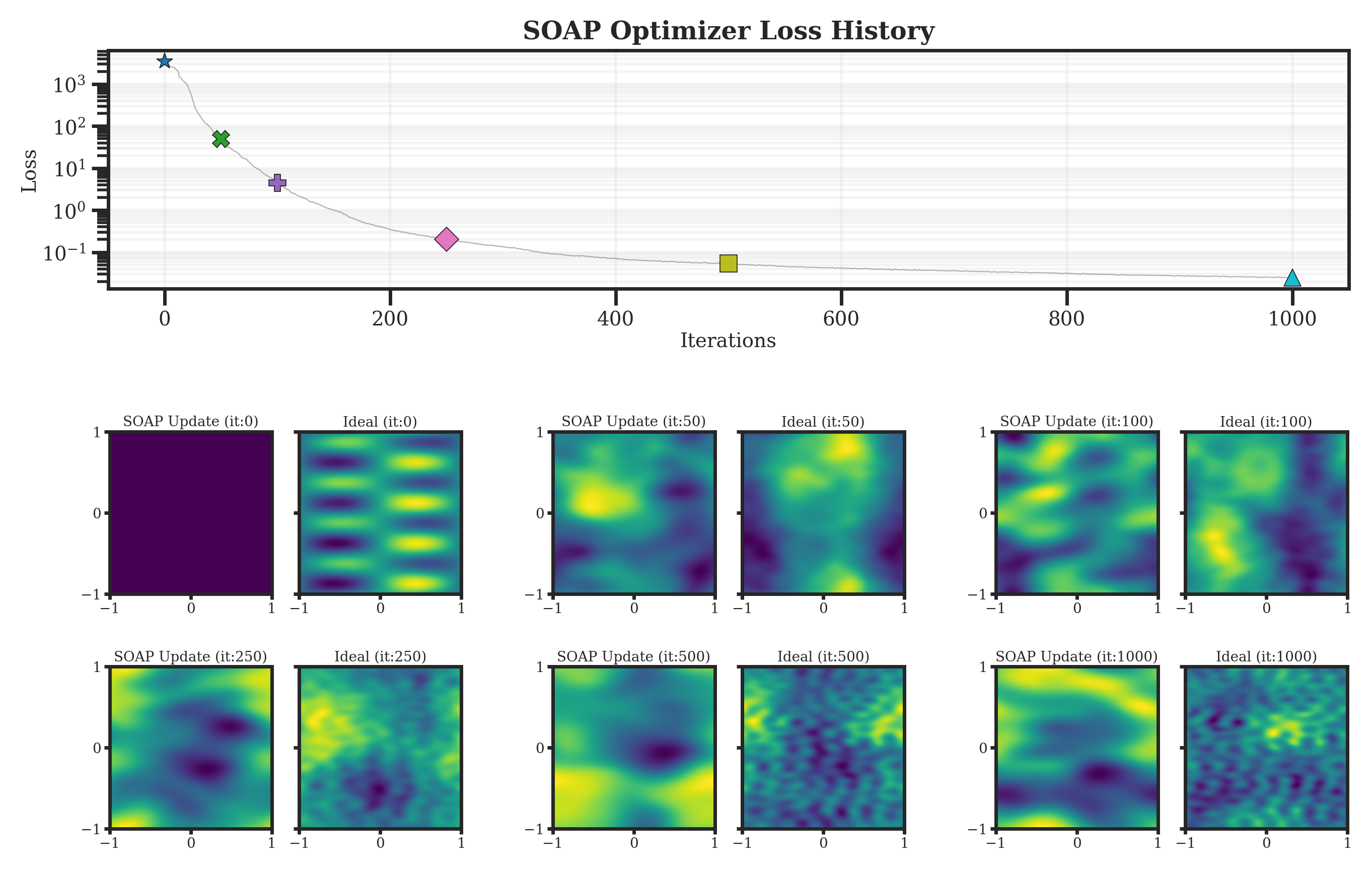}
};
\node[anchor=west] at (img3.west) {\rotatebox{90}{SOAP}};
\end{tikzpicture}
\caption{Visualization of the NG (top), SSBroyden (middle), and SOAP (bottom) training dynamics for the 2D Helmholtz problem with $a_1=1$, $a_2=4$, and $k=1$;
shown are side-by-side snapshot comparisons of the updates of the optimizers (left) and the current error of the prediction (right).}
\label{fig:pushforwards}
\end{figure}

All optimizers that we consider iteratively produce steps in parameter space $\Theta = \mathbb R^p$ given some initial parameters $\btheta_0$ and a rule to produce the direction $\delta_k\in \mathbb R^p$

\begin{equation*}
    \btheta_{k+1} = \btheta_k + \delta_k.
\end{equation*}
For instance, for the NG/Gauss-Newton optimizer this step is $\delta_k = (J_k^\top J_k + \lambda I)^{-1}J_k^\top r_k = J_k^\top (J_k J_k^\top + \lambda I)^{-1}r_k$, with appropriate adaptions for more advanced schemes like SPRING. Given a neural network architecture, which means we fixate a parametrization map 
\begin{equation}\label{eq:parametrization}
    P\colon\Theta \to \mathcal H, \quad \btheta \mapsto u_\btheta
\end{equation}
that takes parameters $\btheta$ and maps them to a neural network function $u_\btheta$ in an appropriate function space $\mathcal H$, we can visualize the \emph{pushforward} of $P$. The pushforward of a direction $\delta$ visualizes the update direction in \emph{function space}, an object that is often much more easy to interpret than $\delta$. It is given by 
\begin{equation}\label{eq:pushforward}
    DP(\btheta)[\delta](x) = \sum_{i=1}^p\delta_i\partial_{\btheta_i}u_\btheta(x).
\end{equation}
We compare the update directions for the various considered optimizers in Figure~\ref{fig:pushforwards}. It is clearly visible that the updates of the NG correct the error much better compared to the other methods; this aligns with its derivation from an infinite-dimensional algorithm, see~\ref{subsec:NGs} and \cite{muller2023achieving, muller2024position}.

\FloatBarrier

\section{Computational Experiments}
\label{sec:PINNs_Benchmarks}

This section presents a set of challenging benchmark problems for PINNs, chosen to assess optimizer robustness under stiff, oscillatory, multiscale, and shock-dominated loss landscapes. The benchmark suite includes the Helmholtz equation in two and three dimensions with varying wave numbers, the inviscid Burgers equation, the two-dimensional viscous Burgers equation, the compressible Euler equations, the two-dimensional Stokes equations, and a pharmacokinetic-pharmacodynamic (PK--PD) system, as summarized in Table~\ref{tab:placeholder}. These problems span regimes in which PINN training is known to be difficult due to strong nonlinearities, high-frequency oscillations, steep gradients, shocks, multiscale structure, and sensitivity to sampling and hyperparameter choices. 

As shown in our previous work~\cite{kiyani2025optimizing}, double precision consistently outperforms single precision for these problems. Therefore, unless otherwise stated, all results reported in this section are obtained using double precision. Readers interested in a detailed comparison between single- and double-precision performance are referred to~\cite{kiyani2025optimizing}.

It is worth noting that, in comparing the performance of the different optimizers across these benchmarks, we use several error metrics, including the relative $L^1$, relative $L^2$, and relative $L^\infty$ errors, depending on the problem and the available reference solution. For completeness, we define these metrics below. 

\paragraph{Metrics for Assessing Accuracy:}
\label{sec:Metrics}

Let $\{u_i^{\text{pred}}\}_{i=1}^N$ denote the predicted solution and $\{u_i^{\text{ref}}\}_{i=1}^N$ denote a reference solution evaluated at the same set of $N$ discrete points, where the reference may be an exact solution when available, or a high-resolution numerical solution otherwise.

The $L^1$ error is defined as
\begin{equation}
  \|u^{\text{pred}} - u^{\text{ref}}\|_1
  =
  \sum_{i=1}^N \bigl|u_i^{\text{pred}} - u_i^{\text{ref}}\bigr|,
\end{equation}
and the relative $L^1$ error is given by
\begin{equation}
  \mathrm{rel}_{L^1}
  =
  \frac{\|u^{\text{pred}} - u^{\text{ref}}\|_1}
       {\|u^{\text{ref}}\|_1}.
\end{equation}

The $L^2$ error is defined as
\begin{equation}
  \|u^{\text{pred}} - u^{\text{ref}}\|_2
  =
  \left( \sum_{i=1}^N \bigl(u_i^{\text{pred}} - u_i^{\text{ref}}\bigr)^2 \right)^{1/2},
\end{equation}
and the relative $L^2$ error is given by
\begin{equation}
  \mathrm{rel}_{L^2}
  =
  \frac{\|u^{\text{pred}} - u^{\text{ref}}\|_2}
       {\|u^{\text{ref}}\|_2}.
\end{equation}

The $L^\infty$ error is defined as
\begin{equation}
  \|u^{\text{pred}} - u^{\text{ref}}\|_\infty
  =
  \max_{1 \le i \le N} \bigl|u_i^{\text{pred}} - u_i^{\text{ref}}\bigr|,
\end{equation}
and the relative $L^\infty$ error is given by
\begin{equation}
  \mathrm{rel}_{L^\infty}
  =
  \frac{\|u^{\text{pred}} - u^{\text{ref}}\|_\infty}
       {\|u^{\text{ref}}\|_\infty}.
\end{equation}

The $L^\infty$ error is defined as
\begin{equation}\label{eq:infty}
 \|u^{\text{pred}} - u^{\text{ref}}\|_\infty
  = \max_{1 \le i \le N} \bigl|u_i^{\text{pred}} - u_i^{\text{ref}}\bigr|,
\end{equation}
and the relative $L^\infty$ error is given by
\begin{equation}\label{eq:rel_infty}
  \mathrm{rel}_{\infty}
  = \frac{\|u^{\text{pred}} - u^{\text{ref}}\|_\infty}
         {\|u^{\text{ref}}\|_\infty}
  = \frac{\max_{1 \le i \le N} \bigl|u_i^{\text{pred}} - u_i^{\text{ref}}\bigr|}
         {\max_{1 \le i \le N} \bigl|u_i^{\text{ref}}\bigr|}.
\end{equation}

\begin{table}[h!]
    \centering
    \begin{tabular}{|c|c|}
         \hline
         \textbf{Problem} & \textbf{Challenges}  \\ \hline 
         \ref{sec:Helmholtz_Problem} 2D Helmholtz  & Oscillatory solution \\ \hline 
        \ref{sec:Inviscid_Burgers_Equation} (1+1)D Inviscid Burgers & Shock waves \\ \hline
        \ref{sec:2D viscous Burgers Equation} (2+1)D viscous Burgers & Steep gradients / parabolic \\ \hline
        \ref{sec:euler_equation} (1+1)D Euler with Roe Flux &  shock and contact waves but dissipative  \\ \hline
        \ref{sec:Euler_equations_HLLC} (1+1)D Euler with HLLC Flux & Entropy solution, stability, and non-dissipative    \\ \hline
        \ref{sec:Stokes_Flow} 2D Stokes & Multiscale solution  \\ \hline
        \ref{sec:stiff_odes} Pharmacokinetics and -dynamics &  Stiffness and discontinuity \\ \hline
    \end{tabular}
    \caption{Overview over the computational experiments used in this study and the main sources of training difficulty for PINNs in each case.} 
    \label{tab:placeholder}
\end{table}

\subsection{2D and 3D Helmholtz equation}\label{sec:Helmholtz_Problem}
Consider the 2D Helmholtz equation on the square domain $\Omega = [x_0 = -1,x_f = 1]^2 \subset \mathbb{R}^2$
\begin{equation}
\label{eq:helmholtz}
\frac{\partial^2 u}{\partial x^2} \;+\; \frac{\partial^2 u}{\partial y^2} \;+\; k^2 u \;-\; q(x,y) \;=\; 0,
\end{equation}
where $q(x, y)$ is a forcing term
\begin{equation}
\begin{aligned}
q(x,y)
&= \Bigl(k^2 - (a_1\pi)^2 - (a_2\pi)^2\Bigr)\sin(a_1\pi x)\sin(a_2\pi y).
\end{aligned}
\end{equation}
By construction, the exact solution is
\begin{equation}
\label{eq:helmholtz_exact}
u(x,y) \;=\; \sin(a_1\pi x)\,\sin(a_2\pi y),
\end{equation}
which we use to validate model accuracy.
Periodic boundary conditions are imposed in both spatial directions. 
Simulations are performed for parameter pairs $(a_1, a_2) = (1,4), (6,6), (10,10)$ 
and wavenumbers $k = 1, 10,$ and $100$.
To enhance the representation of oscillatory solutions, Fourier feature extensions are employed for all cases in Helmholtz problem. The input to the neural network, denoted by $\mathbf{X}{\text{input}}$, is defined as
\begin{equation}\label{Fourier_modes}
\mathbf{X}{\text{input}} =
\left(
\cos\left(\frac{2\pi m x}{L_x}\right),
\sin\left(\frac{2\pi m x}{L_x}\right)
\right),
\end{equation}
where
\begin{equation}
L_x = x_f - x_0
\end{equation}
represents the spatial domain length and $m$ is the number of Fourier modes. 

\begin{figure}[t]
  \centering
  \includegraphics[width=\linewidth, height=0.99\textheight, keepaspectratio]{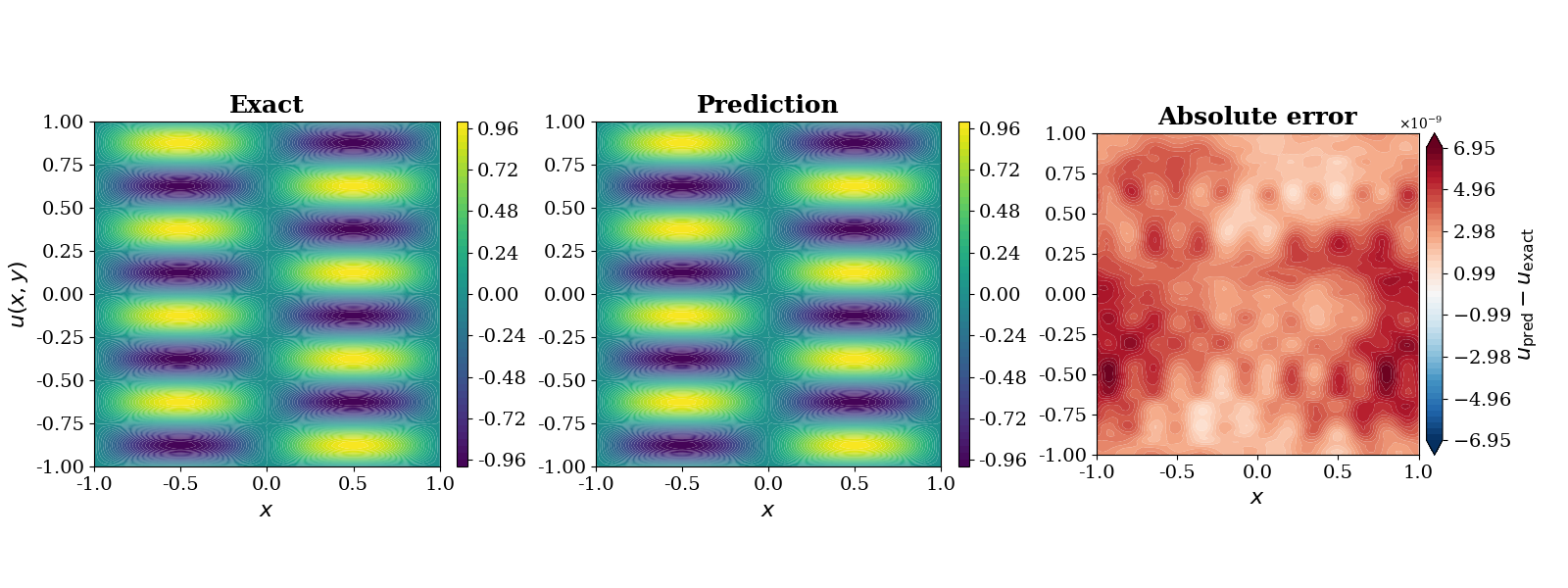}
\caption{\textbf{2D Helmholtz problem with $a_1=1$, $a_2=4$, and $k=1$.}
SSBroyden prediction, reference solution given by Equation~\eqref{eq:helmholtz_exact}, and the pointwise absolute error
$|u_{\mathrm{pred}} - u_{\mathrm{ref}}|$.}
  \label{fig:helmholtz1,4_solution}
\end{figure}

Figure~\ref{fig:helmholtz1,4_solution} compares the SSBroyden prediction with the reference solution~\eqref{eq:helmholtz_exact} for the 2D Helmholtz problem with $a_1=1$, $a_2=4$, and $k=1$. It also reports the corresponding pointwise absolute error, $|u_{\mathrm{pred}} - u_{\mathrm{ref}}|$. Figure~\ref{fig:Helmholtz_loss_relL2_1x4} further compares the training loss histories of BFGS, SOAP, SSBFGS, and SSBroyden, together with NG and SPRING, in the left panels, while the corresponding relative $L^2$ errors are shown in the right panels. Table~\ref{tab:Helmholtz_witha_1=1a_2=4} summarizes the $L^\infty$ and relative $L^2$ errors, as well as the training times.
In all reported cases, the Fourier feature mapping in~\ref{Fourier_modes} uses mode $m=1$, resulting in four input features, namely two sine/cosine components for each spatial coordinate. The table summarizes the performance of BFGS, SSBFGS, and SSBroyden in both the PyTorch and JAX implementations, as well as NG and SPRING. It shows that the SciPy-based implementations of SSBroyden and SSBFGS are as accurate as their Optax-based counterparts. The longer training times of the SciPy implementations are mainly due to the fact that they are executed on the CPU, whereas Optax, NG, and SPRING are run on the GPU.
The SSBroyden, BFGS, and SSBFGS (PyTorch) optimizers use $15{,}000$ interior collocation points, which are resampled every 500 epochs using the RAD adaptive sampling strategy. Cases~1--4 correspond to BFGS, SSBroyden, SSBFGS, and the PyTorch implementation of SSBFGS, respectively. All cases employ the same PINN architecture, consisting of four hidden layers with 30 neurons per layer.

Overall, the results demonstrate accuracy comparable to the JAX implementations. However, the PyTorch runs rely on SciPy routines executed on the CPU, which leads to substantially longer training times than the other methods. This comparison is included for readers interested in SciPy-based implementations. The SOAP optimizer is run with Adam-like momentum parameters $(\beta_1=\beta_2=0.95)$, a learning rate of $\mathrm{lr}=3\times10^{-3}$, a weight decay of $10^{-4}$, and a preconditioner update every 10 iterations.
\begin{figure}
    \centering
    \begin{minipage}[b]{\linewidth}
        \centering
        \includegraphics[width=1\linewidth]{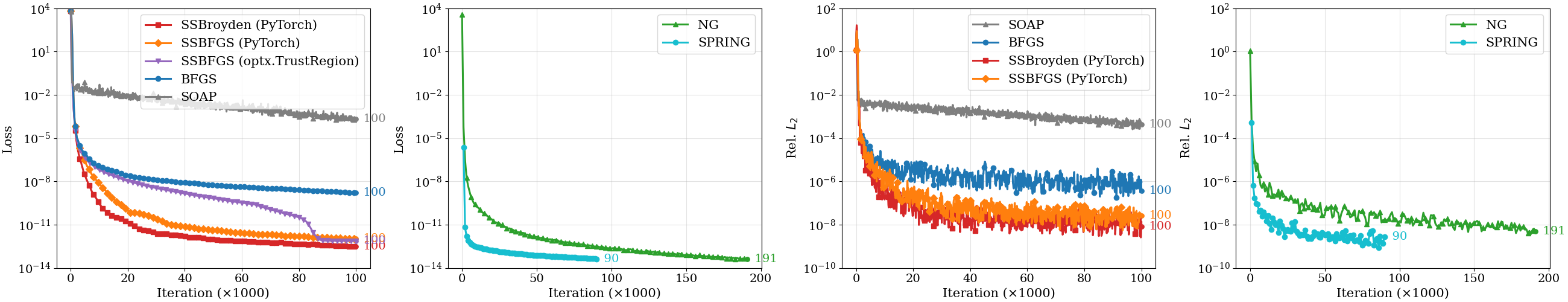}
        \caption{\textbf{2D Helmholtz problem with $a_1=1$, $a_2=4$, and $k=1$.}
        Training loss and relative $L^2$ error, comparing BFGS, SSBFGS, SSBroyden, NG, and SPRING. The SSBroyden (PyTorch) and SSBFGS (PyTorch) results are obtained using SciPy-based implementations executed on the CPU, while SSBFGS (JAX) is implemented using Optax in JAX.}
        \label{fig:Helmholtz_loss_relL2_1x4}
    \end{minipage}
    \vspace{1em}
    \begin{minipage}[b]{\linewidth}
      \centering
      \rowcolors{2}{cyan!10}{white}
      \scalebox{0.82}{
      \begin{tabular}{|l|r|l|r|r|r|r|}
        \hline
        \rowcolor{cyan!40}
        \textbf{Case} & \textbf{$k$} & \textbf{Optimizer} &
        \textbf{Relative $L_{\infty}$} & \textbf{Relative $L_2$} &
        \textbf{Training time (s)} & \textbf{\# parameters} \\
        \hline
        1  & 1 & BFGS (PyTorch)*              & \(4.2 \times 10^{-7}\) & \(3.7 \times 10^{-7}\) & 7066 & 2971 \\ \hline
        2  & 1 & SSBroyden (PyTorch)*         & \(6.9 \times 10^{-9}\) & \(7.7 \times 10^{-9}\) & 8215 & 2971 \\ \hline
        3  & 1 & SSBFGS (PyTorch)*            & \(2.0 \times 10^{-8}\) & \(2.6 \times 10^{-8}\) & 8322 & 2971 \\ \hline
        4  & 1 & SSBFGS (optx.TrustRegion)    & \(7.4 \times 10^{-9}\) & \(7.2 \times 10^{-9}\) & 564  & 2971 \\ \hline
        5  & 1 & SSBFGS (optx.Wolfe)          & \(5.1 \times 10^{-8}\) & \(5.8 \times 10^{-8}\) & 449  & 2971 \\ \hline
        6  & 1 & SSBroyden (optx.TrustRegion) & \(4.8 \times 10^{-9}\) & \(3.6 \times 10^{-9}\) & 319  & 2971 \\ \hline
        7  & 1 & SSBroyden (optx.Wolfe)       & \(5.9 \times 10^{-9}\) & \(5.3 \times 10^{-9}\) & 426  & 2971 \\ \hline
        9  & 1 & NG             & \(5.6 \times 10^{-9}\) & \(4.2 \times 10^{-9}\)  & 251  & 2971 \\ \hline
        10  & 1 & SPRING                       & \(2.4 \times 10^{-9}\) & \(2.0 \times 10^{-9}\)  & 540  & 2971 \\ \hline
        11 & 1 & SOAP                         & \(5.5 \times 10^{-4}\) & \(4.2 \times 10^{-4}\) & 1038 & 2971 \\ \hline
      \end{tabular}}
      \captionof{table}{\textbf{2D Helmholtz problem with $a_1=1$, $a_2=4$, and $k=1$.}
      All optimizers use the same fully connected network architecture with four hidden layers and 30 neurons per layer. The SSBroyden (PyTorch), BFGS, and SSBFGS (PyTorch) optimizers use $15{,}000$ interior collocation points, resampled every 500 epochs using the RAD adaptive sampling strategy, whereas optx.TrustRegion uses $25{,}000$ interior collocation points. The Fourier feature mapping in~\ref{Fourier_modes} uses mode $m=1$, resulting in four input features.}
      \label{tab:Helmholtz_witha_1=1a_2=4}
    \end{minipage}
\end{figure}

\begin{table}
    \centering
    \small
    \setlength{\tabcolsep}{3pt}

    \rowcolors{2}{cyan!10}{white}

    \resizebox{0.92\linewidth}{!}{%
    \begin{tabular}{|c|l|c|c|c|c|}
        \hline
        \rowcolor{cyan!45}
        \textbf{Case} & \textbf{Optimizer} & \textbf{\# collocation points} &
        \textbf{Relative $L_{\infty}$} & \textbf{Training time (s)} & \textbf{\# parameters} \\
        \hline
        1 & SSBFGS with TrustRegion & 10000 & $1.5 \times 10^{-8}$ & 287 & 2971 \\ \hline
        2 & SSBFGS with TrustRegion & 20000 & $7.1 \times 10^{-8}$ & 489 & 2971 \\ \hline
        3 & SSBFGS with TrustRegion & 25000 & $7.4 \times 10^{-9}$ & 564 & 2971 \\ \hline
    \end{tabular}}
    \caption{\textbf{2D Helmholtz problem with $a_1=1$, $a_2=4$, and $k=1$.}
    Relative $l_{\infty}$ error and training time for SSBFGS with different numbers of collocation points. All cases use the same PINN architecture with four hidden layers and 30 neurons per layer.}
    \label{tab:ssbfgs_jax_collocation_points}
\end{table}

Training time and solution accuracy strongly depend on the number of collocation points. Table~\ref{tab:ssbfgs_jax_collocation_points} illustrates this effect by reporting the relative $L_{\infty}$ metrics and training time for different choices of interior collocation points. In particular, three collocation-point settings are considered for the SSBFGS with TrustRegion method to highlight how the number of collocation points impacts optimizer performance.

\begin{figure}[!htb]
  \centering

  \begin{subfigure}[t]{0.90\linewidth}
    \centering
    \includegraphics[width=\linewidth]{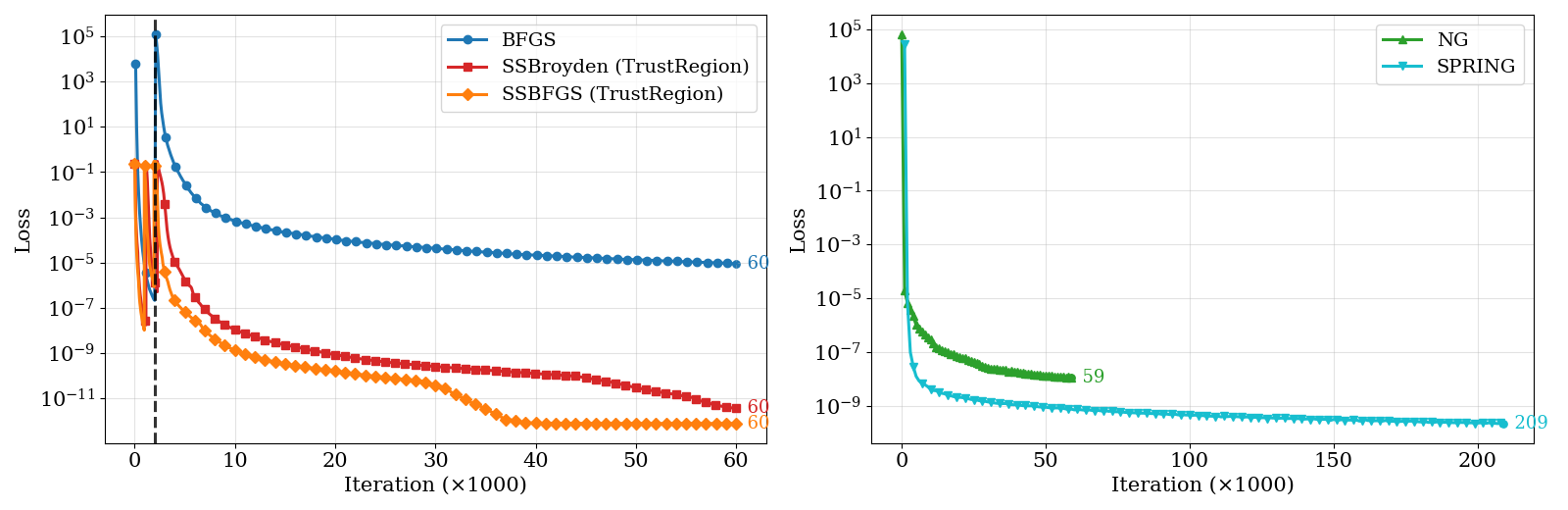}
    \caption{Training loss for $K=1$.}
    \label{fig:helmholtz_6x6_a}
  \end{subfigure}

  \vspace{0.8em}

  \begin{subfigure}[t]{0.90\linewidth}
    \centering
    \includegraphics[width=0.48\linewidth]{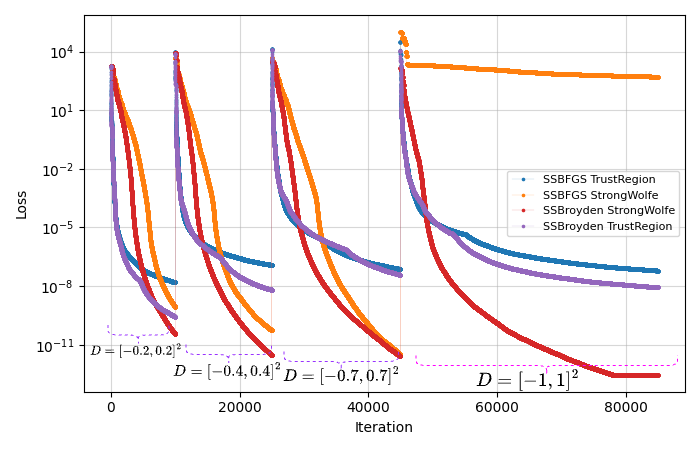}
    \hfill
    \includegraphics[width=0.48\linewidth]{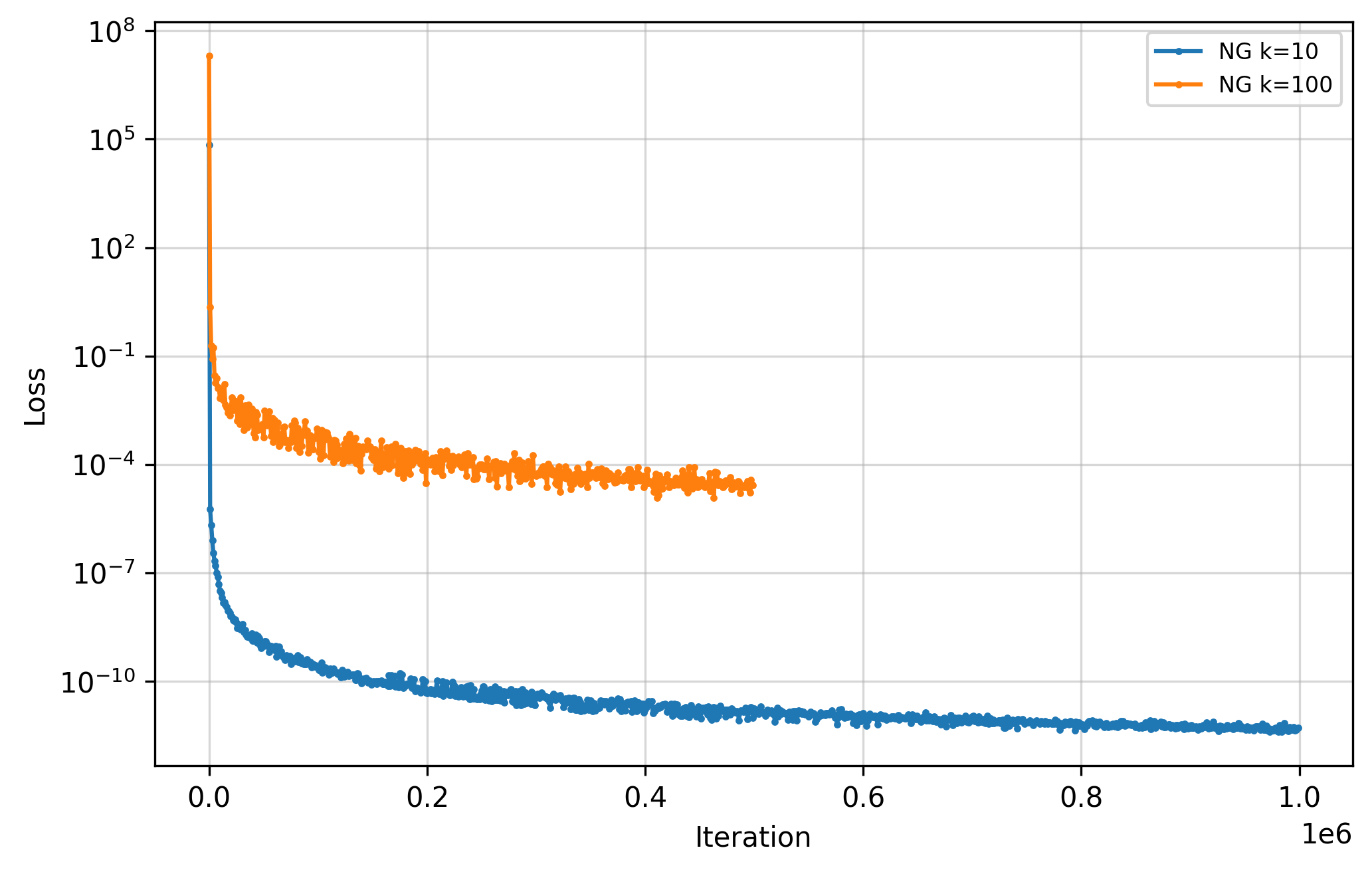}
    \caption{Loss history with curriculum domain expansion for $K=100$.}
    \label{fig:helmholtz_6x6_b}
  \end{subfigure}

  \caption{\textbf{2D Helmholtz problem with $a_1=6$ and $a_2=6$:}
  (a) $k=1$ and (b) $k=100$.
  During training for $k=100$, collocation points are sampled progressively from
  $[-0.2,0.2]^2$ for 10{,}000 iterations, $[-0.4,0.4]^2$ for 15{,}000 iterations,
  $[-0.7,0.7]^2$ for 20{,}000 iterations, and finally the full domain $[-1,1]^2$
  for 40{,}000 iterations.}
  \label{fig:helmholtz_6x6_optimizers}

  \vspace{0.7em}

  \begin{minipage}[b]{\linewidth}
    \centering
    \scalebox{0.82}{%
    \begin{tabular}{|l|c|l|c|c|c|c|}
      \hline
      \rowcolor{cyan!45}
      \textbf{Case} & \textbf{$k$} & \textbf{Optimizer} &
      \textbf{Relative $L_{\infty}$} & \textbf{Relative $L^2$} &
      \textbf{Training time (s)} & \textbf{\# parameters} \\
      \hline
      \rowcolor{kOneTint}
      1 & 1 & SSBFGS with TrustRegion        & $8.2\times 10^{-6}$  & $2.8\times 10^{-5}$ & 1127 & 4831 \\ \hline
      \rowcolor{kOneTint}
      2 & 1 & SSBroyden with TrustRegion     & $2.4\times 10^{-5}$  & $3.6\times 10^{-5}$ & 203  & 4831 \\ \hline
      \rowcolor{kOneTint}
      3 & 1 & NG                 & $6.1\times 10^{-6}$  & $2.8\times 10^{-5}$ & 944  & 4831 \\ \hline
      \rowcolor{kTenTint}
      4 & 10 & SSBFGS with TrustRegion       & $1.6\times 10^{-4}$  & $2.8\times 10^{-5}$ & 630  & 5911 \\ \hline
      \rowcolor{kTenTint}
      5 & 10 & SSBroyden with Wolfe         & $7.8\times 10^{-5}$  & $2.8\times 10^{-5}$ & 498  & 5911 \\ \hline
      \rowcolor{kTenTint}
      6 & 10 & NG               & $7.4\times 10^{-8}$  & $2.8\times 10^{-5}$ & 782  & 5911 \\ \hline
      \rowcolor{kHundredTint}
      7 & 100 & SSBFGS with TrustRegion     & $1.1\times 10^{-4}$  & $2.8\times 10^{-5}$ & 337  & 5911 \\ \hline
      \rowcolor{kHundredTint}
      8 & 100 & SSBroyden with TrustRegion  & $1.9\times 10^{-5}$  & $2.8\times 10^{-5}$ & 342  & 5911 \\ \hline
      \rowcolor{kHundredTint}
      9 & 100 & SSBroyden with Wolfe        & $3.4\times 10^{-7}$  & $2.8\times 10^{-5}$ & 393  & 5911 \\ \hline
      \rowcolor{kHundredTint}
      10 & 100 & NG              & $8.5\times 10^{-6}$  & $2.8\times 10^{-5}$ & 1561 & 5911 \\ \hline
    \end{tabular}}

    \captionof{table}{\textbf{2D Helmholtz with $a_1 = 6$ and $a_2 = 6$ ($k=1, 10$, and $100$).}
    All optimizers use the same fully connected network architecture with 6 hidden layers and 30 neurons per layer.}
    \label{tab:helmholtz_6x6_summary}
  \end{minipage}
\end{figure}

Figure~\ref{fig:helmholtz_6x6_optimizers} shows the training loss for the 2D Helmholtz problem with $a_1 = 6$ and $a_2 = 6$ for $k=1$ and $k=100$. 
For SSBFGS and SSBroyden at $k=100$, interior collocation points are introduced progressively using an expanding sampling region: 
$[-0.2,0.2]^2$ for $10{,}000$ iterations, 
$[-0.4,0.4]^2$ for $15{,}000$ iterations, 
$[-0.7,0.7]^2$ for $20{,}000$ iterations, 
and finally the full domain $[-1,1]^2$ for $40{,}000$ iterations. 
Table~\ref{tab:helmholtz_6x6_summary} summarizes the performance of SSBFGS, SSBroyden, and NG for these cases. 
For $k=1$, the Fourier feature mapping in~\ref{Fourier_modes} uses mode $m=1$, resulting in four input features, while for $k=10$ and $k=100$, mode $m=10$ is used. 
Across all cases, the same fully connected network architecture is employed, consisting of six hidden layers with 30 neurons per layer.

\begin{figure}[t]
  \centering
  \includegraphics[width=\linewidth, height=0.99\textheight, keepaspectratio]{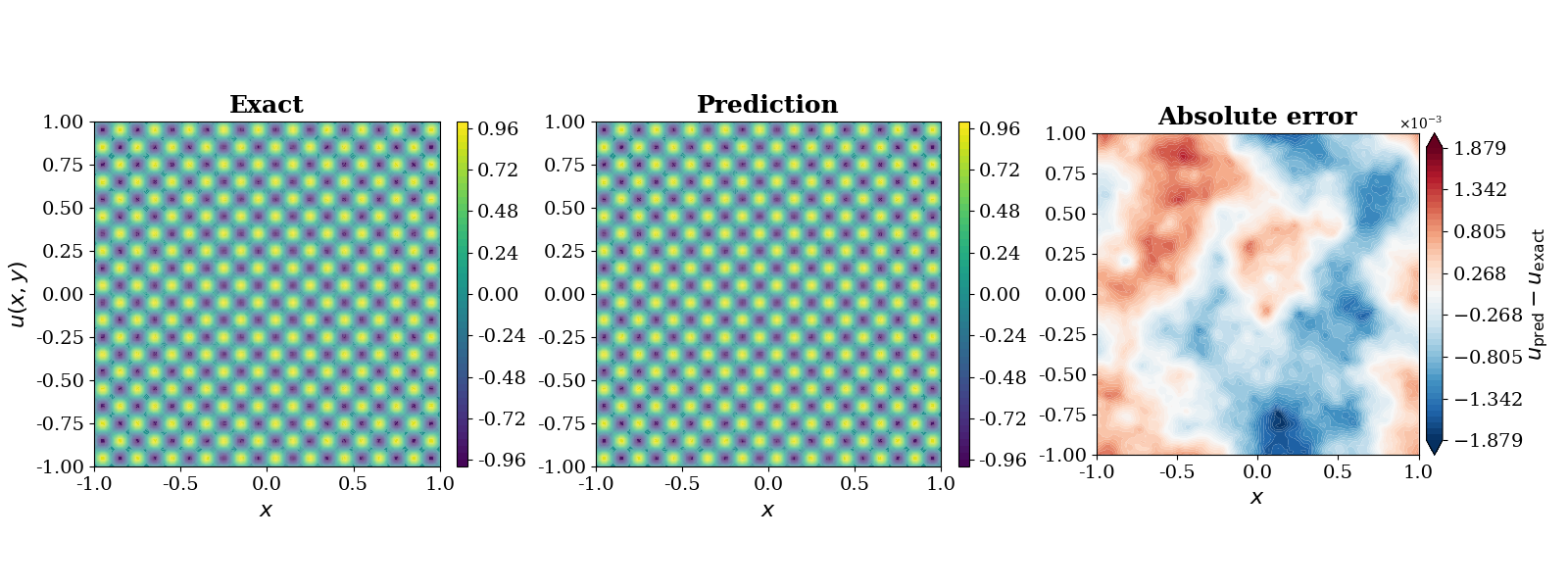}
  \caption{ \textbf{2D Helmholtz with $a_1=10$, $a_2=10$, and $k=1$:}
  SSBroyden prediction, exact solution~\eqref{eq:helmholtz_exact}, absolute error, and pointwise absolute error $|u_{\rm pred}-u_{\rm exact}|$.}
  \label{fig:Helmholtz_solution_6x6}
\end{figure}

\begin{figure}
    \centering
    \begin{minipage}[b]{\linewidth}
        \centering          
        \includegraphics[width=\linewidth]{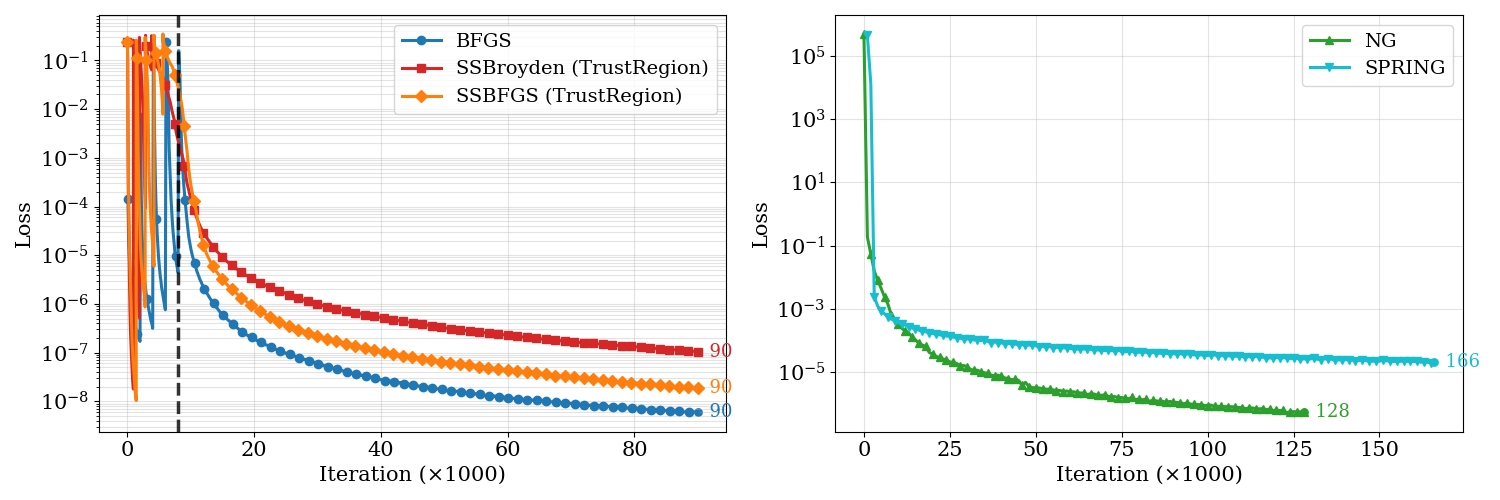}
        \caption{\textbf{2D Helmholtz with $a_1=10$, $a_2=10$, and $k=1$:}
  Loss and relative $L_2$ error for the Helmholtz problem with $a_1=10$ and $a_2=10$, comparing BFGS, SSBroyden, and NG descent.}

        \label{fig:}
    \end{minipage}
    \vspace{1em}
    \begin{minipage}[b]{\linewidth}
        \centering
        \rowcolors{2}{cyan!15}{white}
        \scalebox{0.85}{
        \begin{tabular}{|l|r|r|r|r|r|}
        \hline
        \rowcolor{cyan!40} 
        \textbf{Case} & \textbf{Optimizer} & \textbf{Relative $L_{\infty}$} & \textbf{Relative $L_{2}$} &  \textbf{Training time (s)} & \textbf{\# parameters} \\ 
        \hline
        1 & SSBFGS with TrustRegion & \(2.2 \times 10^{-3}\)  & \(3.1 \times 10^{-3}\)    &  1530 & 6,691 \\ 
        2 & SSBroyden with TrustRegion   & \(6.1 \times 10^{-3}\)  & \(6.7 \times 10^{-3}\)   &  2194 & 6,691 \\ 
        3 & NG &   \( 2.2\times 10^{-5}\)  &   \( 1.9\times 10^{-5}\)  &  3000 & 6,691 \\ 
        4 & SPRING &   \( 3.8\times 10^{-5}\) &   \( 3.8\times 10^{-5}\)   &  3000 & 6,691 \\ 
        \hline
        \end{tabular}}
\captionof{table}{\textbf{2D Helmholtz with $a_1=10$, $a_2=10$, and $k=1$:} 
Results are shown for NG, SSBroyden, and BFGS. 
For SSBroyden and BFGS, the loss is normalized by 
$\mathrm{scale}(a_1,a_2,k) = (a_1 \pi)^2 + (a_2 \pi)^2 + k^2$, 
which for $(a_1,a_2,k) = (10,10,1)$ yields 
$\mathrm{scale}  (10,10,1) = 200\pi^2 + 1 \approx 1974.92$.}
        \label{tab:}
    \end{minipage}
\end{figure}

\begin{figure}
  \centering
  \begin{minipage}[b]{\linewidth}
    \centering
    \includegraphics[width=0.78\linewidth]{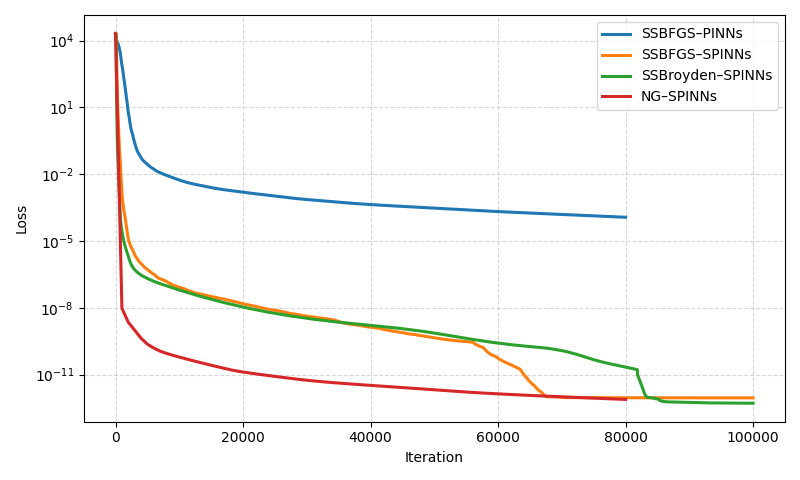}
    \caption{\textbf{3D Helmholtz with $a_1=4$, $a_2=4$, $a_3=3$ , and $k=1$.}
The network consists of a periodic Fourier feature embedding ($k_{\max}=2$),
followed by a fully connected network with four hidden layers, each containing
30 neurons with $\tanh$ activations, and a linear output layer.}
\label{fig:helmholtz3d_4_4_3_loss}
  \end{minipage}
  \vspace{1em}

  \begin{minipage}[b]{\linewidth}
    \centering
    \rowcolors{2}{cyan!15}{white}
    \scalebox{0.77}{
    \begin{tabular}{|l|l|c|c|c|c|}
      \hline
      \rowcolor{cyan!40}
      \textbf{Case} & \textbf{Optimizer} &
      \textbf{Relative $L_{\infty}$} & \textbf{Relative $L_{2}$} &
      \textbf{Training time (s)} & \textbf{\# parameters} \\
      \hline
      1 & PINN--SSBFGS with TrustRegion  & \(1.0\times 10^{-4}\) & \(3.5\times 10^{-4}\) & 551 & 3211 \\
      2 & PINN--NG & \(4.7\times 10^{-5}\) & \(4.1\times 10^{-5}\) & 571 & 3211 \\            
      3 & SPINN--SSBFGS with TrustRegion & \(1.0\times 10^{-9}\) & \(4.9\times 10^{-10}\) & 356 & 3211 \\
      4 & SPINN--SSBroyden with TrustRegion & \(7.9\times 10^{-9}\) & \(3.2\times 10^{-9}\) & 373 & 3211 \\
      5 & SPINN--NG & \(4.0\times 10^{-9}\) & \(2.1\times 10^{-9}\) & 730 & 3211 \\
      \hline
    \end{tabular}}
    \captionof{table}{\textbf{3D Helmholtz with $a_1=4$, $a_2=4$, $a_3=3$, and $k=1$:} comparison between vanilla PINNs and SPINNs~\cite{cho2023separable}.}
  \label{tab:helmholtz3d_4_4_3_summary}
  \end{minipage}
\end{figure}

Results are presented for the three-dimensional Helmholtz problem with wave numbers
$a_1=4$, $a_2=4$, and $a_3=3$~\cite{hwang2024dual}.
Figure~\ref{fig:helmholtz3d_4_4_3_loss} provides a direct comparison between vanilla PINNs and SPINNs under the same evaluation protocol. SPINNs provide improved accuracy compared to vanilla PINNs for this
higher-frequency 3D setting. In particular, SPINNs achieve lower solution error
across the reported metrics in Table~\ref{tab:helmholtz3d_4_4_3_summary}, indicating a
better ability to represent oscillatory structure induced by the Helmholtz
operator at $(a_1,a_2,a_3)=(4,4,3)$.

The neural network consists of a periodic feature layer followed by a fully connected multilayer perceptron.
The periodic layer maps the three-dimensional input $(x,y,z)$ to $6k_{\max}$ Fourier features using sine and cosine functions, with $k_{\max}=2$.
These features are passed to a five-layer fully connected network with architecture
$12 \rightarrow 30 \rightarrow 30 \rightarrow 30 \rightarrow 30 \rightarrow 1$,
using $\tanh$ activations in all hidden layers and a linear output layer.
The model is trained using $20{,}000$ randomly sampled interior collocation points in $[-1,1]^3$, without a structured training grid.

\paragraph{Key observations from the Helmholtz equation:}

For the 2D Helmholtz equation, a broad range of test cases was examined. The results indicate that, for both SSBFGS and SSBroyden, the trust-region strategy consistently outperforms the line-search variants in terms of efficiency. For the cases $(a_1,a_2)=(6,6)$ and $(a_1,a_2)=(10,10)$, NG converged successfully without any warm-up stage. Nevertheless, for SSBFGS and SSBroyden, a progressive warm-up schedule obtained by gradually increasing the frequency parameters, e.g., $(a_1,a_2)=(2,2)$, $(4,4)$, \ldots, $(6,6)$, significantly improved convergence speed and robustness.

For the high-wavenumber regimes ($k=10$ and $k=100$), NG converged in a single training run on the full domain. In contrast, for the quasi-Newton methods, convergence was improved by progressively extending the computational domain from $[-0.2,0.2]^2$ to $[-0.4,0.4]^2$, then to $[-0.7,0.7]^2$, and finally to the full domain $[-1,1]^2$.

For the 3D Helmholtz equation, SPINN consistently outperformed standard PINNs. Although the case $(a_1,a_2,a_3)=(4,4,3)$ with $k=1$ is particularly challenging, the proposed approach achieved high accuracy, reaching errors on the order of $10^{-9}$.

\FloatBarrier
\subsection{2D Stokes flow over a triangular wedge}\label{sec:Stokes_Flow}

\begin{figure}
    \centering
    \begin{minipage}[b]{\linewidth}
        \centering
        \includegraphics[width=0.49\linewidth]{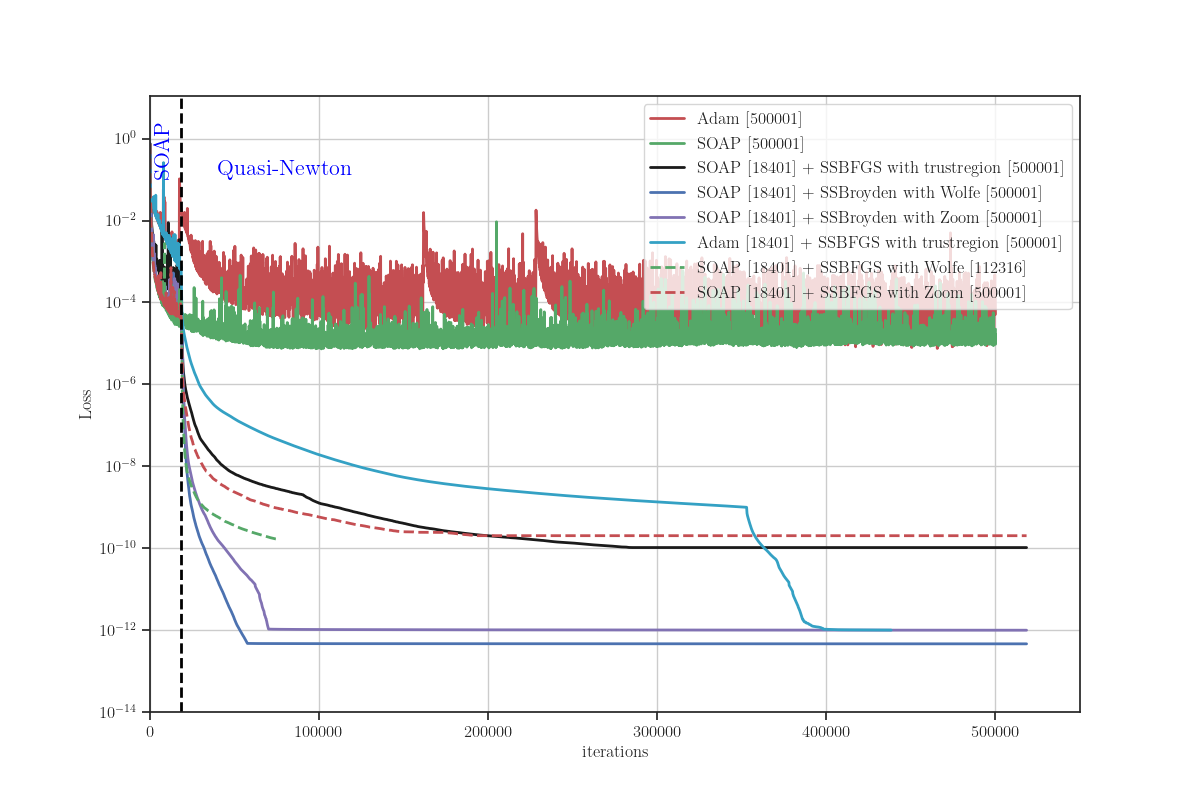}
        \includegraphics[width=0.49\linewidth]{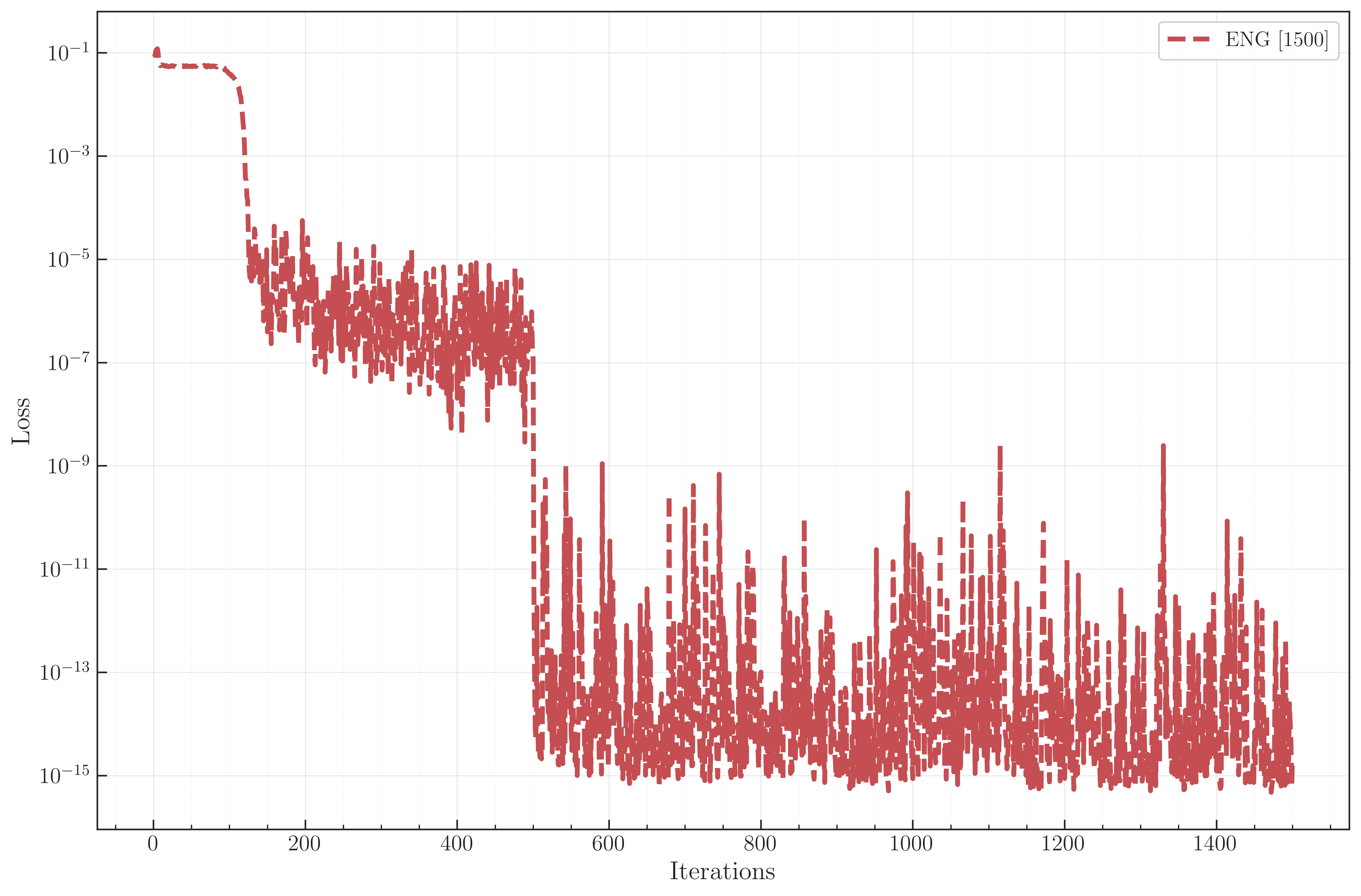}
        \caption{\textbf{Stokes equation:} 
      The left subfigure presents the loss history for Stokes flow using various combinations of optimizers and line-search routines. Each experiment starts with a warm-up phase employing first-order optimizers (Adam and SOAP), followed by a switch to quasi-Newton optimizers combined with different line-search strategies. The point of transition between optimizers is marked by a vertical dashed line. The right subfigure shows the loss history for NG optimizer, which significantly outperforms all self-scaled and first-order optimizers, achieving convergence in just 1,500 iterations.
}
        \label{fig:Stokes_loss}
    \end{minipage}

    \vspace{1em}

    \begin{minipage}[b]{\linewidth}
        \centering
        \rowcolors{2}{cyan!15}{white}
        \scalebox{0.65}{
        \begin{tabular}{|l|c|c|c|}
            \hline
            \rowcolor{cyan!40}
            \textbf{Optimizer [\# Iters.]} & \textbf{Relative \(L_2\) (u, v, p)} & \textbf{Total Parameters} & Time (s) \\
            \hline
            Adam (500,001)                                     & ($2.70 \times 10^{-2}$, $2.58 \times 10^{-2}$, $1.10 \times 10^{-1}$) & 33,667 & 3600 \\
            SOAP (500,001)                                     & ($2.02 \times 10^{-2}$, $1.87 \times 10^{-2}$, $1.90 \times 10^{-1}$) & 33,667   & 3600 \\
            SOAP (18401) + SSBFGS with trustregion  (500,001)  & ($3.25 \times 10^{-4}$, $1.59 \times 10^{-4}$, $1.07 \times 10^{-1}$) & 33,667 & 3600 \\
            SOAP (18401) + SSBroyden with Armijo and Wolfe  (500,001)  & ($3.25 \times 10^{-4}$, $1.57 \times 10^{-4}$, $1.07 \times 10^{-1}$) & 33,667  & 3600 \\
            SOAP (18401) + SSBroyden with zoom (500,001)  & ($3.25 \times 10^{-4}$, $1.57 \times 10^{-4}$, $1.07 \times 10^{-1}$) & 33,667 & 3600\\
            Adam (18401) + SSBFGS with trustregion (500,001)  & ($3.25 \times 10^{-4}$, $1.57 \times 10^{-4}$, $1.10 \times 10^{-1}$) & 33,667 & 3600 \\
            SOAP (18401) + SSBFGS with Armijo abd Wolfe (112,316)  & ($3.26 \times 10^{-4}$, $1.58 \times 10^{-4}$, $1.07 \times 10^{-1}$) & 33,667  & 3600 \\
             SOAP (18401) + SSBFGS with Zoom (500,001)  & ($3.26 \times 10^{-4}$, $1.61 \times 10^{-4}$, $1.07 \times 10^{-1}$) & 33,667  & 3600 \\
             NG with Line-search, undersampled (5'000) & ($3.25 \times 10^{-4}$, $1.57 \times 10^{-4}$, $1.07 \times 10^{-1}$) & 33,667 & 150\\
                        \hline
        \end{tabular}}
        \captionof{table}{\textbf{Stokes equation:}
      Comparison of relative $L_2$ errors and total number of model parameters for different combinations of optimizers and line-search strategies. All models use the same network architecture with 33{,}667 parameters. The absolute and relative convergence tolerances for SSBFGS and SSBroyden are set to $10^{-14}$. All computations are performed on an NVIDIA H100 GPU, with 75\% persistent memory usage (80~GB total) and 99\% compute utilization, achieving 25.35~TFLOPs out of a theoretical peak of 25.61~TFLOPs.
}\label{tab:Stokes_equation}
    \end{minipage}
\end{figure}

To show the efficacy of proposed optimizer, we finally solve the Stokes equation for a viscous flow in a lid-driven wedge. The Stokes equation is given by
\begin{align}\label{eq:Stokes}
\begin{aligned}
\frac{\partial p}{\partial x} -\left(\frac{\partial^2 u} {\partial x^2} + \frac{\partial^2 u} {\partial y^2} \right) = 0,\\
\frac{\partial p}{\partial y} -\left(\frac{\partial^2 v} {\partial x^2} + \frac{\partial^2 v} {\partial y^2} \right) = 0,\\
\frac{\partial u}{\partial x} + \frac{\partial v} {\partial y} = 0,
\end{aligned}
\end{align}

This problem was previously solved using the SSBFGS optimizer in~\cite{kiyani2025optimizing}. The problem setup and description adopted here are identical to those presented in~\cite{kiyani2025optimizing}. In this work, we examine the convergence behavior of PINNs using both the SSBFGS and SSBroyden optimizers under different line-search strategies. Their performance is compared against the NG method in terms of accuracy and computational runtime.

This benchmark is chosen because it requires solving PDEs in a regime where the solution contains extremely small-amplitude flow structures. Moffatt~\cite{moffatt1964viscous} obtained an asymptotic solution to Equation~\eqref{eq:Stokes} using a similarity approach, showing that the intensity of the eddies is governed by the wedge angle. For the case studied here, with a wedge angle of $25.53^\circ$, the strength of each successive eddy is expected to be asymptotically about $400$ times smaller than that of the previous one. Resolving such rapidly decaying eddies therefore demands an optimizer that is both highly accurate and robust.

To evaluate the accuracy of the PINN solution, we compute a reference solution of Equation~\eqref{eq:Stokes} using the spectral element method, following the formulation presented in~\cite{karniadakis2013spectral}. All reference simulations are carried out using the \texttt{Nektar++} framework~\cite{cantwell2015nektar++}.

For all experiments, the PINN architecture consists of eight fully connected hidden layers with 64 neurons per layer and $\tanh$ activation functions. A total of $120{,}000$ residual points are sampled uniformly from the computational domain using the distribution $\mathcal{U}[0,1)$ when training with the SSBFGS and SSBroyden optimizers. Training is first performed using Adam and SOAP as standalone optimizers. Subsequently, we conduct computational experiments using different line-search strategies within the self-scaled quasi-Newton family, including trust-region, Armijo, Wolfe, and zoom line-search methods. 

For the NG method, $5{,}000$ residual points are sampled at each iteration. The convergence histories for all combinations involving the self-scaled optimizers, as well as the standalone Adam and SOAP methods, are shown in the left subfigure of Figure~\ref{fig:Stokes_loss}. The convergence history of the NG method is presented in the right subfigure of Figure~\ref{fig:Stokes_loss}. The $\ell_2$ relative error, together with the optimizer type, number of parameters, and runtime, are summarized in Table~\ref{tab:Stokes_equation}.

Among the tested configurations, the most accurate results are obtained using Adam combined with SSBFGS under a trust-region strategy, and SOAP combined with SSBroyden using Armijo and Wolfe line-search strategies. The accuracy achieved by the NG method is comparable to that of the self-scaled optimizers; however, its runtime is approximately $150$ seconds, compared to roughly one hour for the self-scaled methods. Consequently, the NG approach is approximately $24$ times faster than the self-scaled optimizers.

To further assess the performance of each optimizer, we visualize the resulting streamlines in Figure~\ref{fig:Stokes_Domain} for all configurations listed in Table~\ref{tab:Stokes_equation}. The streamline patterns produced by the most accurate optimizers are shown in Figure~\ref{fig:Stokes_Domain}(d), (f), and (i). These results demonstrate successful recovery of all four eddies, including the weakest eddy with a velocity magnitude of $|u| = 10^{-8}$. Finally, Figure~\ref{fig:Stokes_Domain}(j) shows the Moffatt eddy strength~\cite{moffatt1964viscous}, comparing the transverse centerline velocity magnitude $(|u|)$. The results indicate that the NG method reproduces the eddy hierarchy with high accuracy.

\begin{figure}
\centering
\begin{subfigure}[b]{0.22\linewidth}
\includegraphics[width=\linewidth]{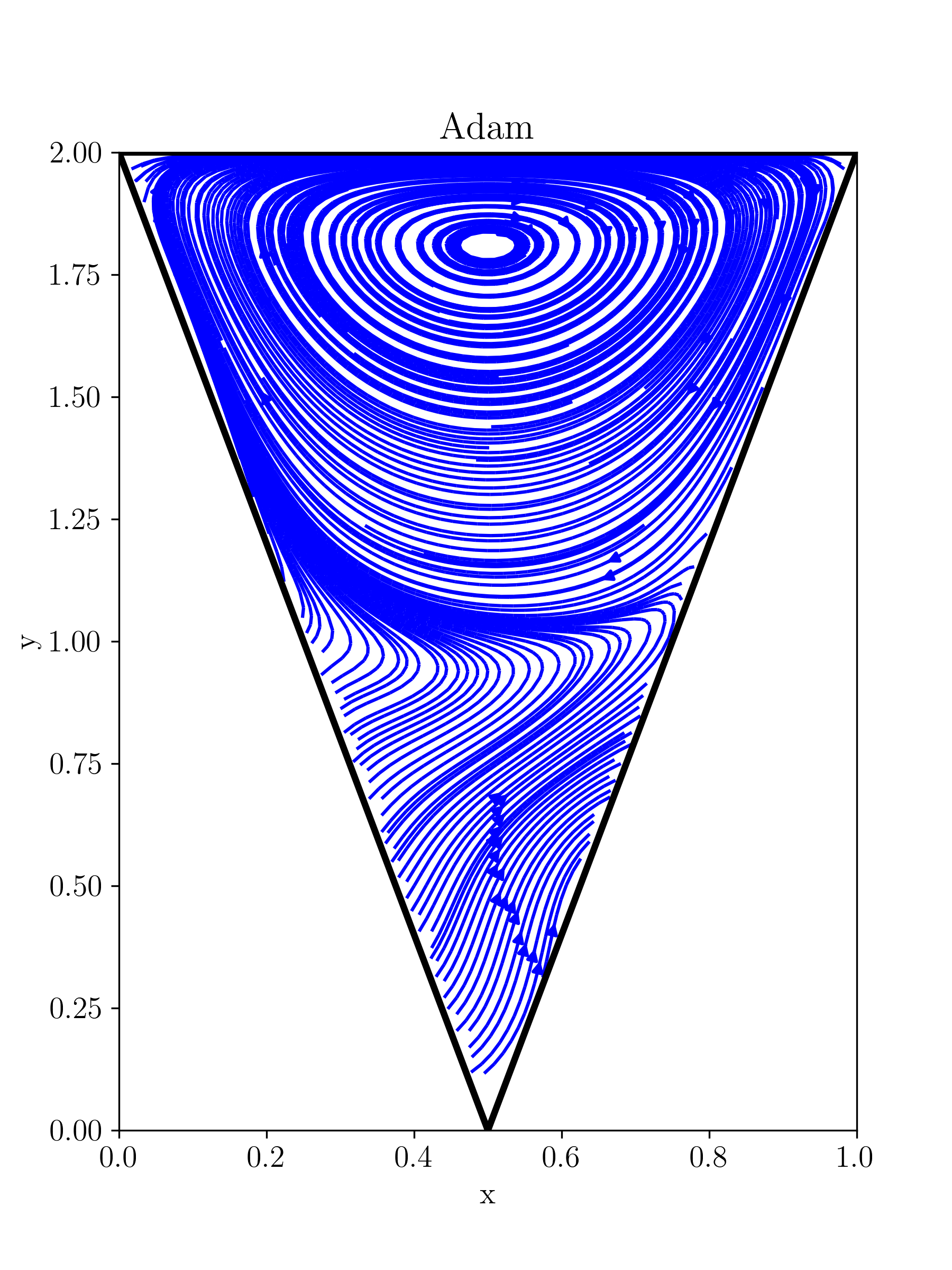}
\caption{}
\end{subfigure}\hfill
\begin{subfigure}[b]{0.22\linewidth}
\includegraphics[width=\linewidth]{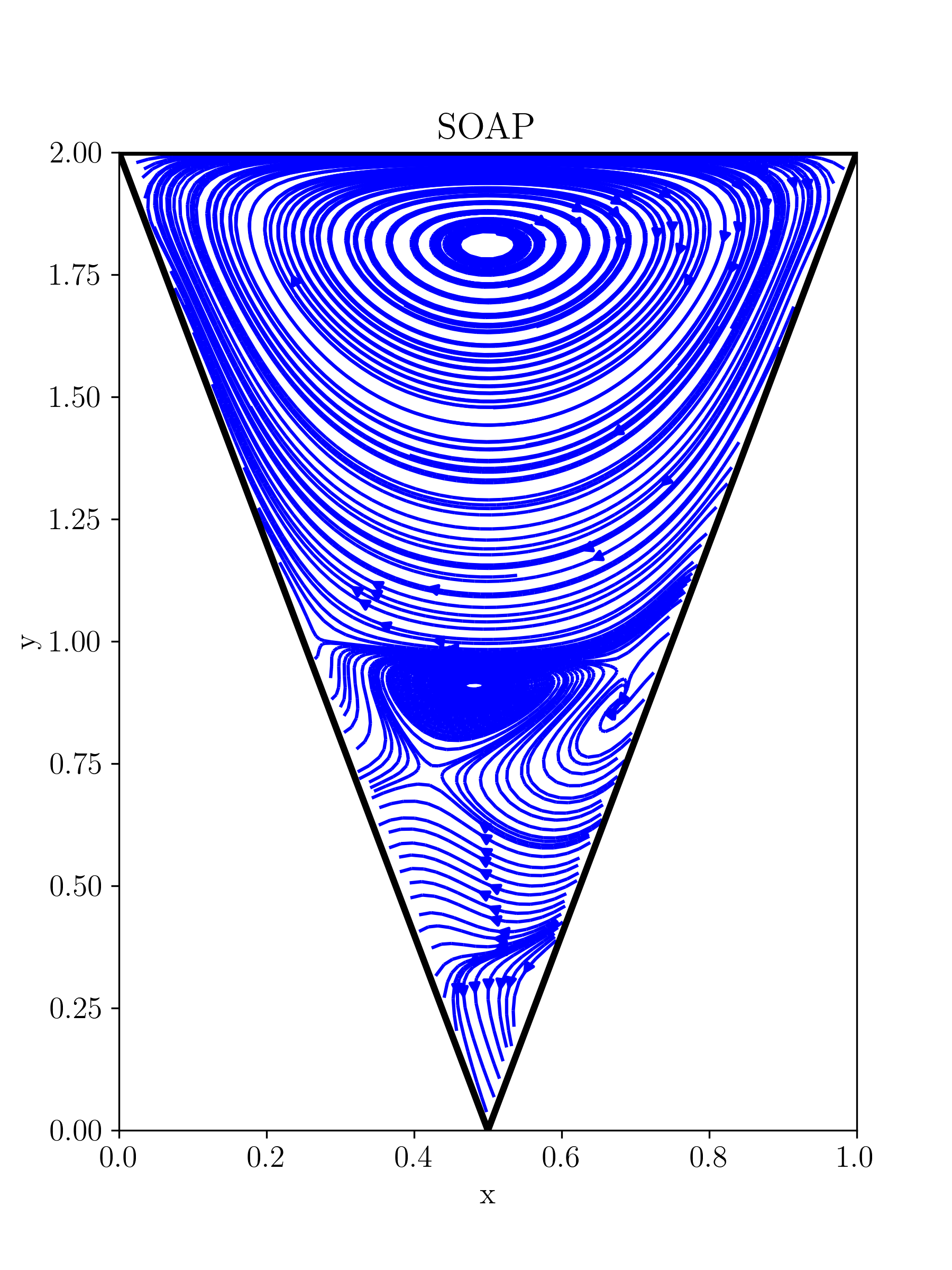}
\caption{}
\end{subfigure}\hfill
\begin{subfigure}[b]{0.22\linewidth}
\includegraphics[width=\linewidth]{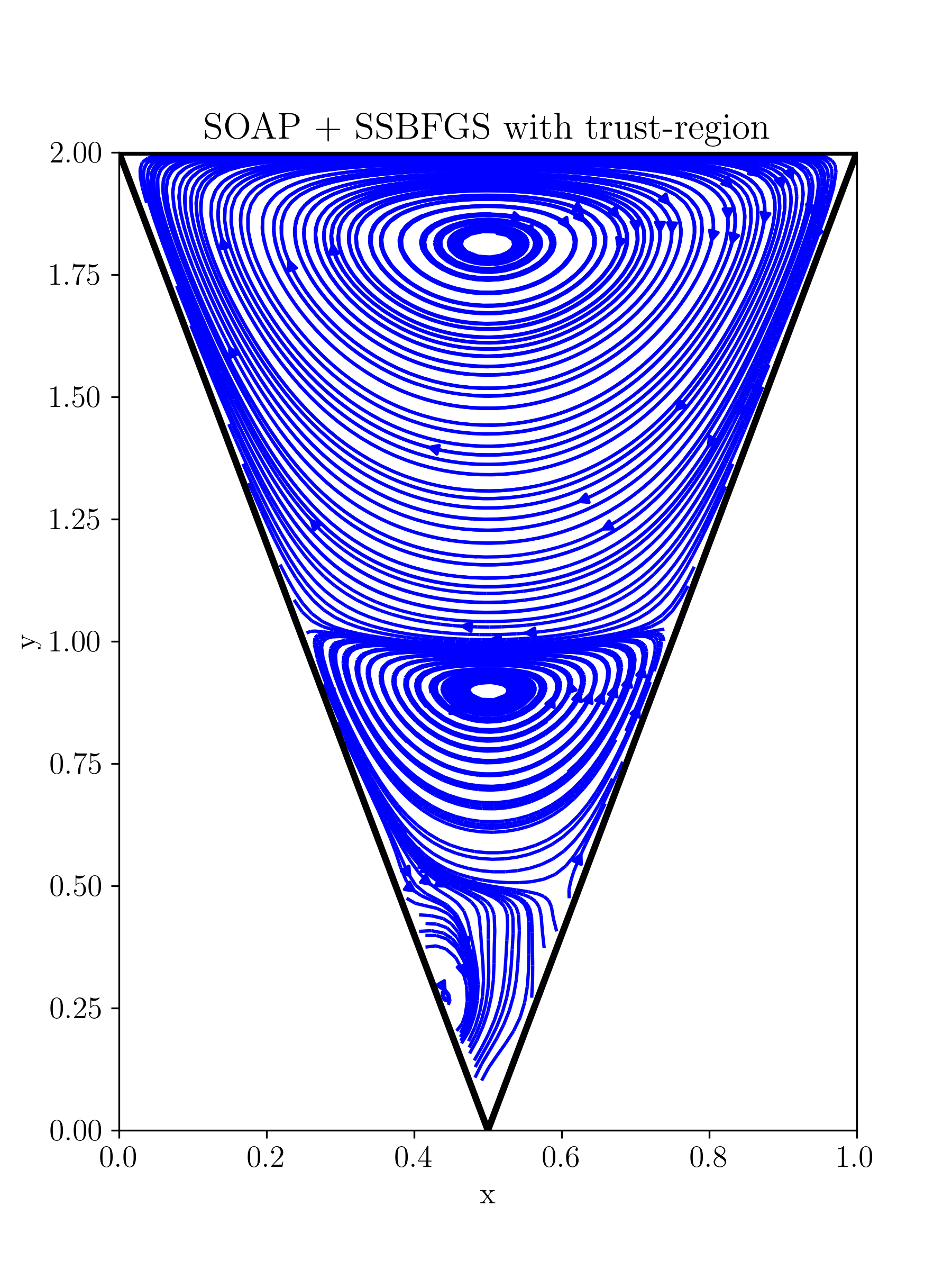}
\caption{}
\end{subfigure}\hfill
\begin{subfigure}[b]{0.22\linewidth}
\includegraphics[width=\linewidth]{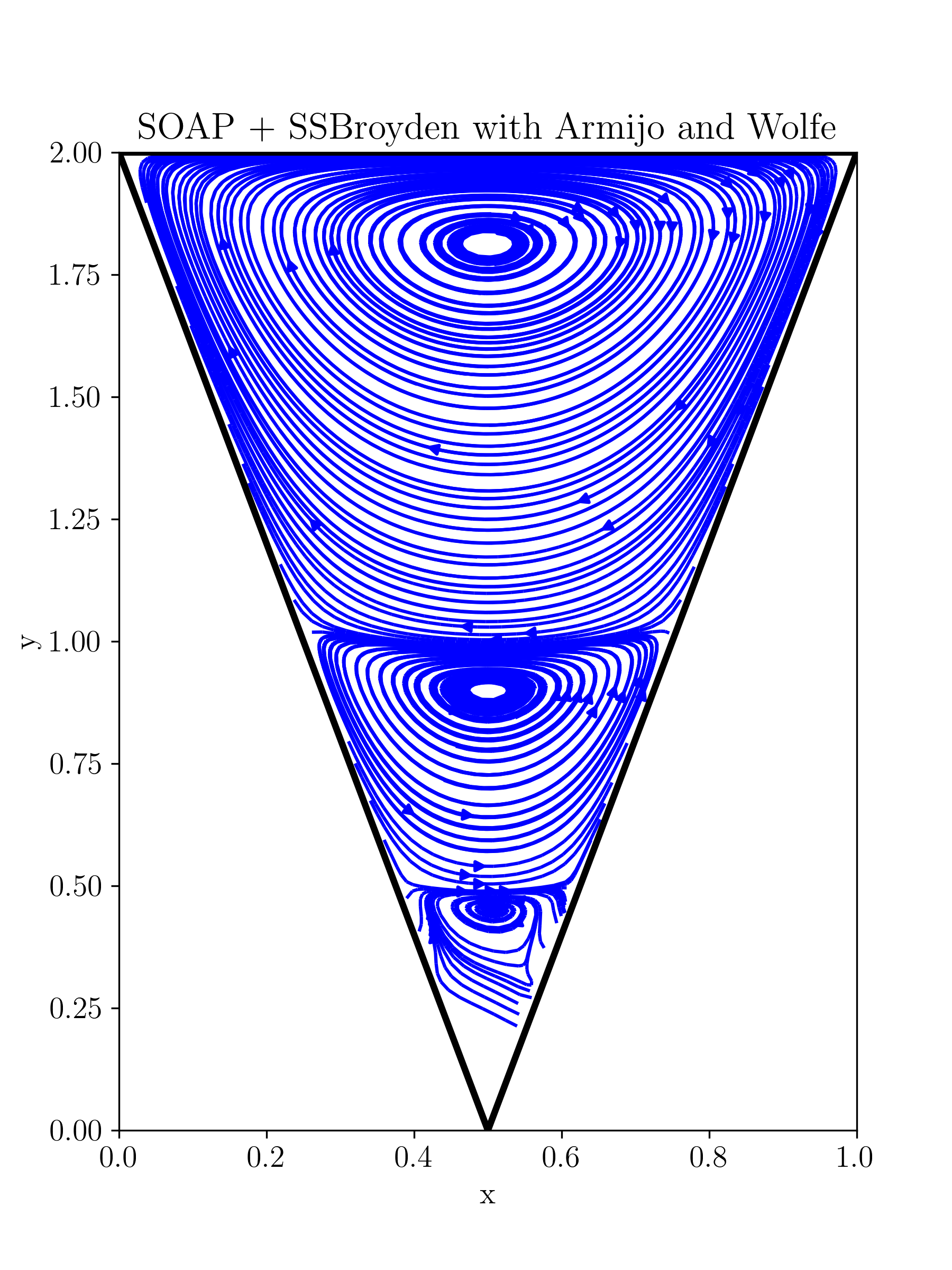}
\caption{}
\end{subfigure}

\vspace{-1mm}

\begin{subfigure}[b]{0.22\linewidth}
\includegraphics[width=\linewidth]{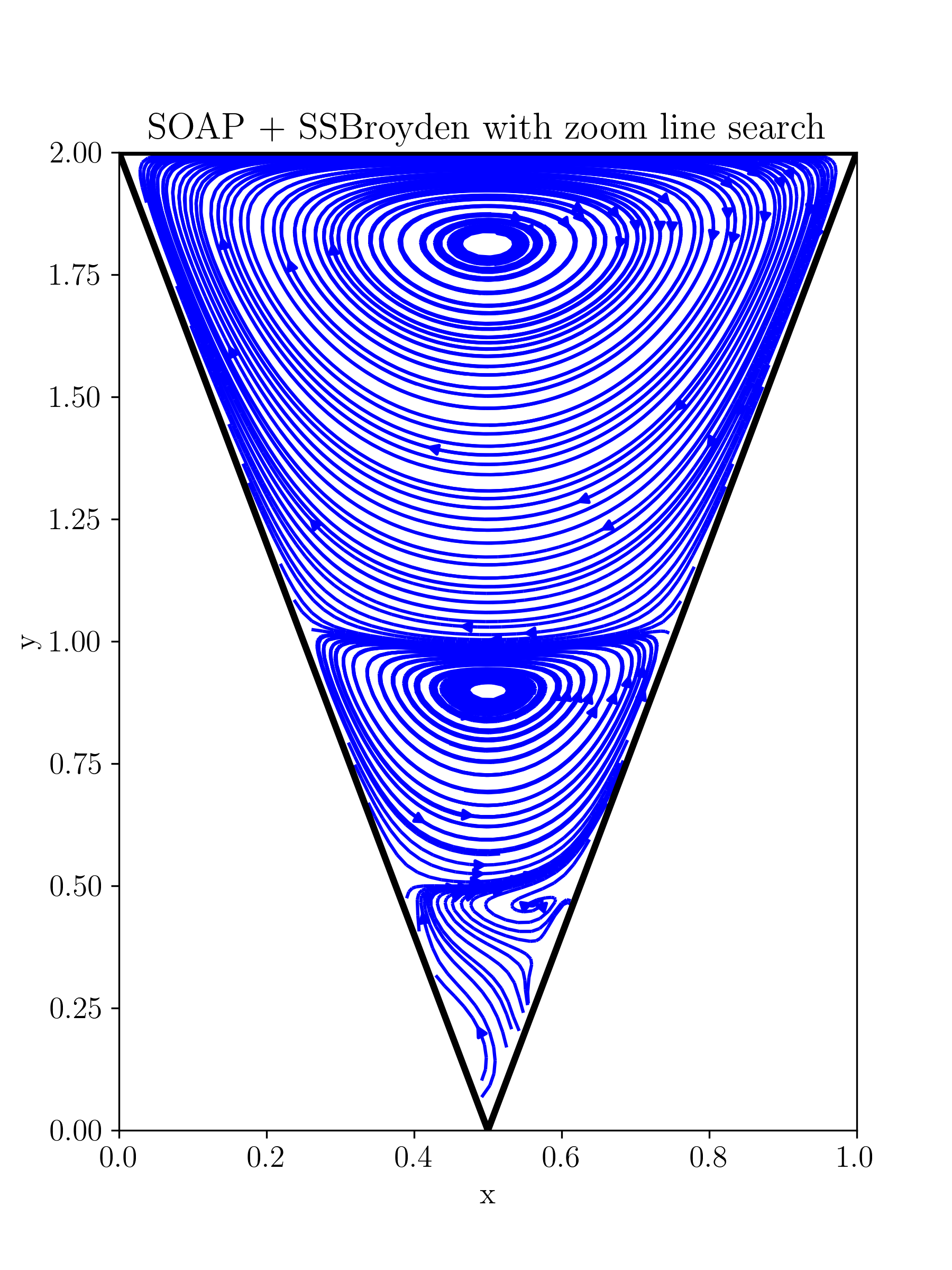}
\caption{}
\end{subfigure}\hfill
\begin{subfigure}[b]{0.22\linewidth}
\includegraphics[width=\linewidth]{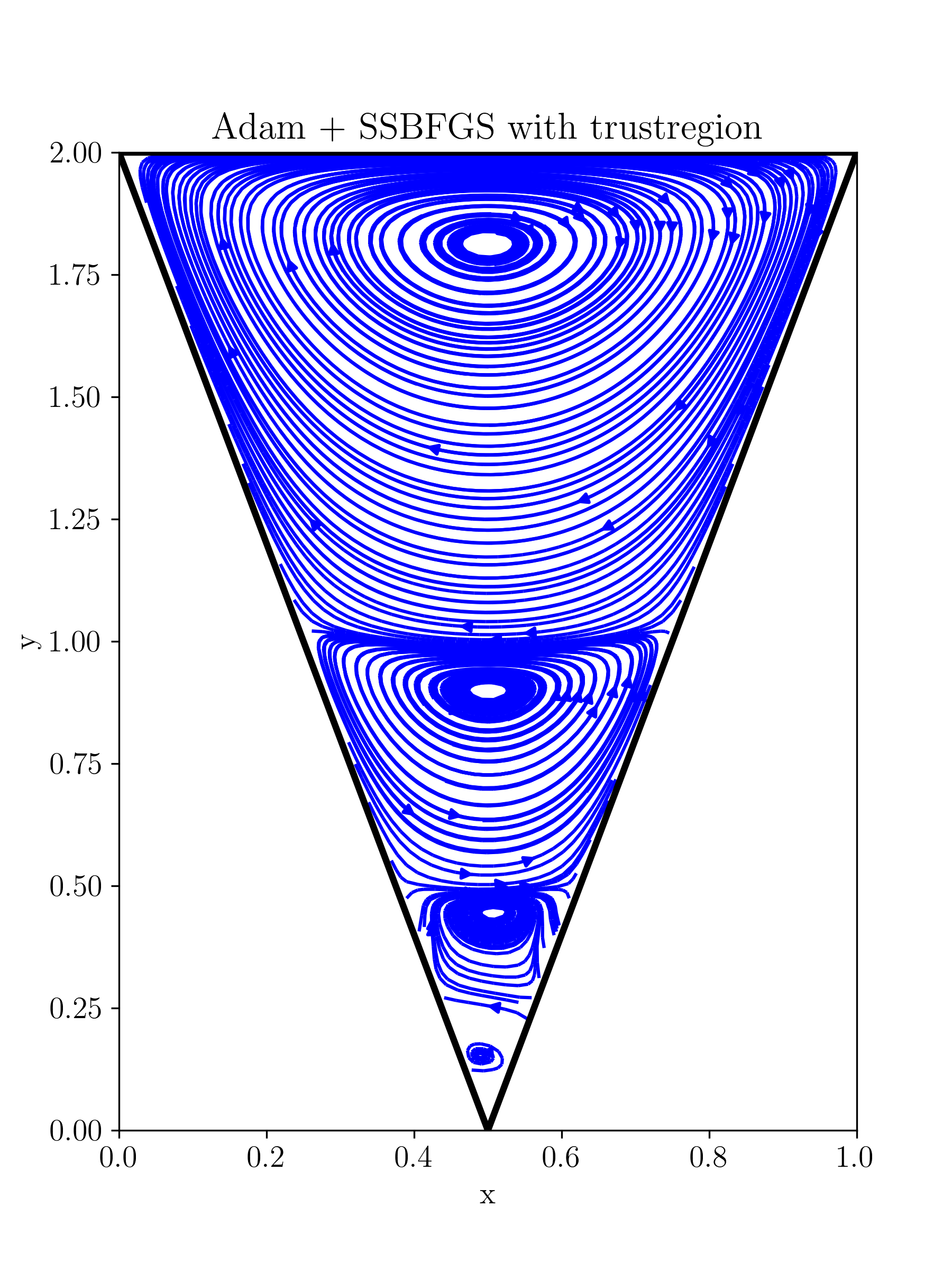}
\caption{}
\end{subfigure}\hfill
\begin{subfigure}[b]{0.22\linewidth}
\includegraphics[width=\linewidth]{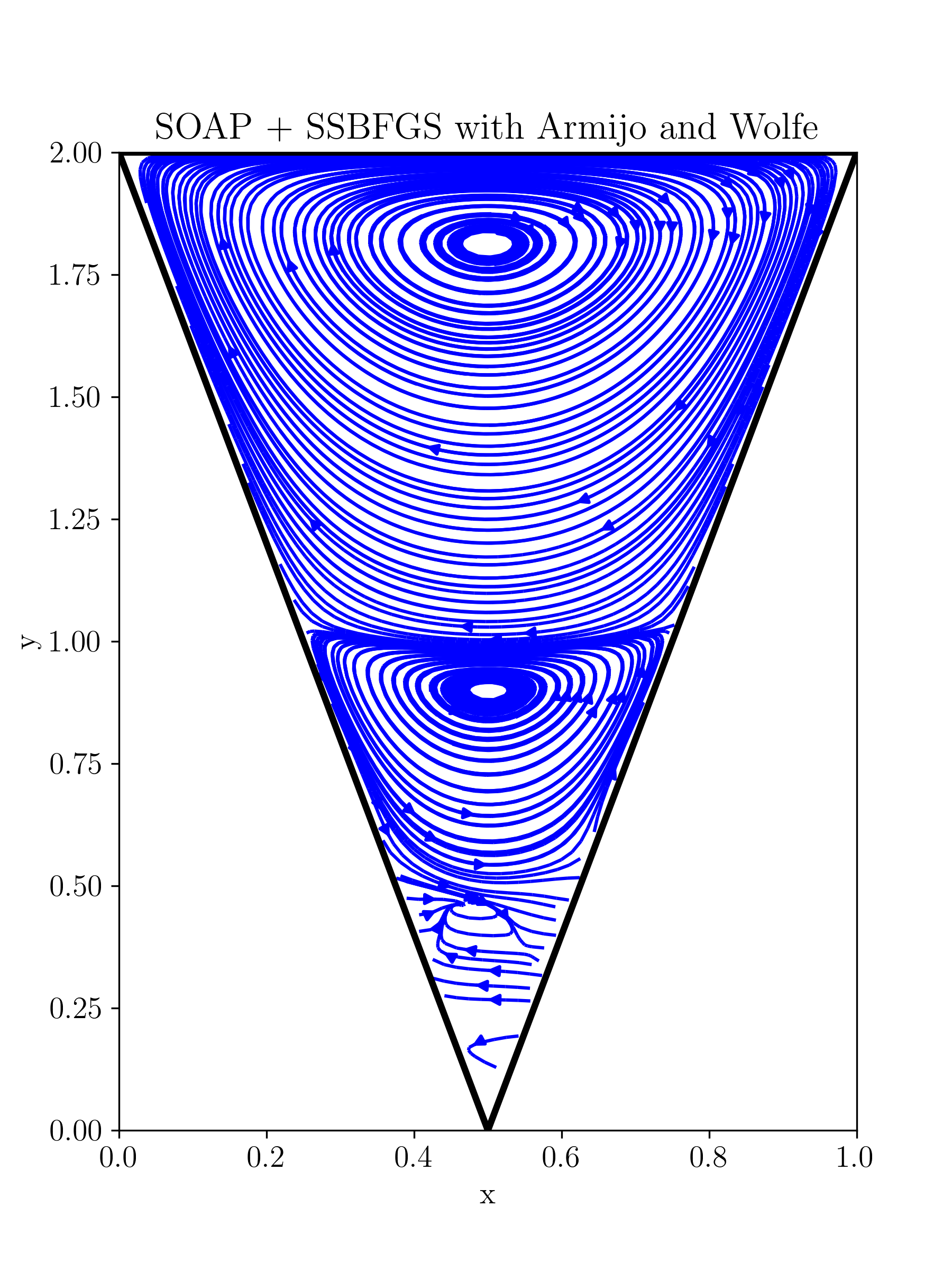}
\caption{}
\end{subfigure}\hfill
\begin{subfigure}[b]{0.22\linewidth}
\includegraphics[width=\linewidth]{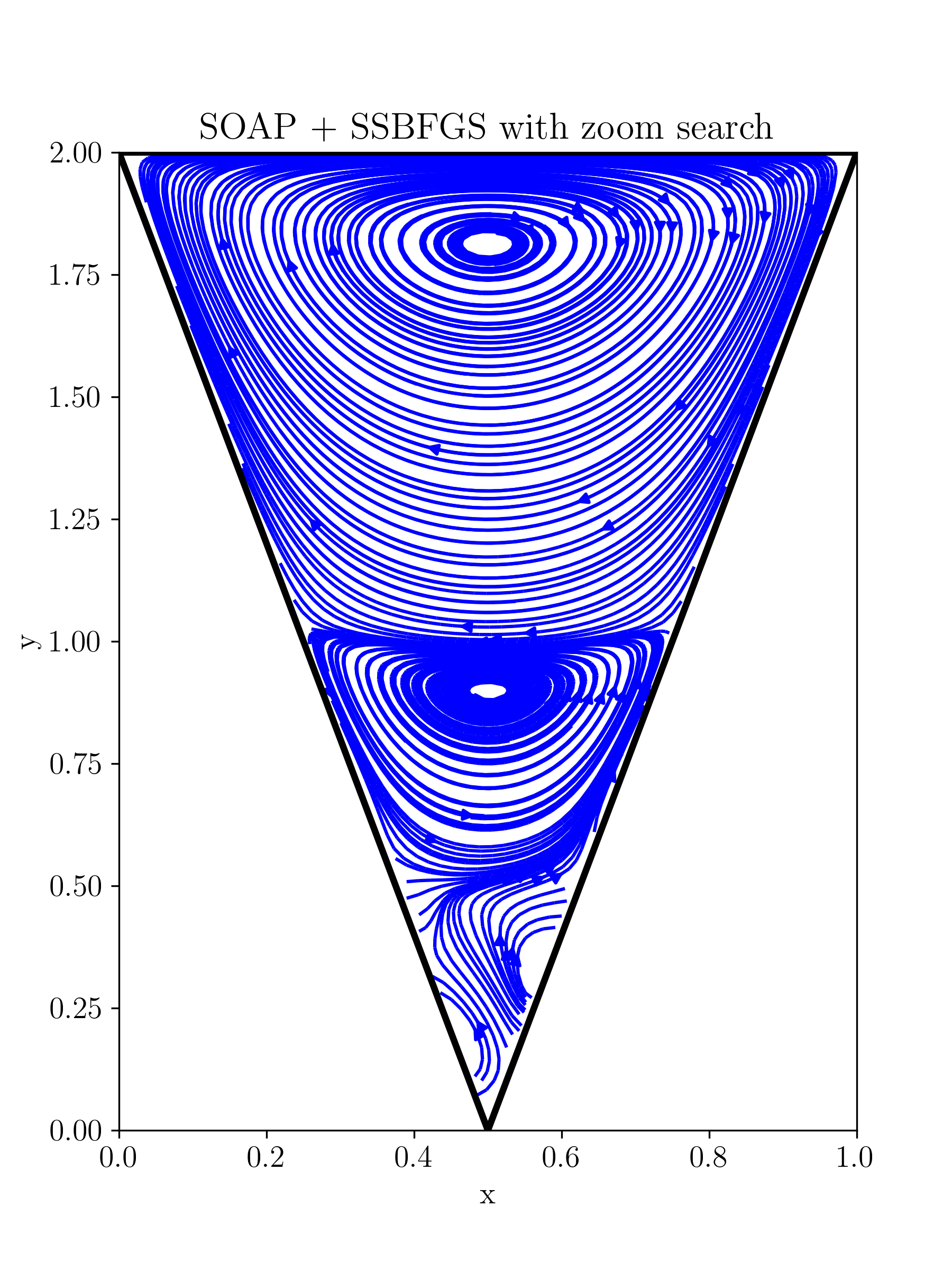}
\caption{}
\end{subfigure}

\vspace{-1mm}

\begin{subfigure}[b]{0.22\linewidth}
\includegraphics[width=\linewidth]{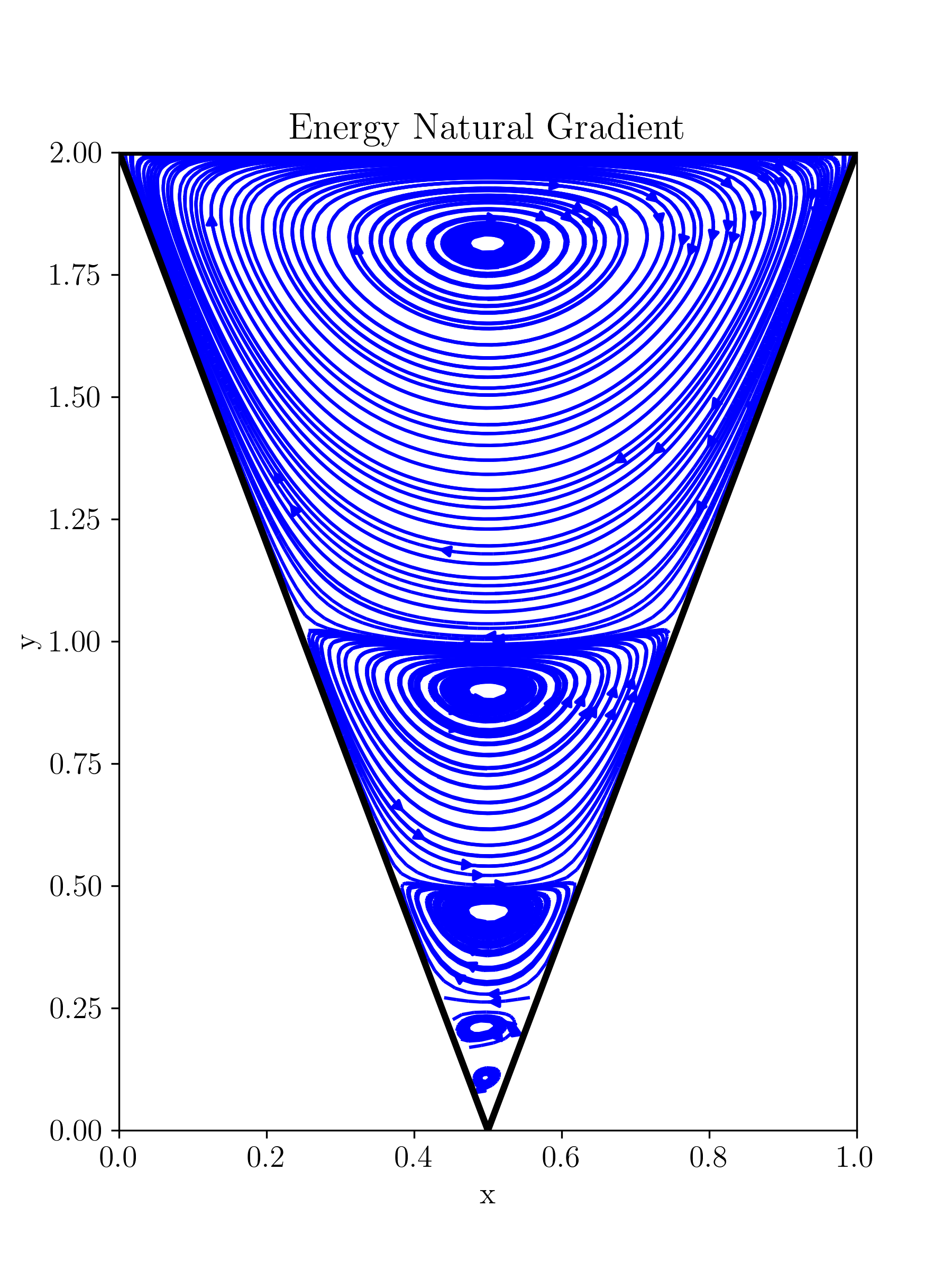}
\caption{}
\end{subfigure}\hfill
\begin{subfigure}[b]{0.35\linewidth}
\includegraphics[width=\linewidth]{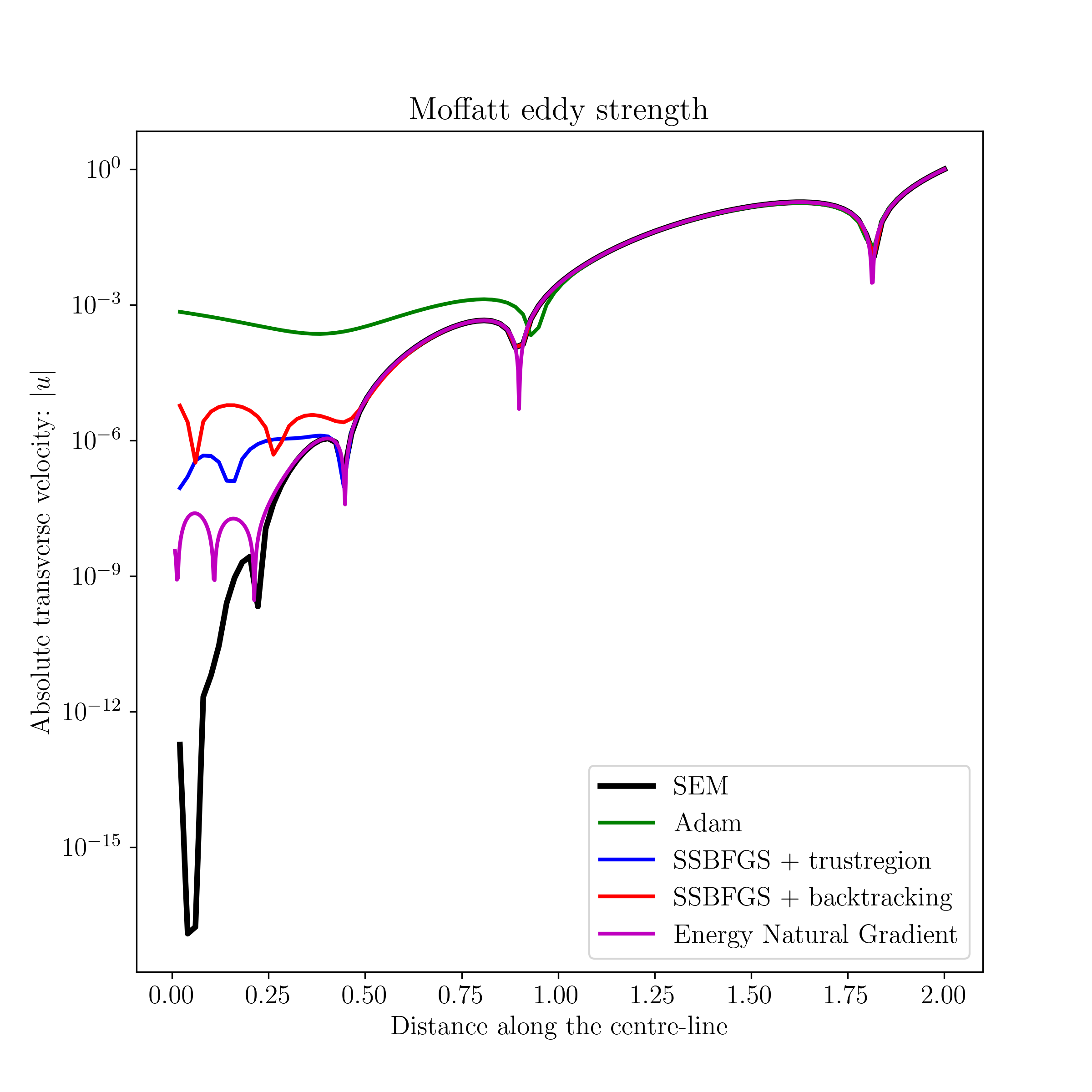}
\caption{}
\end{subfigure}

\caption{\textbf{Stokes equation:} Streamlines obtained using various optimization strategies: 
(a) Adam, (b) SOAP, (c) SOAP + SSBFGS (trust-region), 
(d) SOAP + SSBroyden (Armijo–Wolfe), 
(e) SOAP + SSBroyden (zoom), 
(f) Adam + SSBFGS (trust-region), 
(g) SOAP + SSBFGS (Wolfe), 
(h) SOAP + SSBFGS (zoom), and 
(i) NG. 
The NG achieves comparable accuracy to self-scaled optimizers while significantly reducing runtime. 
Panel (j) shows the Moffatt strength comparison. 
The reference solution is computed using a high-order spectral element method.}
\label{fig:Stokes_Domain}

\end{figure}

\paragraph{Key observations from the Stokes equation:}\
We observe that both SSBFGS and SSBroyden achieve accuracy comparable to that of the NG method across all considered line search strategies. However, the NG method exhibits significantly superior computational efficiency, with a runtime approximately 24 times faster than that of the self-scaled optimizers. 
This disparity suggests that, for elliptic problems, the NG method is better aligned with the underlying error landscape, enabling more efficient convergence. In contrast, while self-scaled optimizers attain similar levels of accuracy, they do so at a substantially higher computational cost for elliptic PDEs.

\subsection{(2+1)D viscous Burgers equation} \label{sec:2D viscous Burgers Equation}
Next, we investigate the performance of various optimizers on a parabolic 
PDE governed by the time-dependent viscous Burgers equation expressed as follows 
\begin{align}\label{eq:2D_viscous_Burger}
u_t+f(u)_x+g(u)_y=\nu\left(u_{x x}+u_{y y}\right), \quad(x, y) \in[0,1] \otimes[0,1] \text { and } t>0
\end{align}

where the fluxes in $x$ and $y$ directions are $f(u)=u^2 / 2, g(u)=u^2 / 2$, respectively and viscosity coefficient $v=0.004$. The initial and boundary conditions are obtained from the exact solution $u(x, y, t)= 1 /(1+\exp [(x+y-t) /(2 \nu)])$. In this experiment we analyze how low viscosity can be handled.

As in the 
one-dimensional inviscid case presented in Section~\ref{sec:inviscid_burgers_eq}, 
we employ flux relaxation throughout. We conduct experiments across several 
optimizer combinations --- SSBFGS, SSBroyden, and SOAP --- paired with 
various line search strategies, and additionally evaluate a NG
optimizer. All experiments use a fully connected network with 10 hidden 
layers and $\tanh$ activation functions. The training details are summarized 
in Table~\ref{tab:optimizer_comparison_2d_viscous_burger}. The results indicate 
that the Adam--SSBFGS combination with a Wolfe line search achieves a 
comparable level of accuracy to the NG optimizer; however, the 
NG optimizer requires significantly less wall time, converging 
approximately $10$--$12$ times faster than the self-scaled class of optimizers.
We present the solution of the two-dimensional viscous Burgers equation 
obtained using SOAP, SSBFGS with a Wolfe line search, and the 
NG method in Figure~\ref{fig:S2D_Viscous_Burger}. Each row 
of Figure~\ref{fig:S2D_Viscous_Burger} displays the predicted solution, the 
absolute pointwise error, and a comparison of solution profiles between 
the reference and predicted solutions along $x = 0.2$ at $t = 0.5$. The 
error is concentrated in the vicinity of the discontinuity, where SOAP 
exhibits difficulty in reducing the residual. The state-of-the-art 
accuracy for this equation has been reported at $\mathcal{O}(10^{-3})$ 
using a large number of parameters and a domain decomposition algorithm 
\cite{jagtap2020conservative}, whereas the proposed approach achieves 
a significantly lower error of $\mathcal{O}(10^{-7})$.

\begin{figure}
    \centering
    \begin{subfigure}[b]{\linewidth}
        \centering
        \includegraphics[width=\linewidth,height=0.22\textheight,keepaspectratio]{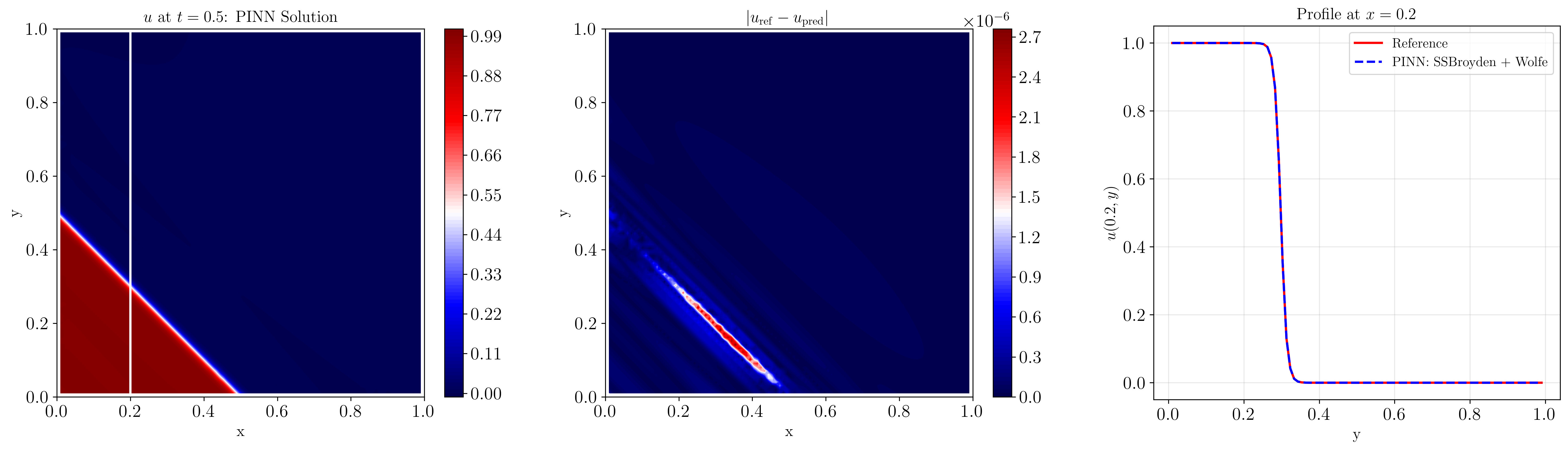}
        \caption{SSBroyden with zoom line search (best quasi-Newton + line search).}
        \label{fig:visc_burgers_a}
    \end{subfigure}
    \vspace{0.5em}
     \begin{subfigure}[b]{\linewidth}
        \centering
        \includegraphics[width=\linewidth,height=0.22\textheight,keepaspectratio]{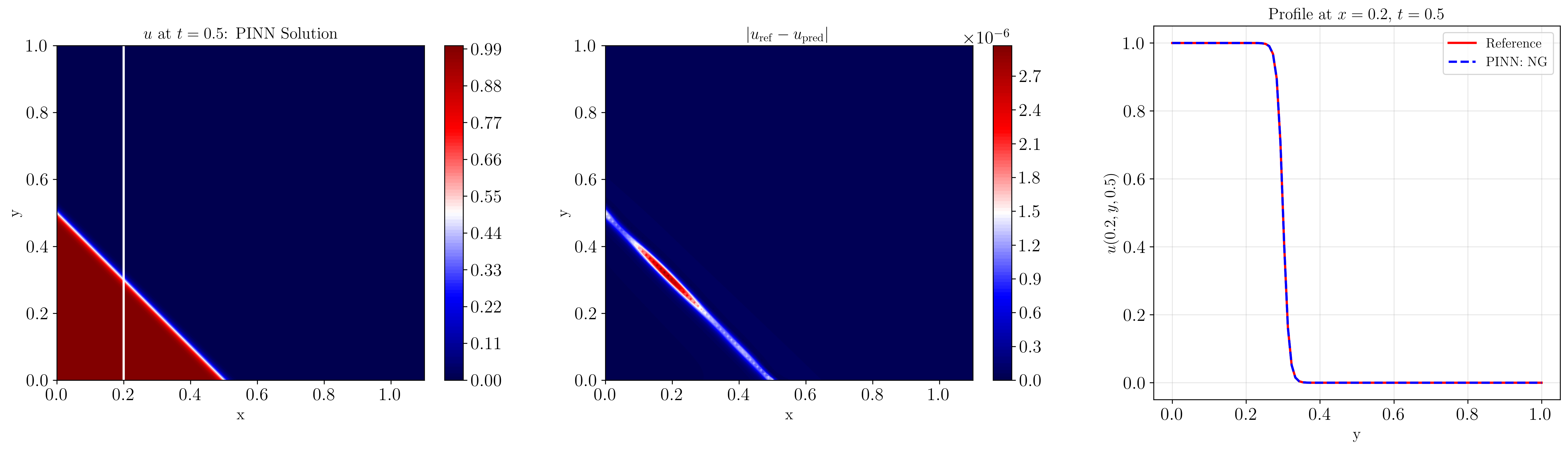}
        \caption{NG.}
        \label{fig:visc_burgers_a}
    \end{subfigure}
    \begin{subfigure}[b]{\linewidth}
        \centering
        \includegraphics[width=\linewidth,height=0.22\textheight,keepaspectratio]{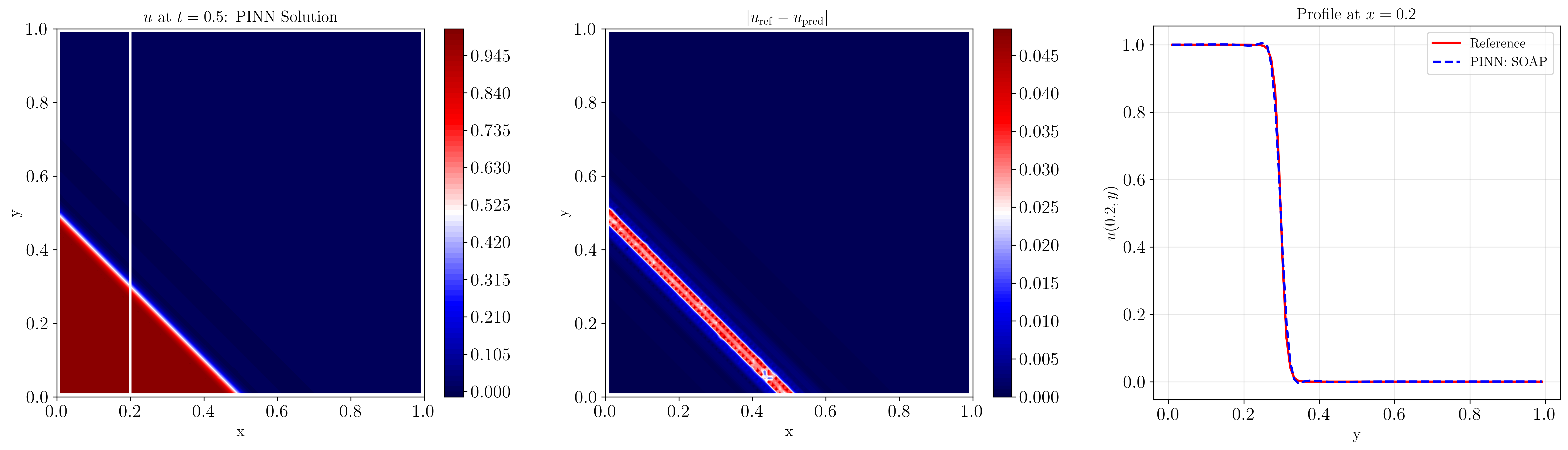}
        \caption{SOAP.}
        \label{fig:visc_burgers_b}
    \end{subfigure}
    \caption{\textbf{2D viscous Burgers equation:}
    Solution obtained using different optimization methods.
    (a) Best results from a quasi-Newton and line search method (SSBroyden with zoom line search), (b) NG and
    (c) SOAP optimization. In each panel, the left subfigure shows the PINN solution at $t=0.5$, the middle subfigure shows the spatial distribution of the absolute error (concentrated at the shock location), and the right subfigure shows the solution profile at $x=0.2$.
    The SOAP optimizer fails to converge at the shock location and exhibits slight overshoots in the shock region.
    Errors for all optimizer/line-search combinations (including SOAP) are reported in Table~\ref{tab:optimizer_comparison_2d_viscous_burger}.}
    \label{fig:S2D_Viscous_Burger}
\end{figure}
\begin{table}[H]
    \centering
    \begin{tabular}{|l|c|c|c|}
        \hline
        \rowcolor{cyan!40}
        \textbf{Optimizer [\# Iters.]} & Relative \ $L_2$  & \# parameters & Runtime (s) \\
        \hline
        SOAP (21001) & $1.7 \times 10^{-2}$ & 3881 & 414 \\
        \hline
        Adam (1001) + SSBroyden with Zoom (20000) & $1.3 \times 10^{-5}$ & 3881 & 927 \\
        \hline
        Adam (1001) + SSBroyden with Wolfe (20000) & $5.3 \times 10^{-7}$ & 3881 & 612 \\
        \hline
        Adam (1001) + SSBFGS with Zoom (20000) & $9.0 \times 10^{-7}$ & 3881 & 899 \\
        \hline
        Adam (1001) + SSBFGS with Wolfe (20000) & $5.7 \times 10^{-7}$ & 3881 & 624 \\
        \hline
         NG (20000) & $5.8 \times 10^{-7}$ & 3881 & 75 \\
        \hline
    \end{tabular}
    \caption{\textbf{Unsteady 2D viscous Burgers equation:} Performance comparison of the considered optimizers. 
Columns report the relative $L_2$ error, total number of parameters, and computational runtime in seconds.}
    \label{tab:optimizer_comparison_2d_viscous_burger}
\end{table}

\FloatBarrier
\subsection{(1+1)D inviscid Burgers equation}\label{sec:Inviscid_Burgers_Equation}
The inviscid Burgers equation is given by
\begin{equation} \label{eq:ivb}
\frac{\partial u}{\partial t} + \frac{1}{2}\frac{\partial u^2}{\partial x} = 0,
\end{equation}
where $x \in [-1,1]$ and $t \in [0,1]$, with initial condition
\[
u(x,0) = \sin(-\pi x).
\]

The inviscid Burgers equation develops discontinuities due to the intersection of characteristics, leading to the formation of shock waves. These discontinuities pose significant challenges for standard PINN training, as directly enforcing the classical PDE residual often results in instability of the optimization process. In such cases, the loss function may become degenerate, and the choice of optimizer alone is insufficient to ensure convergence to the physically correct solution. To address this issue, it is essential to incorporate conservation and entropy conditions across shocks into the PINN training framework. Enforcing these additional constraints enables the network to recover the physically admissible weak solution and improves both stability and accuracy~\cite{jagtap2022physics,LLPINNs,UrbanPons2025}. Therefore, we propose two strategies for solving the inviscid Burgers equation by introducing two new approaches aimed at improving the convergence and stability of PINN training.

\subsubsection{Inviscid Burgers equation with Roe linearization}

We follow the methodology proposed in~\cite{LLPINNs,UrbanPons2025} and apply it to the Burgers equation. First, the equation is expressed in its quasilinear form

\begin{equation}
    \frac{\partial u}{\partial t} + u \frac{\partial u}{\partial x} = 0.
\end{equation}
To impose jump conditions, we linearize the second term by changing $u$  for the shock speed in regions of potential shock formation. For Burgers equation, the velocity in terms of left and right states is very easy to obtain and is given by
\begin{equation}
    s = \frac{u_L + u_R}{2}.
\end{equation}
The regions of potential shock formation are computed in each step of the optimization process, and are defined by the following set
\begin{equation}
    \Omega_s = \left \lbrace (x,t) \in \left[-1,1\right] \times \left[0, 1 \right], \; \frac{\partial u}{\partial x} \leq -M \right \rbrace,
\end{equation}

where $M$ is a given positive constant. The values of $u_L$ and $u_R$ at any point $(x,t) \in \Omega_s$ are calculated by evaluating the network at $(x-h,t)$ and $(x+h,t)$, respectively, where $h$ is a predefined constant value. 

Hence, the PDE residual for each point $(x,t) \in \Omega$ is given by
\begin{equation}
    \mathcal{L}_\btheta(x,t) = \frac{\partial u_\btheta(x,t)}{\partial t} + \tilde{u}_\btheta(x,t)\frac{\partial u_\btheta(x,t)}{\partial x},
\end{equation}
where $u_\btheta$ represents the neural network output, and
\begin{equation}
    \tilde{u}_\btheta(x,t) =
    \begin{cases}
       \frac{u_\btheta(x+h,t) + u_\btheta(x-h,t)}{2}, \; &(x,t) \in \Omega_s, \\
       u_\btheta(x,t), \; &\text{Otherwise}
    \end{cases}
\end{equation}
We impose boundary conditions with hard-enforcement by transforming the input coordinates $(t,x)$ into $(t,\cos \pi x, \sin \pi x)$. The loss function is given by
\begin{equation}
    \mathcal{J} = \frac{1}{N_\Omega}\sum_{i=1}^{N_\Omega} \left[\mathcal{L}_\btheta(x_i,t_i) \right]^2 + \frac{1}{N_\Gamma}\sum_{i=1}^{N_\Gamma} \left[u(x_i,0)-u_\btheta(x_i,0) \right]^2,
\end{equation}
where $N_\Omega$ and $N_\Gamma$ are the number of points in the training set that belong to $\Omega$ and $\Gamma = [-1,1] \times \lbrace 0 \rbrace$.

 \begin{figure}
     \centering
     \includegraphics[width=0.90\linewidth]{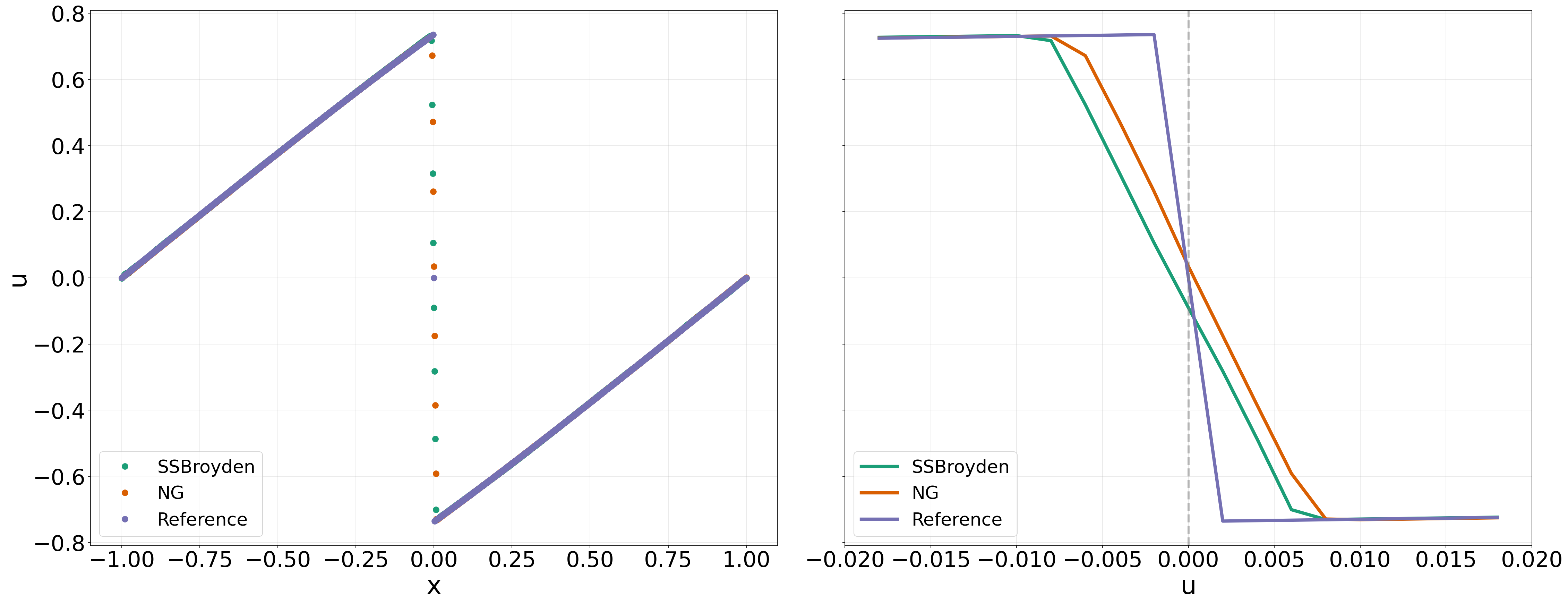}
     \caption{\textbf{Inviscid Burgers with Roe linearization:} Predicted solutions at $t=1$ with the LRPINN for both optimizers, together with a reference numerical solution obtained with a third-order WENO scheme. Left panel shows the solution for all $x \in \left[-1,1 \right]$, whereas the right panel shows a zoomed-in view of the shock region ($x=0$).}
     \label{fig:lrpinn_prediction}
 \end{figure}

We compare the performance of NG and SSBroyden optimizers. Since the inviscid Burgers equation develops discontinuities due to the formation of shock waves, one typically requires a sufficiently large number of training points to adequately sample regions near the shocks and resolve them accurately. Hence, we start by analyzing both optimizers by comparing the precision and the training time achieved in both cases as a function of the number of interior points $N_\Omega$.  Table~\ref{tab:Roe_linearization_collocation_points} shows the relative $L_1$ and $L_2$ errors of the PINN solution compared against a reference solution (obtained with a third-order WENO scheme~\cite{hesthaven2017numerical} with $1000$ spatial points) and the training time for both optimizers. 

\begin{table}[h!]
    \centering
    \scalebox{0.77}{%
    \begin{tabular}{|l|c|c|c|c|}
        \hline
        \rowcolor{cyan!40}
        \textbf{Optimizer} & Relative \ $L_1$  & Relative \ $L_2$ & \# collocation points & Training time (s) \\
        \hline
        \rowcolor{cyan!15}
        \textbf{SSBroyden} & $(1.2 \pm 0.2) \times 10^{-2}$ & $(7.8 \pm 1.9) \times 10^{-2}$ & 625   & 20 \\
        \rowcolor{white}
                           & $(1.2 \pm 0.2) \times 10^{-2}$ & $(9.2 \pm 2.5) \times 10^{-2}$ & 1250  & 21 \\
        \rowcolor{cyan!15}
                           & $(9.4 \pm 1.8) \times 10^{-3}$ & $(8.2 \pm 2.0) \times 10^{-2}$ & 2500  & 22 \\
        \rowcolor{white}
                           & $(8.9 \pm 1.9) \times 10^{-3}$ & $(7.9 \pm 1.3) \times 10^{-2}$ & 5000  & 26 \\
        \rowcolor{cyan!15}
                           & $(7.4 \pm 1.0) \times 10^{-3}$ & $(7.8 \pm 1.4) \times 10^{-2}$ & 10000 & 31 \\
        \hline
        \rowcolor{white}
        \textbf{NG}        & $(1.5 \pm 0.6) \times 10^{-2}$ & $(2.6 \pm 1.8) \times 10^{-1}$ & 625   & 54 \\
        \rowcolor{cyan!15}
                           & $(8.5 \pm 2.3) \times 10^{-3}$ & $(9.1 \pm 3.1) \times 10^{-2}$ & 1250  & 73 \\
        \rowcolor{white}
                           & $(1.7 \pm 0.9) \times 10^{-2}$ & $(1.8 \pm 0.8) \times 10^{-1}$ & 2500  & 113 \\
        \rowcolor{cyan!15}
                           & $(6.8 \pm 1.0) \times 10^{-3}$ & $(6.8 \pm 0.4) \times 10^{-2}$ & 5000  & 121 \\
        \rowcolor{white}
                           & $(6.4 \pm 0.4) \times 10^{-3}$ & $(7.1 \pm 0.5) \times 10^{-2}$ & 10000 & 193 \\
        \hline
    \end{tabular}%
    }
    \caption{\textbf{Inviscid Burgers with Roe linearization:} Performance comparison of the SSBroyden and NG optimizers as a function of the number of collocation points. The columns report the relative $L_1$ and $L_2$ errors, the number of collocation points $N_\Omega$, and the computational runtime in seconds. In all cases, the neural network has 6 hidden layers with 30 neurons each ($4801$ trainable parameters). The training process consists of $5000$ iterations for all cases. The collocation points are resampled every $10$ iterations for SSBroyden and every iteration for NG. SSBroyden is combined with backtracking line search, as implemented in~\cite{dennis1996numerical}. For each setting, the training procedure was repeated five times, and the reported errors correspond to the mean values across these runs, together with their associated standard deviations.}
    \label{tab:Roe_linearization_collocation_points}
\end{table}

To compare the LRPINN results obtained with both optimizers to the numerical reference solution, Figure \ref{fig:lrpinn_prediction} shows the LRPINN predictions at $t=1$ together with the corresponding WENO solution. In addition, we provide a zoomed-in view of the shock region ($x=0$) to better assess the amount of numerical diffusion introduced by the LRPINN. As observed, the location of the shock wave and the jump across the shock are captured correctly. However, the shock predicted by the LRPINN approach appears more smeared compared to the sharper profile produced by the WENO scheme.

\subsubsection{Inviscid Burgers equation using flux relaxation with entropy inequality} \label{sec:inviscid_burgers_eq}

We introduce a novel approach for solving the inviscid Burgers equation using PINNs. To cast the equation in conservative form, we incorporate an algebraic constraint directly into the loss function. Specifically, we employ two neural networks, one to approximate the solution \(u(x,t)\) and another to approximate the flux \(F = u^2\). However, enforcing the algebraic constraint alone does not ensure entropy-compliant solutions. To impose the entropy condition, we utilize the Tadmor entropy inequality \cite{tadmor1987numerical}, defined by the entropy pair 
\[
\eta(u) = \frac{u^2}{2}, \quad \psi(u) = \frac{u^3}{3}.
\] 
The inequality requires that
\[
\frac{\partial \eta}{\partial t} + \frac{\partial \psi}{\partial x} \leq 0,
\]
with equality in smooth regions and dissipation across shocks. A loss term is added to penalize any violations of this inequality.

We express the entropy inequality as
\begin{align}\label{eq:inequality}
\mathcal{R}_{\mathcal{I}} := \frac{\partial \eta}{\partial t} + \frac{\partial \psi}{\partial x} \leq 0.
\end{align}

The total loss function for the entropy-preserving PINN with flux-conserving constraint is formulated as
\begin{align}\label{eq:loss_entropy_burgers}
\mathcal{L}(\eta, \psi, u; \mathbf{\theta_1, \theta_2}) = \lambda_i \mathcal{L}_{ic} + \lambda_b \mathcal{L}_{bc} + \lambda_f \mathcal{L}_{f} + \lambda_{\mathcal{I}} \mathcal{L}_{\mathcal{I}} + \lambda_F \mathcal{L}_F,
\end{align}
where \(\mathcal{L}_{ic}\) and \(\mathcal{L}_{bc}\) denote the initial and boundary condition losses. The flux residual loss \(\mathcal{L}_F\) enforces
\[
\frac{\partial u(x, t ; \theta_1)}{\partial t} + \frac{\partial F(x, t ; \theta_2)}{\partial x} = 0,
\]
while the entropy loss \(\mathcal{L}_{\mathcal{I}}\) penalizes positive entropy residuals:
\[
\mathcal{L}_{\mathcal{I}} = \max(0, \mathcal{R}_{\mathcal{I}}).
\]

The algebraic flux constraint is enforced via
\[
F(x, t ; \theta_2) = u^2(x, t ; \theta_1),
\]
where \(\theta_1\) and \(\theta_2\) are the parameters of the two neural networks. The coefficients \(\lambda_i\), \(\lambda_b\), \(\lambda_f\), \(\lambda_{\mathcal{I}}\), and \(\lambda_F\) weight the contributions of the respective loss terms.

Next, we conduct computational experiments to demonstrate the convergence of the proposed method. The initial and boundary conditions are the same as those specified in Equation~\eqref{eq:ivb}. Both neural networks consist of four fully connected hidden layers, each with 50 neurons. To satisfy the residual and entropy inequality conditions, we sample 50,000 points using Latin hypercube sampling, and 300 points are used for the initial and boundary conditions. We explore different combinations of optimizers and line-search strategies for quasi-Newton methods, with the resulting convergence plots shown in Figure~\ref{fig:Losses_IVB_FR_EI}. The training procedure starts with a warm-up phase using the first-order Adam optimizer, followed by a transition to quasi-Newton optimizers with various line-search routines. This transition is indicated by the vertical dashed line in the convergence plots. For comparison, we also evaluate the SOAP optimizer, which incorporates some second-order features but converges more slowly than the quasi-Newton methods. Among the quasi-Newton approaches, the SSBroyden optimizer combined with a zoom line-search exhibits the best overall performance.

To validate the PINN solution, we compare it against numerical solutions obtained from the DGSEM scheme~\cite{chan2018discretely, ranocha2021adaptive}, the third-order WENO scheme, and the analytical solution. Since the inviscid Burgers' equation develops shocks in finite time, its physically relevant solution must be interpreted in the entropy sense. In this work, we compute the analytical entropy solution using Algorithm~\ref{alg:burgers_entropy}, which is based on the method of characteristics combined with shock detection and entropy enforcement. The characteristic map is given by
\[
x = \xi - t\sin(\pi\xi),
\]
and is derived from the initial condition
\[
u_0(x) = -\sin(\pi x).
\]
For $t \le t_{\text{shock}} = 1/\pi$, the solution remains single-valued and is obtained by inverting the characteristic map to recover the footpoint $\xi$, followed by evaluating
\[
u(x,t) = u_0(\xi).
\]
For $t > t_{\text{shock}}$, characteristics intersect and the classical solution becomes multivalued. The shock location is determined by solving for the critical characteristic point $\xi_s(t)$, after which the spatial domain is partitioned into left, right, and shock regions. The entropy solution is then constructed by restricting the characteristic inversion to admissible intervals on either side of the shock while excluding the multivalued region. Numerical stability is ensured by detecting sign changes in the characteristic mapping and applying a bisection root-finding method to compute the inverse mapping.

\begin{algorithm}[t]
\small
\DontPrintSemicolon
\caption{Analytical Entropy Solution of Burgers' Equation}
\label{alg:burgers_entropy}
\KwIn{$x\in[-1,1]$, $t>0$}
\KwOut{$u(x,t)$}

$u_0(x) \leftarrow -\sin(\pi x)$\;
$\text{CharMap}(\xi,t) = \xi - t\sin(\pi\xi)$\;
$t_{\text{shock}} = 1/\pi$\;

Compute $(x_{\min},x_{\max})$ from $\text{CharMap}$\;
\If{$x \notin [x_{\min},x_{\max}]$}{
    \Return NaN
}

\If{$t \le t_{\text{shock}}$}{
    Find $\xi$ such that $\text{CharMap}(\xi,t)=x$\;
    \Return $u_0(\xi)$
}

Find $\xi_s$ such that $\text{CharMap}(\xi_s,t)=0$\;
$x_s = \text{CharMap}(\xi_s,t)$\;

\If{$x_s < x < -x_s$}{
    \Return NaN
}
\eIf{$x \le x_s$}{
    Find $\xi \in [-1,-\xi_s]$\;
}{
    Find $\xi \in [\xi_s,1]$\;
}

\Return $u_0(\xi)$\;

\end{algorithm}

The comparisons include relative \(L_2\) errors, total model parameters, and runtime for different optimizer and line-search combinations. All models use the same network architecture with 10,401 parameters. The absolute and relative convergence tolerances for both SSBFGS and SSBroyden are set to \(10^{-14}\). Figure~\ref{fig:Plot_Sol_Burger_Entropy} shows the solution at \(t=1\) obtained using a PINN trained with the SSBroyden optimizer and zoom line-search, incorporating flux relaxation and the entropy inequality. \textbf{Panel (a)} compares the PINN solution with a DGSEM solution of order \(N=3\), where \(N\) denotes the polynomial degree within each element. To accurately resolve shocks, the DGSEM employs a shock indicator following~\cite{hennemann2021provably}. For further comparison, the inviscid Burgers equation is also solved using a third-order WENO scheme~\cite{hesthaven2017numerical}. The relative \(L_2\) error between the PINN and DGSEM solutions is \(0.70\%\). \textbf{Panel (b)} shows the PINN solution compared with the WENO solution, yielding a relative \(L_2\) error of \(0.7056\%\), while \textbf{panel (c)} compares it with the analytical solution, resulting in a relative \(L_2\) error of \(0.7172\%\). \textbf{Panel (d)} provides a zoomed-in view near the shock, indicated by the dashed box in \textbf{panel (a)}, demonstrating that DGSEM captures the shock more accurately than both PINN and WENO, closely matching the analytical solution. These discrepancies near the shock dominate the errors observed in \textbf{panels (a)--(c)}. Finally, \textbf{panels (d), (e), and (f)} present the pointwise \(L_1\) errors corresponding to the comparisons in \textbf{panels (a), (b), and (c)}, respectively, with the maximum error occurring at the shock location. Overall the error obtained by using this method is much lower than Roe Linearization approach.

\begin{figure}
    \centering
    \begin{minipage}[b]{\linewidth}
        \centering
        \includegraphics[width=\linewidth]{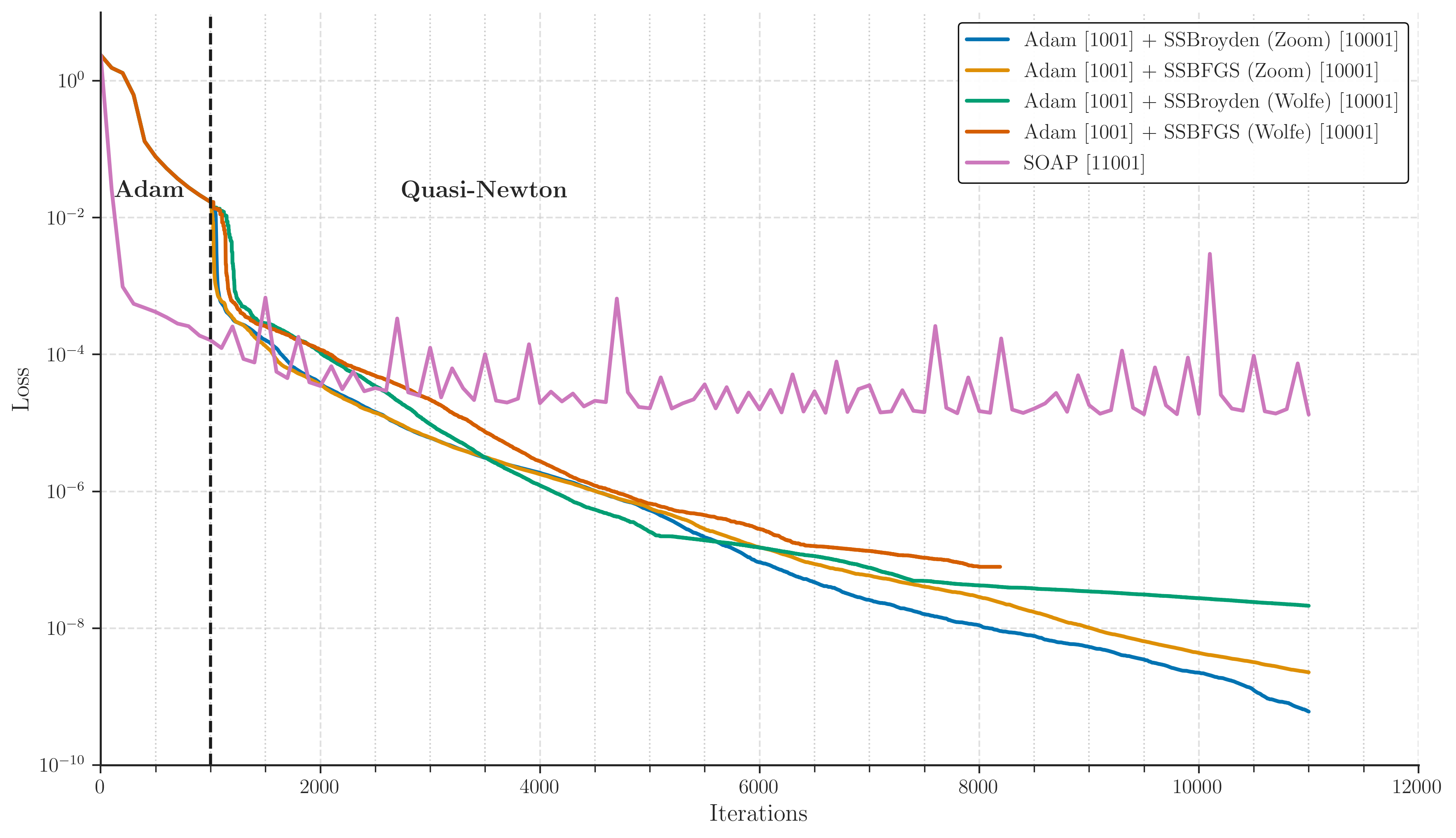}
        \caption{\textbf{Inviscid Burgers equation using relaxation and entropy inequality:} 
        Loss history for the inviscid Burgers flow obtained using different combinations of optimizers and line-search strategies, incorporating flux relaxation and an entropy inequality. The training procedure begins with a warm-up phase using the first-order Adam optimizer, followed by a transition to quasi-Newton optimizers combined with various line-search routines. The optimizer transition is indicated by the vertical dashed line. For comparison, we also report the performance of the SOAP optimizer, which exhibits slower convergence compared to the quasi-Newton methods. Among the quasi-Newton approaches, the SSBroyden optimizer coupled with a zoom line-search strategy exhibits the best overall performance.}
        \label{fig:Losses_IVB_FR_EI}
    \end{minipage}

    \vspace{1em}

    \begin{minipage}[b]{\linewidth}
        \centering
        \rowcolors{2}{cyan!15}{white}
        \scalebox{0.74}{
        \begin{tabular}{|l|c|c|c|}
            \hline
            \rowcolor{cyan!40}
            \textbf{Optimizer [\# Iters.]} & \textbf{Relative \(L_2\): $u$ [\textcolor{blue}{DGSEM}}, \textcolor{red}{WENO}, \textcolor{black}{Ana.}] & \textbf{\# parameters} & \textbf{Training time (s)} \\
            \hline
            SOAP (11001) & [\textcolor{blue}{$1.3 \times 10^{1}$}, \textcolor{red}{$1.28 \times 10^{1}$}, \textcolor{red}{$3.7 \times 10^{0}$}] & 10401 & 142 \\
            \hline
             Adam (1001) + SSBroyden with Zoom (10000) & [\textcolor{blue}{$7.1 \times 10^{-3}$}, \textcolor{red}{$7.1 \times 10^{-3}$}, \textcolor{black}{$7.3 \times 10^{-3}$}] & 10401 & 364 \\
            \hline
            Adam (1001) + SSBroyden with Wolfe (10000) & [\textcolor{blue}{$7.1 \times 10^{-3}$}, \textcolor{red}{$6.6 \times 10^{-3}$}, \textcolor{black}{$6.8 \times 10^{-3}$}]  & 10401 & 311 \\
            \hline
            Adam (1001) + SSBFGS with Zoom (10000) & [\textcolor{blue}{$6.9 \times 10^{-3}$}, \textcolor{red}{$6.8 \times 10^{-3}$}, \textcolor{black}{$7 \times 10^{-3}$}] & 10401 & 436 \\
           \hline
            Adam (1001) + SSBFGS with Wolfe (7190) & [\textcolor{blue}{$7.1 \times 10^{-3}$}, \textcolor{red}{$6.7 \times 10^{-3}$}, \textcolor{black}{$6.9 \times 10^{-3}$}] & 10401 & 259 \\
            \hline
        \end{tabular}}
        \captionof{table}{\textbf{Inviscid Burgers equation using relaxation and entropy inequality:}
      Comparison of relative $L_2$ errors, total model parameters, and runtime for different combinations of optimizers and line-search strategies. All models employ the same network architecture with $10{,}401$ parameters.
The absolute and relative convergence tolerances for both SSBFGS and SSBroyden are set to $10^{-14}$.
All runs are performed for $11{,}001$ training iterations.All computations are carried out on an NVIDIA H100 GPU with $76\%$ persistent memory usage (80~GB total) and approximately $1\%$ compute utilization, achieving 256~GFLOPs out of a theoretical peak of 25.61~TFLOPs.
The use of the H100 GPU is intended to maintain a uniform computing platform across all benchmarks.
Consequently, the H100 GPU is overprovisioned for this problem, as the overall runtime is dominated by latency rather than compute throughput.
}
\label{tab:Error_IVB_FR_EI}
    \end{minipage}
\end{figure}

\begin{figure}
    \centering  \includegraphics[width=\linewidth,height=0.60\textheight,keepaspectratio]{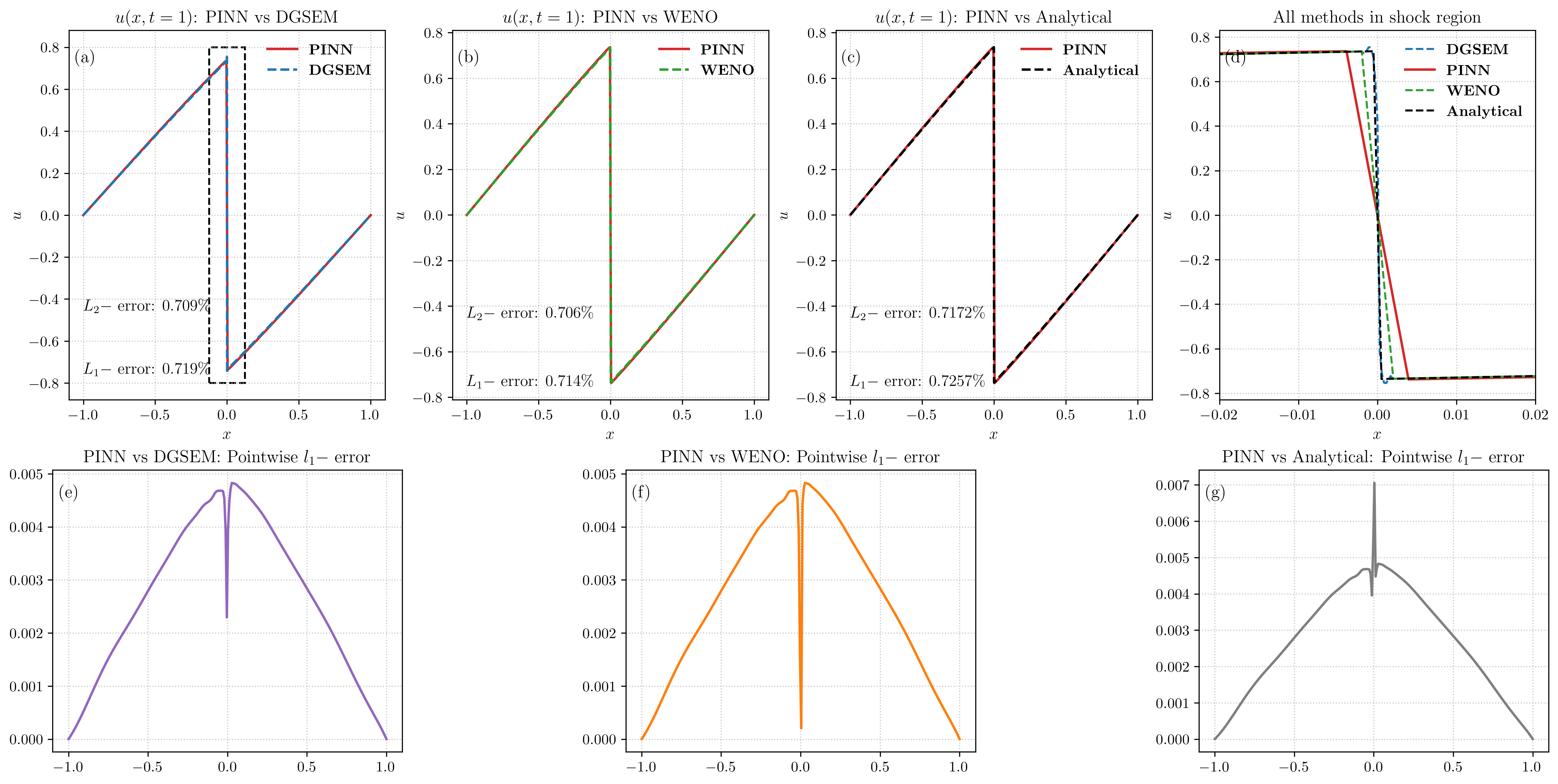}
     \captionsetup{font=small}
    \caption{\textbf{Inviscid Burgers equation using relaxation and entropy inequality:} The solution of Burgers’ equation, incorporating an entropy inequality and flux relaxation, is obtained using a physics-informed neural network (PINN) trained with the SSBroyden optimizer and a zoom line-search strategy. \textbf{Panel (a)} compares the PINN solution with a DGSEM solution of order \(N=3\), where \(N\) represents the polynomial degree used to approximate the solution within each element. To accurately resolve the shock, the DGSEM employs a shock indicator following \cite{hennemann2021provably}. For further comparison, the inviscid Burgers’ equation is also solved using a third-order WENO scheme \cite{hesthaven2017numerical}. The relative \(L_2\) error between the PINN and DGSEM solutions shown in \textbf{panel (a)} is \(0.70\%\).
\textbf{Panel (b)} shows the comparison between the PINN solution and the third-order WENO solution, yielding a relative \(L_2\) error of \(0.7056\%\), while \textbf{panel (c)} presents the comparison with the analytical solution, resulting in a relative \(L_2\) error of \(0.7172\%\). \textbf{Panel (d)} provides a zoomed-in view near the shock, indicated by the dashed box in \textbf{panel (a)}. This close-up demonstrates that the DGSEM method captures the shock more accurately than both the PINN and WENO schemes and closely matches the analytical solution. Consequently, the discrepancies near the shock dominate the errors observed in \textbf{panels (a), (b), and (c)}.
Finally, \textbf{panels (d), (e), and (f)} present the pointwise \(L_1\) errors corresponding to the comparisons shown in \textbf{panels (a), (b), and (c)}, respectively, with the maximum error occurring at the shock location.
}
\label{fig:Plot_Sol_Burger_Entropy}
\end{figure}

\paragraph{Key observations from the Inviscid Burgers equation:}
Two different approaches were investigated for solving the inviscid Burgers equation with PINNs. LRPINN strategy and the relaxation–entropy inequality formulation. The Roe-based approach improves stability by incorporating shock-speed information into the residual, allowing the network to capture discontinuities more robustly than the classical strong-form PINN. However, some numerical diffusion remains near the shock. The relaxation–entropy inequality formulation further enhances physical consistency by enforcing conservation and entropy admissibility directly in the loss function. This approach yields significantly improved accuracy, particularly in the shock region, and produces solutions that are closer to high-resolution DGSEM and WENO reference solutions. If these modifications are not incorporated into the fundamental PINN algorithm, the training process degenerates, and convergence to the true solution is not achieved regardless of the optimizer used. This behavior is consistent with classical numerical methods for hyperbolic conservation laws, where enforcing conservation properties—such as energy or entropy—is essential \cite{hesthaven2017numerical}. Without these constraints, the solution may become non-physical or exhibit instability, potentially leading to blow-up. 

For the LRPINN algorithm, SSBroyden exhibits a stable behavior across all configurations, achieving a relative $L_2$ error of approximately $0.08$ with a comparatively low computational cost. In contrast, the NG optimizer shows larger variability for smaller number of interior points, as reflected by the higher standard deviations. Although NG reaches slightly lower errors for larger numbers of interior points (e.g. $N_\Omega =5000$), this improvement comes at the expense of a significantly higher runtime. These results indicate that SSBroyden provides a more robust and computationally efficient optimization strategy, while NG achieves competitive accuracy only when a sufficiently large number of training points is used.

\paragraph{Key observations from the (2+1)D Viscous Burgers equation:}
\noindent
The two-dimensional unsteady viscous Burgers equation is parabolic in nature; as a consequence, even a small amount of viscosity introduces sufficient regularization into the system. This inherent regularization leads to a smoother and more well-conditioned loss landscape during optimization. As a result, self-scaled optimizers are able to achieve a very high level of accuracy. Furthermore, a comparable level of accuracy is also observed when using NG methods. However, an important distinction lies in computational efficiency: while NG methods recover essentially the same level of accuracy as self-scaled optimizers, they do so at nearly an order of magnitude (approximately $10\times$) lower computational cost. In contrast, the SOAP optimizer fails to attain a similar level of performance. Specifically, it does not reach accuracy comparable to either the self-scaled optimizers or the NG methods, indicating limitations in its effectiveness for this class of problems.

\FloatBarrier
\subsection{(1+1)D Euler equations } \label{sec:euler_equation}

Next, we consider a general system of one-dimensional conservation laws of the form
\begin{equation}\label{eq:conservative1D}
    \frac{\partial \mathbf{U}}{\partial t} + \frac{\partial \mathbf{F}}{\partial x} = 0,
\end{equation}
where $\mathbf{U}$ denotes the vector of conserved variables and $\mathbf{F}(\mathbf{U})$ represents the corresponding flux vector.

As a representative example, we focus on the compressible Euler equations, which govern the dynamics of an inviscid, adiabatic flow. In this setting, the conserved variables and their associated fluxes are given by
\begin{equation}
    \mathbf{U} =
    \begin{pmatrix}
        \rho \\
        \rho u \\
        E
    \end{pmatrix},
    \qquad 
    \mathbf{F}(\mathbf{U}) =
    \begin{pmatrix}
        \rho u \\
        \rho u^2 + p \\
        u(E + p)
    \end{pmatrix},
\end{equation}
where $\rho$ denotes the density, $u$ the velocity, $E$ the total energy density, and $p$ the pressure. To close the system, an equation of state relating $E$, $p$, and $\rho$ is required. For simplicity, we assume an ideal adiabatic gas, for which
\begin{equation}
    E = \frac{1}{2}\rho u^2 + \frac{p}{\gamma - 1},
\end{equation}
with $\gamma = 1.4$ representing the ratio of specific heats.

Similar to the inviscid Burgers equation, this system also generates shocks and contact waves. Simply employing advanced optimizers is not sufficient for the PINN to converge, as these discontinuities cause rapid degeneration of the loss function. To address this challenge, we propose two novel approaches specifically designed to handle such discontinuous solutions, representing a fundamental contribution to the development of PINN architectures.
Two distinct approaches were investigated to handle discontinuities within the PINN framework, The Roe linearization strategy and the HLLC flux-based formulation. Both methods aim to incorporate physically consistent shock-capturing mechanisms into the residual construction, thereby improving stability and convergence in the presence of shocks and contact discontinuities.

\subsubsection{Roe linearization approach}
In particular, if we consider that $\mathbf{F} = \mathbf{F}(\mathbf{U})$, then Equation~\eqref{eq:conservative1D} can also be written in the following quasilinear form
\begin{equation}\label{eq:quasilinear1D}
    \frac{\partial \mathbf{U}}{\partial t} + \mathbf{A}(\mathbf{U})\frac{\partial\mathbf{U}}{\partial x} = 0,
\end{equation}
where $\mathbf{A} = \mathbf{A}(\mathbf{U})$ is the Jacobian of $\mathbf{F}$. Here we describe briefly a way to generalize the methodology described before to handle systems of equations like this. We refer the reader to~\cite{UrbanPons2025} for more details, as well as to see its adaptation for problems with more than one dimension.

 The Jacobian matrix $\mathbf{A}$ for this equation of state is given by
\begin{equation}\label{eq:jacobian_euler}
\mathbf{A}(\mathbf{U}) = 
\begin{bmatrix}
0 & 1 & 0 \\
\frac{\gamma-3}{2}u^2 & -\left(\gamma-3 \right) u & \gamma -1 \\
u \left(-H + \frac{\gamma -1}{2}u^2 \right) & H - \left(\gamma -1 \right)u^2 & \gamma u
\end{bmatrix},
\end{equation}
where $H$ is the total specific enthalpy
\begin{equation}\label{eq:specific_enthalpy}
    H = \frac{E + p}{\rho} = \frac{a^2}{\gamma-1} + \frac{u^2}{2},
\end{equation}
and $a$ is the sound speed
\begin{equation}\label{eq:a_euler}
    a = \sqrt{\frac{\gamma p}{\rho}}.
\end{equation}

 Shock formation is physically characterized by a strong compression of fluid particles, so the material derivative of the density $D\rho/Dt$ should be large and positive there. Using the definition of the material derivative $D/Dt = \partial/ \partial t + u \partial /\partial x$ together with the Euler equation for the density, we get
\begin{equation*}
    \frac{D \rho}{Dt} + \rho \frac{\partial u}{\partial x} = 0.
\end{equation*}
Then, we can characterize shock regions by noting where the spatial derivative of the velocity field is large and negative. So, we define them with the following set
\begin{equation}
    \Omega_s = \left \lbrace (x,t) \in \Omega, \; \frac{\partial u}{\partial x} \leq -M \right \rbrace,
\end{equation}
where $M$ is a positive constant. 

Now, for all $(x,t) \in \Omega_s$, we modify the PDE by substituting the original Jacobian $\mathbf{A}$ in \eqref{eq:quasilinear1D} with a suitable modification of it. 
Denoting this modification by $\tilde{\mathbf{A}}$, it should follow (see \cite{ROE1981357})
\begin{itemize}
    \item $\mathbf{\tilde{A}}$ should be diagonalizable with real eigenvalues, in the same way as $\mathbf{A}$.
    \item Consistency: If the jump of the solution goes to zero, then $\mathbf{\tilde{A}} \rightarrow \mathbf{A}$.
    \item Conservation: For left and right states $\mathbf{U}_L$ and $\mathbf{U}_R$, then $\mathbf{F}(\mathbf{U}_R) - \mathbf{F}(\mathbf{U}_R) = \mathbf{\tilde{A}}(\mathbf{U}_R - \mathbf{U}_L)$. 
\end{itemize}
A well-known choice in classical numerical methods is the Roe Jacobian matrix \cite{ROE1981357}, which is defined as
\begin{equation}
    \mathbf{A}_{\text{Roe}} = \mathbf{A}(\mathbf{\tilde{U}}).
\end{equation}
That is, we evaluate the Jacobian matrix defined in \eqref{eq:jacobian_euler} at the vector $\mathbf{\tilde{U}}$, which results from substituting the velocity, the enthalpy and the sound speed by the following averages
\begin{align}
    \tilde{u} &= \frac{\sqrt{\rho_L}u_L + \sqrt{\rho_R}u_R}{\sqrt{\rho_L} + \sqrt{\rho_R}}, \\
    \tilde{H} &= \frac{\sqrt{\rho_L}H_L + \sqrt{\rho_R}H_R}{\sqrt{\rho_L} + \sqrt{\rho_R}}, \\
    \tilde{a} &= \sqrt{\left(\gamma -1 \right)\left(\tilde{H} - \frac{\tilde{u}^2}{2} \right)}
\end{align}

Now, the PDE residuals for each point $(x,t) \in \Omega$ is given by
\begin{equation}\label{eq:lrpinn_residuals}
    \mathcal{L}_\theta(x,t) = \frac{\partial \mathbf{U}_\theta(x,t)}{\partial t} + \mathbf{\tilde{A}}_\theta\frac{\partial \mathbf{U}_\theta(x,t)}{\partial x},
\end{equation}
where $\mathbf{U}_\theta$ represents the neural network parametrized with trainable variables $\Theta$, and
\begin{equation}
    \mathbf{\tilde{A}}_\theta=
    \begin{cases}
       \mathbf{ \mathbf{A}_{\text{Roe}}}, \; &(x,t) \in \Omega_s, \\
       \mathbf{A}(\mathbf{U}_\theta), \; &\text{Otherwise}.
    \end{cases}
\end{equation}
The left and right states calculated by evaluating the neural network at $x \pm h$. The resulting algorithm is summarized in Algorithm 2 of \cite{UrbanPons2025}. 
\begin{figure}[h!]
      \centering
      \includegraphics[width=0.99\linewidth]{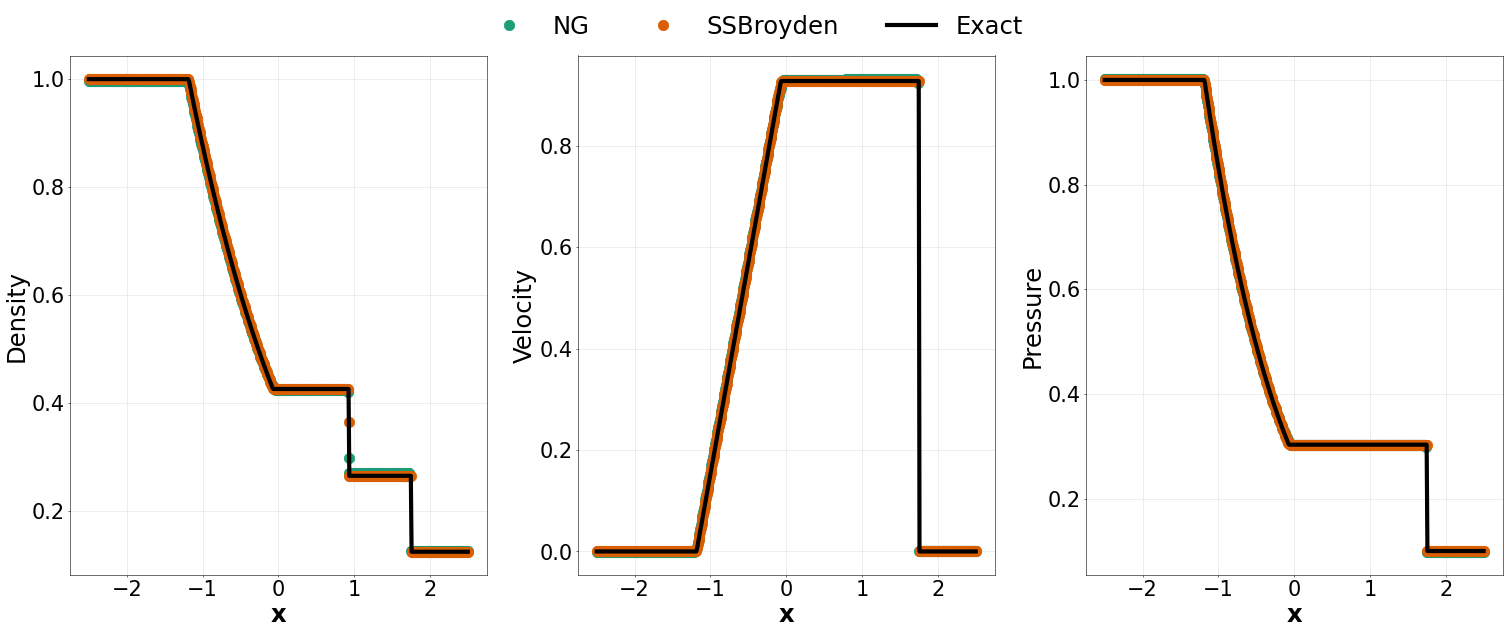}
      \caption{\textbf{1D Euler equations with Roe linearization:} Neural network predictions for the density, velocity and pressure at $t=1$. The exact solution is also plotted for reference.}
      \label{fig:lrpinn_sod_prediction}
  \end{figure}
We consider the classical Sod shock tube, with initial condition
\begin{equation}
    (\rho,u,p) =
    \begin{cases}
        (1,0,1) \: &x< 0, \\
        (0.125,0,0.1) & x>0 
    \end{cases}
\end{equation}
defined in the domain $\Omega = [-2.5,2.5] \times[0,1]$. Following the same strategy as in \cite{UrbanPons2025}, we train for $1000$ iterations using the following viscous regularization
\begin{equation}
    \frac{\partial \mathbf{U}}{\partial t} + \frac{\partial \mathbf{F}}{\partial x} = \nu \frac{\partial^2 \mathbf{U}}{\partial x^2},
\end{equation}
with $\nu = 0.003$, and then we eliminate the viscosity by using the Roe linearization method described before. To homogenize the PDE
residuals in the entire domain, we multiply point by point \eqref{eq:lrpinn_residuals} by gradient-annihilated factor introduced in \cite{GAPINN}
\begin{equation}
    \alpha(x,t) = \frac{1}{a \left| \frac{\partial u_\theta}{\partial x}\right| + 1},
\end{equation}
where $a$ is an hyperparameter. We choose $(M,h,a) = (0.001,0.0175,0.01)$ for this problem. Finally, points are resampled every 100 iterations, where every batch consists on $15000$ points in the domain.
\begin{figure}
    \centering
    \begin{minipage}[b]{\linewidth}
        \centering
        \includegraphics[width=0.99\linewidth]{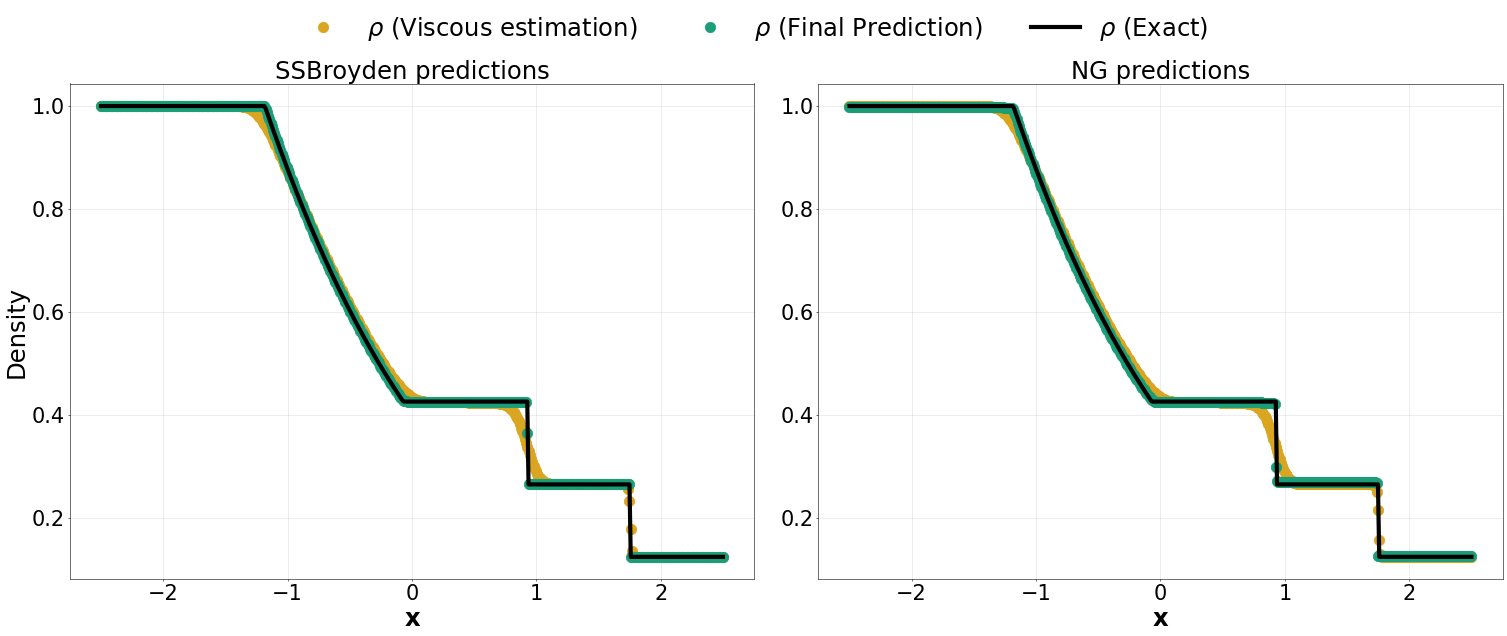}
        \caption{\textbf{1D Euler equations with Roe linearization:}
        Neural network prediction for the density, before and after the \emph{inviscid} part of the training process.}
        \label{fig:before_and_after_lrpinn}
    \end{minipage}
    \vspace{1em}
    \begin{minipage}[b]{\linewidth}
        \centering
        \rowcolors{2}{cyan!15}{white}
        \scalebox{0.95}{
        \begin{tabular}{|l|l|c|c|c|}
            \hline
            \rowcolor{cyan!40}
            \textbf{Case} & \textbf{Optimizer} & \textbf{Relative $L_1$ $(\rho, u, p)$} & \textbf{Training time (s)} & \textbf{\# parameters}  \\
            \hline
            1 & SSBroyden    & $(9.2,8.0,5.0)\times 10^{-4}$    & 160 & 4833  \\
            \hline
            2 & NG & $(6.5,9.1,3.3)\times 10^{-3}$ & 452 & 4833  \\
            \hline
        \end{tabular}}
        \captionof{table}{\textbf{1D Euler equations with Roe linearization:}
        Training times for the Sod shock tube problem.}
        \label{tab:sod_training_times}
    \end{minipage}
\end{figure}

Figure \ref{fig:lrpinn_sod_prediction} shows the predicted density, velocity, and pressure at $t = 1$, together with the corresponding exact solutions. The results are very accurate, with very few or even no transition points in the shock and contact waves, as well as very abrupt changes in the beginning and in the end of the rarefaction wave. To see the impact of the inviscid part of the training process in the neural network prediction, Figure \ref{fig:before_and_after_lrpinn} the density obtained before and after the inviscid phase of the training process. As shown, the introduction of the Roe linearization in shock regions let us eliminate the viscosity present in all the features of the solution without any spurious oscillations near the discontinuities, allowing us to correctly predict the jumps in all variables by imposing local conservation. The results, including the errors, total number of parameters, and runtime, are summarized in Table \ref{tab:sod_training_times}.
     
\begin{figure}
    \centering
    \includegraphics[trim={0cm 18cm 0cm 0cm}, clip, width=\textwidth]{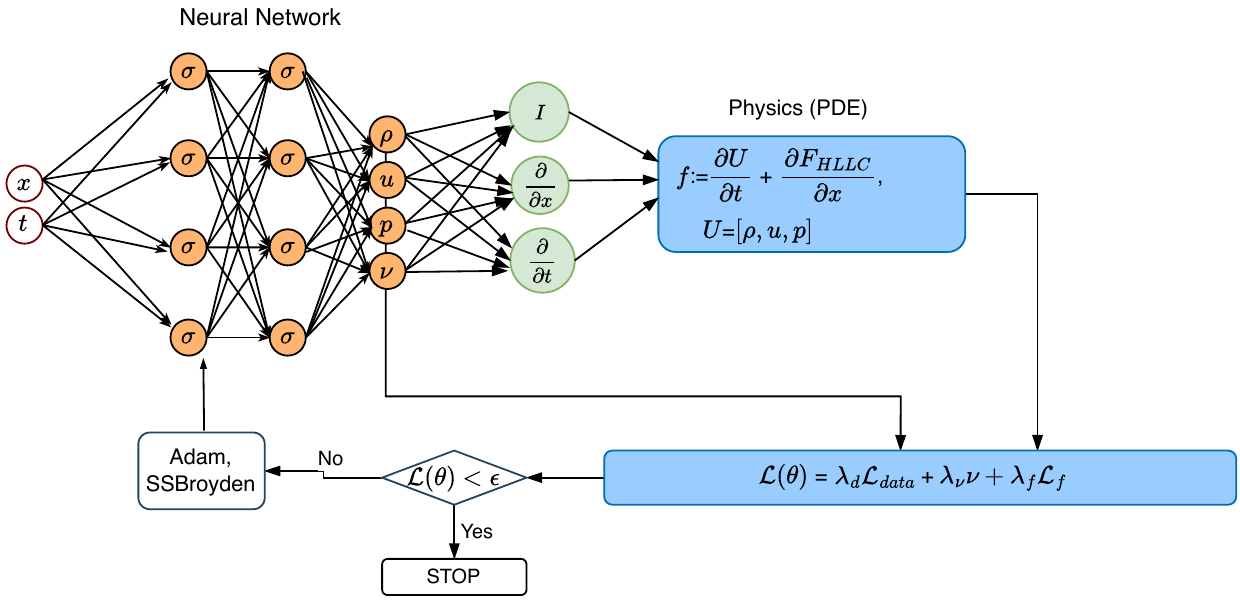}
    \caption{\textbf{1D Euler Equations with  HLLC Flux:} PINN architecture for 1D Euler equation using adaptive viscosity and  HLLC flux.}
    \label{fig:Euler_1D_PINN_architecture}
\end{figure}

\FloatBarrier

\subsubsection{Harten--Lax--van Leer--Contact (HLLC) Flux approach }
\label{sec:Euler_equations_HLLC}
Here, we propose a new method based on the Harten--Lax--van Leer--Contact (HLLC) approximate Riemann solver \cite{toro1994restoration}. In computational fluid dynamics, HLLC is often preferred over Roe's linearization for the Euler equations due to its improved robustness and stronger physical consistency in the presence of strong discontinuities. Although Roe's method can provide sharp wave resolution through a local Jacobian linearization, it may violate the entropy condition in transonic rarefaction regions and can yield nonphysical states, such as negative density or pressure, in strong rarefactions or near-vacuum regimes unless additional entropy fixes and positivity-preserving corrections are employed. In contrast, the HLLC flux is derived from a three-wave approximate Riemann structure that explicitly restores the contact discontinuity and, with appropriate wave-speed estimates, naturally maintains physically admissible states. Consequently, HLLC provides a stable and reliable shock-capturing framework for high-Mach-number flows and problems involving strong shocks, while retaining accurate resolution of contact and shear waves.

To incorporate the HLLC flux within the PINN framework, we first solve a viscous approximation of the Euler equations. The corresponding residuals are defined as
\begin{align}\label{eq:viscous_euler}
R_0 &= \rho_t + u \rho_x + \rho u_x - \nu \rho_{xx}, \\[6pt]
R_1 &= \rho u_t + \rho u u_x + p_x 
      - \nu \left( \rho u_{xx} + 2 \rho_x u_x \right), \\[6pt]
R_2 &= p_t + u p_x + \gamma p u_x 
      - \nu \left( p_{xx} + \rho (\gamma - 1) u_x^2 \right).
\end{align}
In Equation~\eqref{eq:viscous_euler}, the viscosity $\nu$ is not prescribed or tuned; instead, within the PINN architecture it is treated as an additional network output together with $\rho$, $u$, and $p$, as illustrated in Figure~\ref{fig:Euler_1D_PINN_architecture}. Since the target problem is inviscid, we enforce $\nu \to 0$ by introducing it as a soft constraint in the loss function. The loss used in this warm-up stage is therefore given by
\begin{align}\label{eq:loss_function_euler}
\mathcal{L} = \lambda_{ic}\mathcal{L}_{ic} + \lambda_{bc}\mathcal{L}_{bc} + \lambda_f\mathcal{L}_{f} + \lambda_{\nu}\nu.
\end{align}
After this initial training stage, the residual loss is computed using the inviscid conservative form with the HLLC numerical flux,

\begin{align}
\frac{\partial \mathbf{U}}{\partial t} + \frac{\partial \mathbf{F}_{\mathrm{HLLC}}}{\partial x} = 0.
\end{align}

\noindent
For the one-dimensional Euler equations, the HLLC numerical flux is defined by \cite{toro1994restoration}
\begin{equation}
\mathbf{F}_{\mathrm{HLLC}}(\mathbf{U}_L,\mathbf{U}_R)=
\begin{cases}
\mathbf{F}(\mathbf{U}_L), & S_L \ge 0, \\[6pt]
\mathbf{F}(\mathbf{U}_L)+S_L\left(\mathbf{U}_L^{*}-\mathbf{U}_L\right), 
& S_L \le 0 \le S_{*}, \\[6pt]
\mathbf{F}(\mathbf{U}_R)+S_R\left(\mathbf{U}_R^{*}-\mathbf{U}_R\right), 
& S_{*} \le 0 \le S_R, \\[6pt]
\mathbf{F}(\mathbf{U}_R), & S_R \le 0.
\end{cases}
\end{equation}

The intermediate (star) states are given by
\begin{equation}
\mathbf{U}_K^{*}
=
\rho_K \frac{S_K-u_K}{S_K-S_*}
\begin{pmatrix}
1\\
S_*\\[4pt]
\displaystyle
\frac{E_K}{\rho_K} + (S_* - u_K)
\left(
S_* + \frac{p_K}{\rho_K (S_K-u_K)}
\right)
\end{pmatrix},
\qquad K\in\{L,R\},
\end{equation}
and the contact wave speed is
\begin{equation}
S_*=
\frac{
p_R-p_L+\rho_L u_L(S_L-u_L)-\rho_R u_R(S_R-u_R)
}{
\rho_L(S_L-u_L)-\rho_R(S_R-u_R)
}.
\end{equation}

\begin{figure}
    \centering
\includegraphics[width=\linewidth,height=0.72\textheight,keepaspectratio]{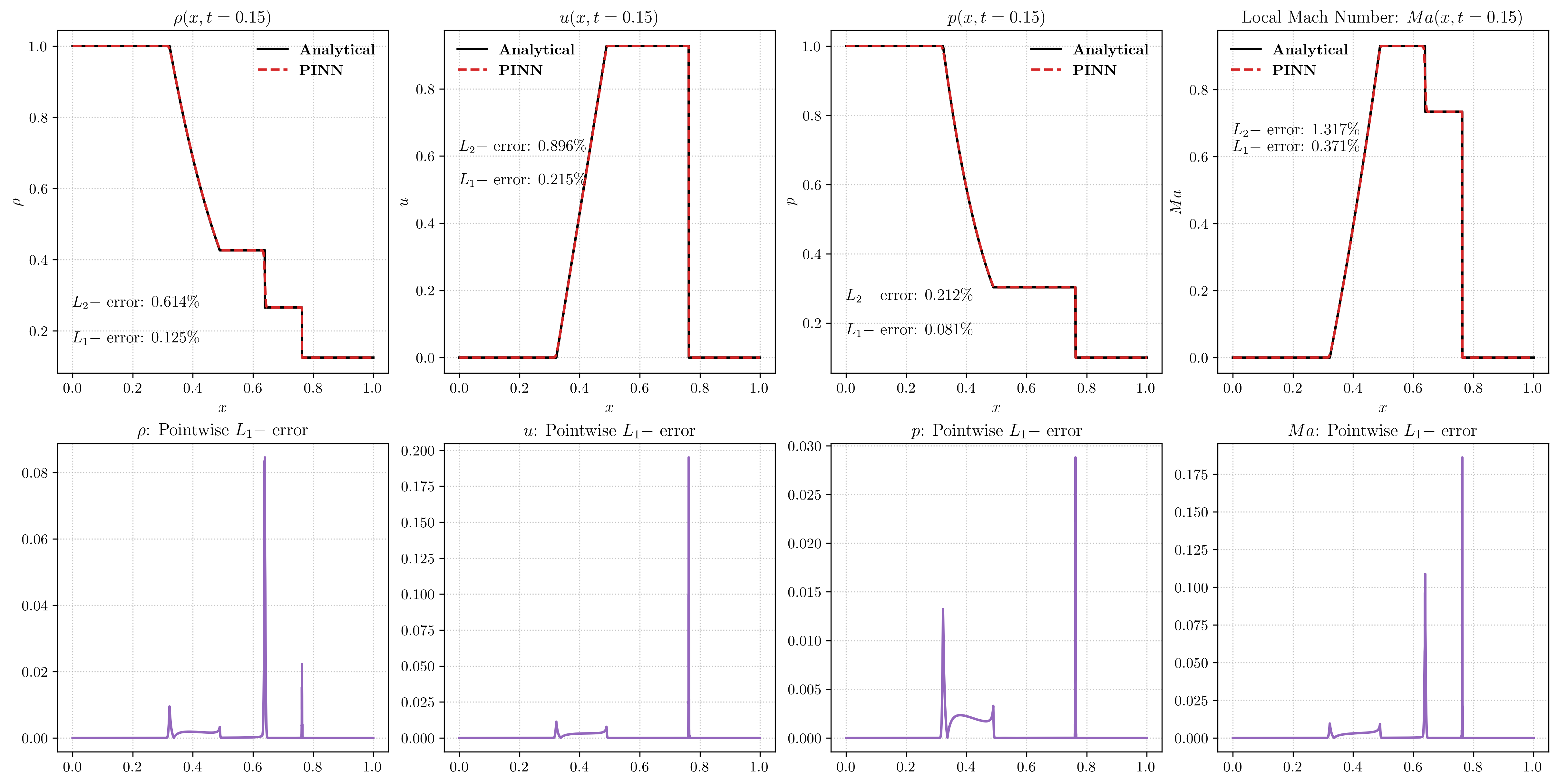}
    \captionsetup{font=small}
    \caption{\textbf{Euler equations with  HLLC  flux vs analytical solution:} Comparison of the 1D Euler solution for the Sod shock tube problem obtained using the PINN framework augmented with the HLLC flux against the analytical solution. The top row displays the density $\rho$, velocity $u$, pressure $p$, and local Mach number $Ma$ (from left to right) at $t = 0.15$. The bottom row presents the corresponding pointwise absolute error distributions in space. The results demonstrate excellent agreement between the analytical and PINN-predicted solutions, with accurate resolution of both the shock and the contact discontinuity.}
    \label{fig:Euler_1D_Analytical}
\end{figure}

Next, we conduct a computational experiment to assess the effectiveness of the proposed method. The PINN architecture consists of six hidden layers, each containing 20 neurons, forming a fully connected network capable of representing the nonlinear dynamics of the flow. A hyperbolic tangent (Tanh) activation function is employed in all hidden layers of the network. To enforce the governing equations, 30,000 collocation points are sampled in the computational domain to impose the residual constraints.
The training is performed in two stages. In the first (viscous warm-up) stage, the network is trained using the Adam optimizer for 5000 iterations in order to obtain a stable approximation of the viscous regularized system. In the second (inviscid) stage with HLLC flux, the optimization is switched to the SSBroyden quasi-Newton method with a zoom line search algorithm, which improves convergence and allows the network to satisfy the inviscid Euler residual formulated using the HLLC flux more accurately. 
\begin{figure}
    \centering  \includegraphics[width=\linewidth,height=0.72\textheight,keepaspectratio]{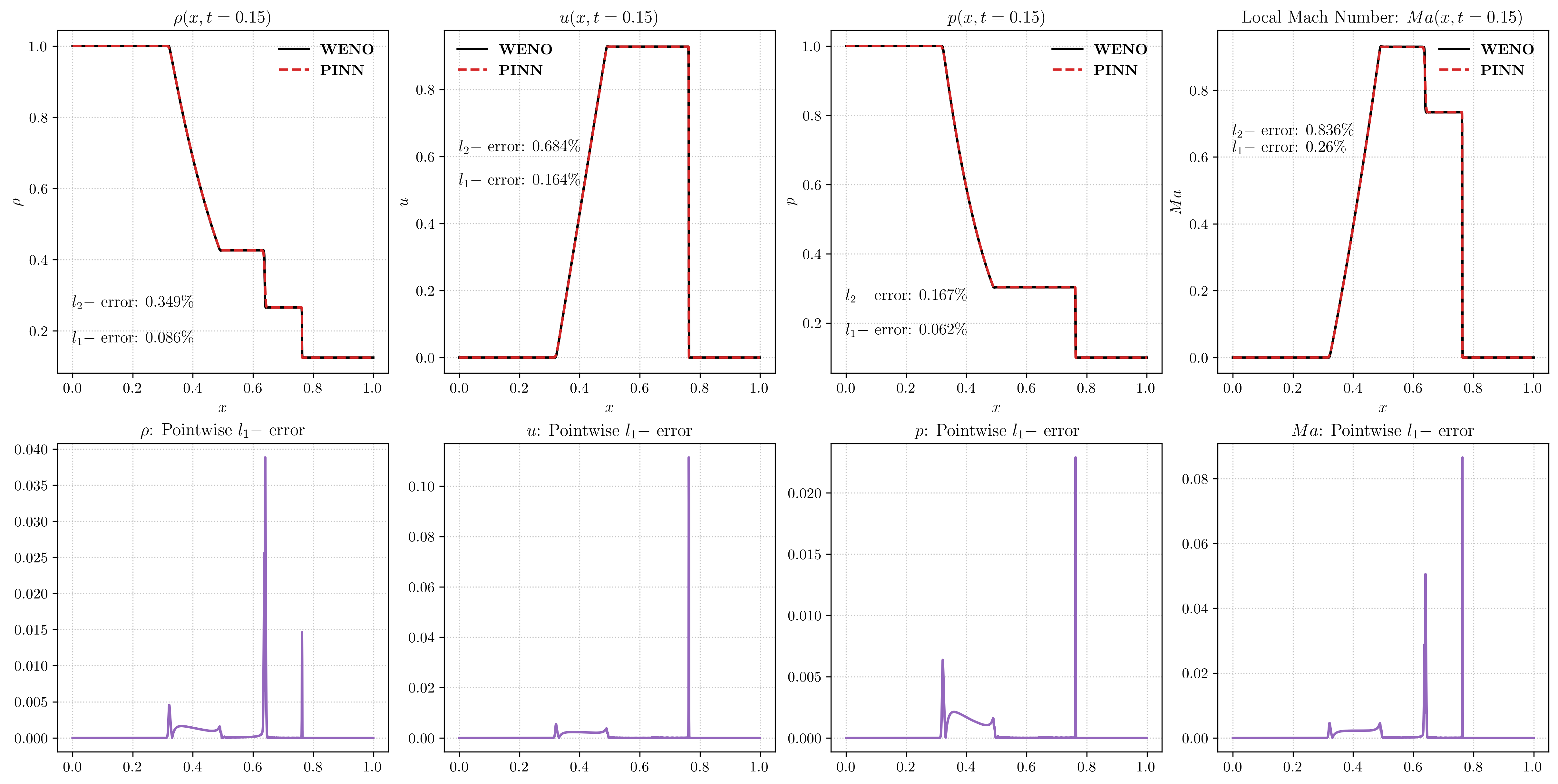}
    \captionsetup{font=small}
    \caption{\textbf{Euler Equations with  HLLC flux vs WENO solution:} Comparison of the 1D Euler solution for the Sod shock tube problem obtained using the PINN framework augmented with the HLLC flux against a numerical reference solution computed using a WENO scheme \cite{hesthaven2017numerical}. The top row shows the density $\rho$, velocity $u$, pressure $p$, and local Mach number $Ma$ (from left to right) at $t = 0.15$. The bottom row presents the corresponding pointwise absolute error in space. The results indicate very good agreement between the WENO reference solution and the PINN predictions, with the shock and contact discontinuity captured accurately. Overall, the discrepancy between the PINN solution and the WENO reference solution is smaller than that observed with respect to the analytical solution, primarily due to the inherent dissipative nature of the WENO scheme.}
    \label{fig:Euler_1D_WENO}
\end{figure}
\begin{table}[t]
\centering
\resizebox{\textwidth}{!}{
\begin{tabular}{|p{5cm}|c|c|c|c|}
\hline
\rowcolor{cyan!40}
\textbf{Optimizer [\# Iters.]} 
& \textbf{Relative $L_1$  ($u,~v,~p$)}
& \textbf{Relative $L_2$  ($u,~v,~p$)} 
& \textbf{\# parameters} 
& \textbf{Training time (s)} \\
\hline
Adam (5001) + SSBroyden with Zoom (10000) + Analytical 
& $[1.3~ 2.2, ~ 0.8] \times 10^{-3}$
& $[6.1,~8.9,~2.1] \times 10^{-3}$
& 2601 
& 460 \\
\hline
Adam (5001) + SSBroyden with Zoom (10000) + WENO 
& $[0.8~ 1.6, ~ 0.6] \times 10^{-3}$
& $[3.4,~6.8,~1.7] \times 10^{-3}$ 
& 2601 
& 460 \\
\hline
\end{tabular}}
\caption{\textbf{1D Euler Equations with  HLLC Flux:} Relative $L_1$ and $L_2$ error comparison with analytical and WENO reference solutions for the 1D Euler equations using PINN with HLLC flux.}
\label{tab:optimizer_results_hllc}
\end{table}

Figure~\ref{fig:Euler_1D_Analytical} illustrates the performance of the proposed PINN--HLLC framework for the Sod shock tube problem at $t=0.15$. The predicted profiles of density, velocity, pressure, and local Mach number are overlaid with the exact solution to evaluate the approximation. In addition, the spatial distribution of the absolute error is presented to quantify the local discrepancies. The comparison confirms that the proposed approach successfully reproduces the main flow features, including the shock front and contact discontinuity, while maintaining a low error level throughout the computational domain. In Figure~\ref{fig:Euler_1D_WENO}, we repeat the comparison presented in Figure~\ref{fig:Euler_1D_Analytical}, but replace the exact solution with a numerical reference computed using a third-order WENO scheme. The PINN predictions remain in close agreement with the WENO solution across all variables. Moreover, the observed errors are slightly reduced compared to the analytical case, which can be attributed to the additional numerical diffusion introduced by the WENO discretization. The results, including the relative $L_2$ and $L_1$ errors, total number of parameters, and runtime, are summarized in Tablel~\ref{tab:optimizer_results_hllc}. All runtimes were measured on an NVIDIA H100 80\,GB GPU. Since the compute utilization remained below 60\% during training, the GPU was not fully saturated by floating-point operations, which led to increased runtime.

\paragraph{Key observations from the 1D Euler equations:}
Two different methodologies were explored for solving the 1D Euler equations within the PINN framework: a Roe linearization-based approach and an HLLC flux-based formulation. 
The Roe-based approach improves local conservation through Jacobian linearization, while the HLLC formulation provides enhanced robustness and physical admissibility, particularly in the presence of strong shocks. We observe that both SSBroyden and NG converges if the Roe-linearization approach is considered, being again SSBroyden the winner in terms of computational cost and precission. 



\FloatBarrier
\subsection{Stiff PK--PD System with Discontinuous Drug Administration}
\label{sec:stiff_odes}

We consider a coupled PK–PD model describing the effect of a cytotoxic chemotherapy agent (paclitaxel) on tumor growth, studied in detail to gain new insights into resistance mechanisms and to predict tumor growth dynamics across different patients \cite{cminns, mamba2}. The system is solved over a treatment window of $t \in [0,17]$ days and is intentionally constructed to exhibit strong stiffness arising from multi-timescale dynamics, nonlinear growth saturation, and injection-driven forcing.

\paragraph{Pharmacokinetics (PK):}
The pharmacokinetic component is described by a two-compartment model with central and peripheral compartments. Let $A_1(t)$ and $A_2(t)$ denote the drug amounts in the central and peripheral compartments, respectively. Their dynamics are governed by
\begin{align}
\frac{dA_1}{dt} &= - (k_{10} + k_{12}) A_1(t) + k_{21} A_2(t) + I(t), \\
\frac{dA_2}{dt} &= k_{12} A_1(t) - k_{21} A_2(t),
\end{align}
where the plasma drug concentration driving the pharmacodynamic response is defined as
\begin{equation}
c(t) = \frac{A_1(t)}{V_1}.
\end{equation}

The input term $I(t)$ represents chemotherapy drug injection according to the treatment schedule. In this benchmark, an injection is administered at day $t=2$. In addition, a prior injection is assumed to have occurred before $t=0$, resulting in residual drug present at the start of the simulation. Rather than modeling the injection as an idealized impulse, the PK system is solved analytically, yielding a continuous concentration profile composed of fast and slow exponential decay modes. The analytical solution takes the form
\begin{equation}
c(t) = \sum_{t_d \in \mathcal{T}_{\text{dose}}}
\left(
A e^{-\alpha (t - t_d)} + B e^{-\beta (t - t_d)}
\right)\mathbb{I}_{t \ge t_d},
\end{equation}
where $\alpha \gg \beta$ correspond to the eigenvalues of the compartmental system. This formulation captures the rapid rise in drug concentration following injection, followed by slower elimination dynamics.

\paragraph{Pharmacodynamics (PD):}
Tumor dynamics are modeled using a four-state transit compartment system, where $x_1(t)$ represents proliferating tumor cells and $x_2(t)$--$x_4(t)$ denote successive damaged or dying cell compartments. The governing equations are
\begin{align}
\frac{dx_1}{dt} &=
\frac{\lambda_1 x_1(t)}
{\left[1 + \left(\frac{\lambda_1}{\lambda_2}\,\omega(t)\right)^{\Psi}\right]^{1/\Psi}}
- k_2\, c(t)\, x_1(t), \\
\frac{dx_2}{dt} &= k_2\, c(t)\, x_1(t) - k_1 x_2(t), \\
\frac{dx_3}{dt} &= k_1 \big(x_2(t) - x_3(t)\big), \\
\frac{dx_4}{dt} &= k_1 \big(x_3(t) - x_4(t)\big),
\end{align}
with the total tumor burden defined as
\begin{equation}
\omega(t) = x_1(t) + x_2(t) + x_3(t) + x_4(t).
\end{equation}
Tumor weight observations are available at $t=0$, corresponding to $\omega(0)$.

The tumor growth term follows a generalized logistic formulation with shape parameter $\Psi$, while the drug-induced cell kill enters multiplicatively through the PK concentration $c(t)$. The cytotoxic effect is first applied to the proliferating compartment and subsequently propagates through the downstream transit compartments with a delay governed by $k_1$.
Following intravenous administration of paclitaxel, the pharmacokinetic parameters are $V_1 = 0.81~\mathrm{L/kg}$, $k_{10} = 0.868~\mathrm{h^{-1}}$, $k_{12} = 0.0060~\mathrm{h^{-1}}$, and $k_{21} = 0.0838~\mathrm{h^{-1}}$, while the pharmacodynamic parameters are $\Psi = 20$, $\lambda_1 = 0.273~\mathrm{day^{-1}}$, and $\lambda_2 = 0.814~\mathrm{g\cdot day^{-1}}$. The time evolution of $x_1$, $x_2$, $x_3$, $x_4$, together with $W$, is illustrated in Figure~\ref{fig:pkpd}.

The coupled PK--PD system exhibits \textbf{stiffness} arising from intrinsic multi-timescale dynamics in the governing equations. First, the two-compartment PK model has eigenvalues $\alpha \gg \beta$ that differ by several orders of magnitude, creating coexisting fast distribution and slow elimination phases in the drug concentration profile $c(t)$. This disparity in decay rates is a primary source of mathematical stiffness, requiring small integration timesteps for numerical stability even during smooth evolution between injections. Second, the sequential transit compartments ($x_2 \to x_3 \to x_4$) introduce delayed pharmacodynamic responses with characteristic timescales governed by $k_1$, which interact with the much faster cytotoxic effects (controlled by $k_2 c(t)$) acting on the proliferating compartment. Third, the Hill-type growth inhibition term with $\Psi = 20$ introduces extreme nonlinearity near the carrying capacity, creating steep gradients and strong curvature in the solution manifold that further exacerbate stiffness.

Beyond the inherent stiffness, the system also features \textbf{discontinuous forcing} due to drug administration. Chemotherapy injections at discrete timepoints introduce sharp, localized increases in $c(t)$ that act as non-smooth forcing terms, generating rapid transients in the PD equations immediately following administration. While these injection-driven transients create numerical challenges distinct from stiffness---primarily requiring adaptive timestep refinement and careful collocation placement---they interact with the underlying stiff dynamics to produce a particularly challenging coupled system. The combination of multi-timescale eigenvalue separation, nonlinear saturation effects, and discontinuous forcing makes this benchmark representative of real-world pharmacological modeling scenarios. The solution of the full system is shown in Figure~\ref{fig:pkpd}, alongside the corresponding PINN predictions.

\begin{figure}
    \centering
    \includegraphics[width=0.9\linewidth]{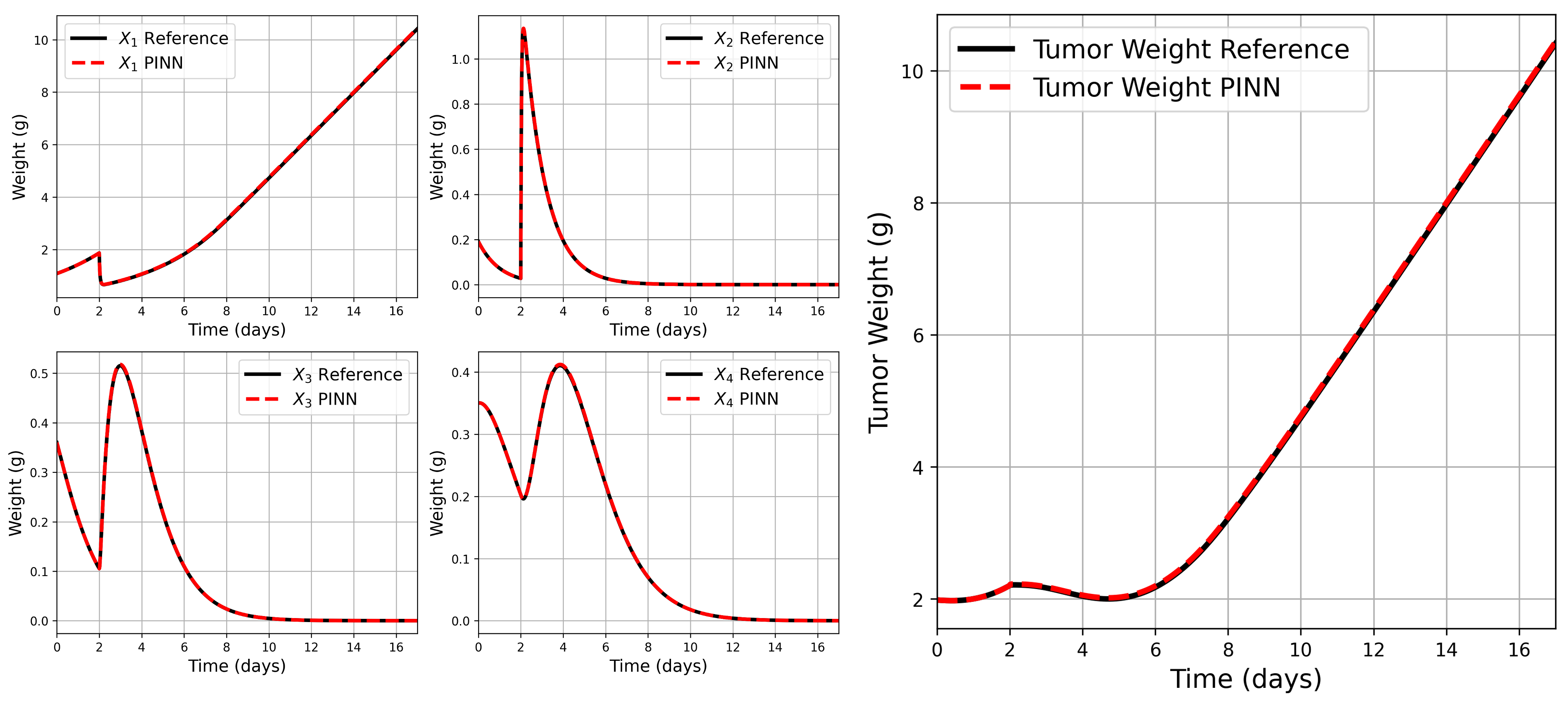}
    \caption{\textbf{Stiff PK--PD ODE System:} Reference vs. PINN (NG).Comparison of the exact solution and the PINN prediction obtained with the NG optimizer, showing accurate reconstruction of the multiscale stiff dynamics.}

    \label{fig:pkpd}
\end{figure}


To accurately resolve the stiff dynamics induced by drug injection, we employed a non-uniform temporal collocation strategy. The final distribution was selected after an ablation study comparing different point allocations. A total of 1000 collocation points were deterministically distributed using piecewise uniform grids: 300 points over $[0,1.9]$ days, 300 points over $[1.9,4.0]$ days, and 400 points over $[4.0,17.0]$ days. This allocation increases sampling density around the injection time ($t \approx 2$ hr), while maintaining sufficient coverage of the slower long-time pharmacodynamic relaxation regime. The resulting structured grid balances resolution of fast–slow multiscale behavior without excessively increasing computational cost.

Table~\ref{tab:stiff_pkpd_optimizers} summarizes the quantitative performance of the different optimizers on the stiff PK--PD system. Adam exhibits the largest errors across all state variables, particularly in the intermediate transit compartments, indicating difficulty in resolving the coupled fast--slow dynamics. SOAP substantially improves accuracy, reducing errors by more than one order of magnitude compared to Adam, though some degradation remains in deeper compartments. Classical BFGS shows uneven performance: while achieving low error for $X_1$, it struggles to consistently control errors in $X_2$--$X_4$, which reflects sensitivity to stiffness and curvature anisotropy. The structured second-order variants (SSBFGS and SSBroyden) provide more balanced accuracy across all compartments, with SSBroyden achieving uniformly low errors while maintaining competitive computational cost. The NG method achieves the lowest errors in the deeper transit states and tumor burden $W$, which demonstrates strong robustness to multiscale stiffness. 

Figure~\ref{fig:pkpd_abs} shows the time-resolved absolute error (log scale) for each state variable $X_1$--$X_4$ under different optimization strategies. The stiff transient immediately following drug injection (around $t \approx 2$ hr) induces a sharp rise in error for all methods. Adam exhibits the largest and most persistent errors across all compartments as expected, particularly in $X_1$ and $X_2$, indicating difficulty in resolving the fast PK-driven transient and its propagation through the PD chain. SOAP significantly reduces the overall magnitude of the error compared to Adam but still shows localized spikes near the injection time and mild long-time drift. Classical BFGS improves late-time accuracy but demonstrates instability across intermediate compartments, with slower decay of error in $X_2$ and $X_3$. In contrast, the structured quasi-Newton methods (SSBFGS and SSBroyden) achieve more uniform error reduction across all compartments, with smoother temporal decay and improved stability after the transient phase. The NG method exhibits the most consistent long-time error decay, reaching the lowest error levels in $X_2$--$X_4$ and maintaining strong robustness to stiffness. 
Results show that curvature-aware and geometry-informed optimizers significantly outperform first-order methods in both accuracy and stability for this stiff PK--PD problem.

\begin{figure}
    \centering
    \begin{minipage}{0.95\linewidth}
        \centering
        \includegraphics[width=0.9\linewidth]{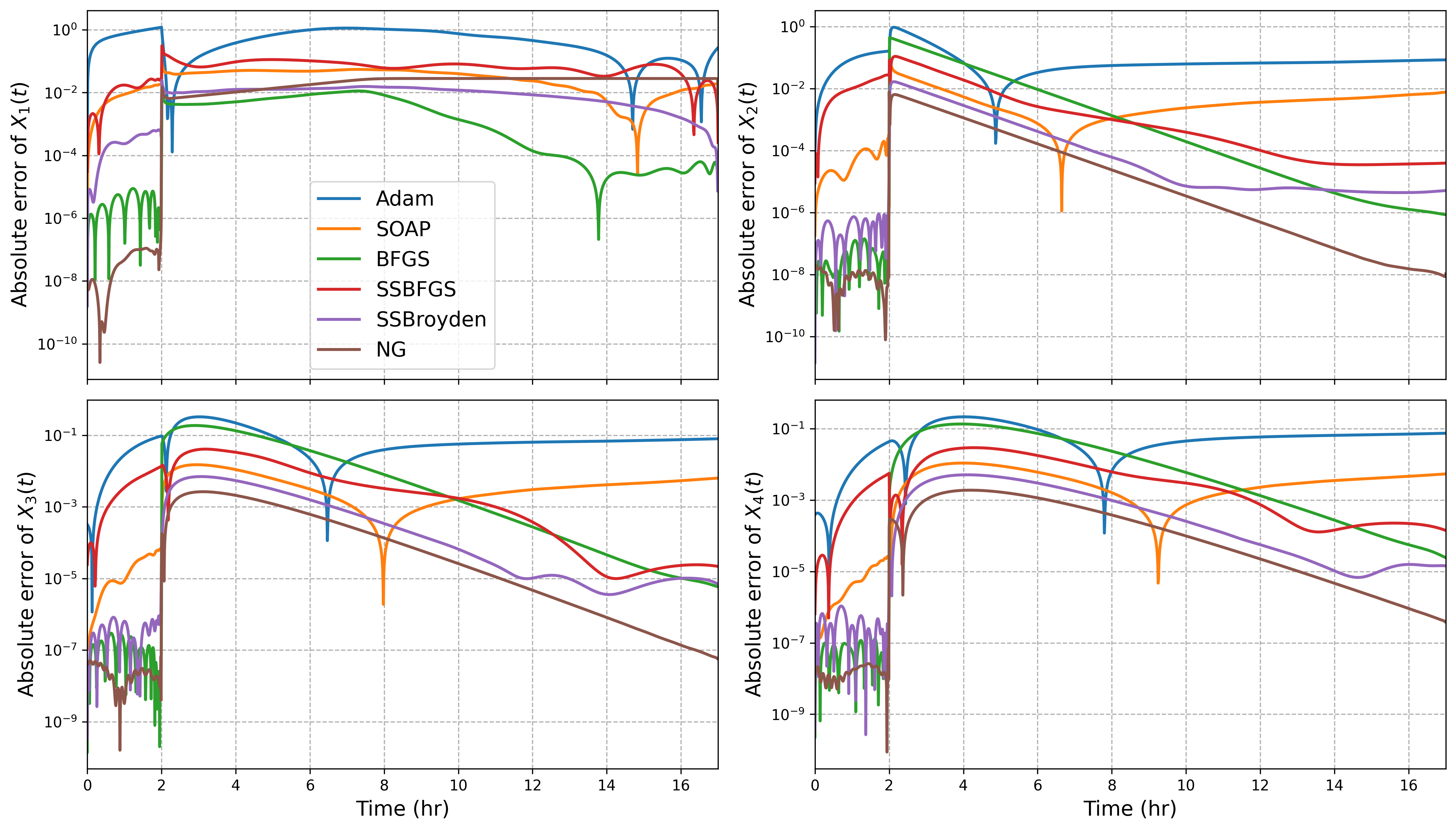}
        \caption{\textbf{Stiff PK--PD ODE System:}
        Time-resolved absolute error for each state variable, shown on a logarithmic scale, comparing PINN solutions trained with different optimization methods. The results illustrate optimizer-dependent performance in resolving stiff multiscale dynamics.}
        \label{fig:pkpd_abs}
    \end{minipage}
    \vspace{0.4cm}
    \begin{minipage}{0.95\linewidth}
        \centering
        \rowcolors{2}{cyan!15}{white}
        \scalebox{0.8}{
        \begin{tabular}{|l|r|r|r|r|}
        \hline
        \rowcolor{cyan!40}
        \textbf{Optimizer} 
        & \textbf{Relative \ $L_2 (X_1, X_2, X_3, X_4)$} 
        & \textbf{Relative \ $L_2$ (W)} 
        & \textbf{Relative \ $L_\infty$ (W)} 
        & \textbf{Training time (s)} \\
        \hline
        Adam 
        & $1.25 \times 10^{-1} , 7.93 \times 10^{-1} , 5.79\times 10^{-1}, 4.50\times 10^{-1}$ 
        & $1.31 \times 10^{-1}$ 
        & $1.07 \times 10^{-1}$ 
        & 323 \\
        SOAP 
        & $6.80 \times 10^{-3} , 3.67 \times 10^{-2} , 2.97\times 10^{-2}, 2.46\times 10^{-2}$ 
        & $5.79 \times 10^{-3}$ 
        & $5.10 \times 10^{-3}$ 
        & 461 \\
        BFGS 
        & $8.95 \times 10^{-4} , 3.44 \times 10^{-1} , 3.28\times 10^{-1}, 2.74\times 10^{-1}$ 
        & $3.26 \times 10^{-2}$ 
        & $4.87 \times 10^{-2}$ 
        & 118 \\
        SSBFGS 
        & $1.38 \times 10^{-2} , 9.72 \times 10^{-2} , 7.23\times 10^{-2}, 5.91\times 10^{-2}$ 
        & $1.92 \times 10^{-2}$ 
        & $2.46 \times 10^{-2}$ 
        & 83 \\
        SSBroyden 
        & $1.89 \times 10^{-3} , 1.44 \times 10^{-2} , 1.23\times 10^{-2}, 1.03\times 10^{-2}$ 
        & $2.79 \times 10^{-3}$ 
        & $2.76 \times 10^{-3}$ 
        & 98 \\
        NG
        & $1.29 \times 10^{-2} , 1.08 \times 10^{-3} , 7.61\times 10^{-4}, 6.45\times 10^{-4}$ 
        & $2.55 \times 10^{-3}$ 
        & $1.57 \times 10^{-3}$ 
        & 302 \\
        \hline
        \end{tabular}}
        \captionof{table}{\textbf{Stiff PK--PD ODE System:}
        Relative $L_2-$ errors for the transit compartments $(X_1,X_2,X_3,X_4)$ and tumor burden $W$, together with relative $L_\infty$ error for $W$ and total training time (seconds), obtained using different optimization strategies for PINN training. The results highlight differences in accuracy, robustness across state variables, and computational efficiency in the stiff multiscale regime.}
        \label{tab:stiff_pkpd_optimizers}
    \end{minipage}

\end{figure}

\paragraph{Key observations from the Stiff PK--PD ODE System:} We evaluated Wolfe, trust-region, and zoom line-search strategies, and report results for quasi-Newton methods using trust-region line search due to its superior robustness across optimizers. The NG method was particularly sensitive to increased collocation density near drug administration, which led to ill-conditioning of the Hessian/Fisher matrix and numerical instability. For other second-order methods, redistributing collocation points significantly affected convergence speed and stability, occasionally causing stagnation in sharp local minima due to stiffness-induced curvature in the loss landscape.

\section{Performance Analysis of 
Optimizers using Roofline Models}

The roofline model~\cite{williams2009roofline, fischer2022nekrs} is a performance bound
that characterizes the achievable floating-point throughput of a kernel
as a function of its arithmetic intensity. It provides an intuitive
visual framework for identifying whether a kernel is limited by compute
throughput or memory bandwidth, and quantifies the gap between measured
and peak hardware performance.

\paragraph{Arithmetic Intensity}
The arithmetic intensity $I$ of a kernel is defined as the ratio of
the total number of floating-point operations performed to the total
number of bytes transferred between main memory (DRAM) and the processor:

\begin{equation}
    I = \frac{W}{Q} \quad \left[\frac{\text{FLOP}}{\text{byte}}\right],
    \label{eq:arithmetic_intensity}
\end{equation}

\noindent where $W$ denotes the total floating-point operation count
(FLOP) and $Q$ denotes the total memory traffic (bytes). A kernel with
high arithmetic intensity reuses data heavily from cache and is
\emph{compute-bound}, while a kernel with low arithmetic intensity is
\emph{memory-bound}.

\paragraph{ Roofline Analysis}

The theoretical roofline bound $P_{\text{roof}}(I)$ is constructed
from two hardware-specific constants taken directly from the device
specification sheet:

\begin{itemize}
    \item $\pi$ : peak floating-point throughput (FLOP/s), and
    \item $\beta$ : peak memory bandwidth (bytes/s).
\end{itemize}

\noindent The roofline performance bound at a given arithmetic intensity
$I$ is then defined as:

\begin{equation}
    P_{\text{roof}}(I) = \min\!\left(\pi,\ \beta \cdot I\right)
    \quad \left[\frac{\text{FLOP}}{\text{s}}\right],
    \label{eq:roofline}
\end{equation}

\noindent which enforces that no kernel can exceed either the peak
compute ceiling $\pi$ or the memory-bandwidth ceiling $\beta \cdot I$.
The intersection of these two bounds defines the \emph{ridge point}:

\begin{equation}
    I^{*} = \frac{\pi}{\beta} \quad \left[\frac{\text{FLOP}}{\text{byte}}\right],
    \label{eq:ridge_point}
\end{equation}

\noindent which separates the memory-bound regime ($I < I^{*}$) from
the compute-bound regime ($I > I^{*}$). For the NVIDIA H100 SXM GPU
used in this work, the hardware parameters and the resulting ridge
point are summarized in Table~\ref{tab:h100_specs}.

\begin{table}[ht]
    \centering
    \caption{NVIDIA H100 SXM hardware specifications used to construct
             the theoretical roofline model.}
    \label{tab:h100_specs}
    \begin{tabular}{lcc}
        \hline
        \textbf{Parameter} & \textbf{Symbol} & \textbf{Value} \\
        \hline
        Peak FP64 throughput & $\pi$    & $67$~TFLOP/s   \\
        Peak memory bandwidth & $\beta$ & $3.35$~TB/s    \\
        Ridge point           & $I^{*}$ & $10.15$~FLOP/byte \\
        \hline
    \end{tabular}
\end{table}

The resulting roofline plot is shown in Figure~\ref{fig:rl_SSBFGS}. The horizontal ceiling corresponds to the peak FP64 compute performance of $67$~TFLOP/s, while the diagonal line represents the peak memory bandwidth of $3.35$~TB/s. Their intersection, at approximately $10.15$~FLOP/byte, defines the ridge point, which indicates the arithmetic intensity threshold beyond which a kernel transitions from being memory-bound to compute-bound.
\begin{figure}
    \centering
    \includegraphics[width=\linewidth]{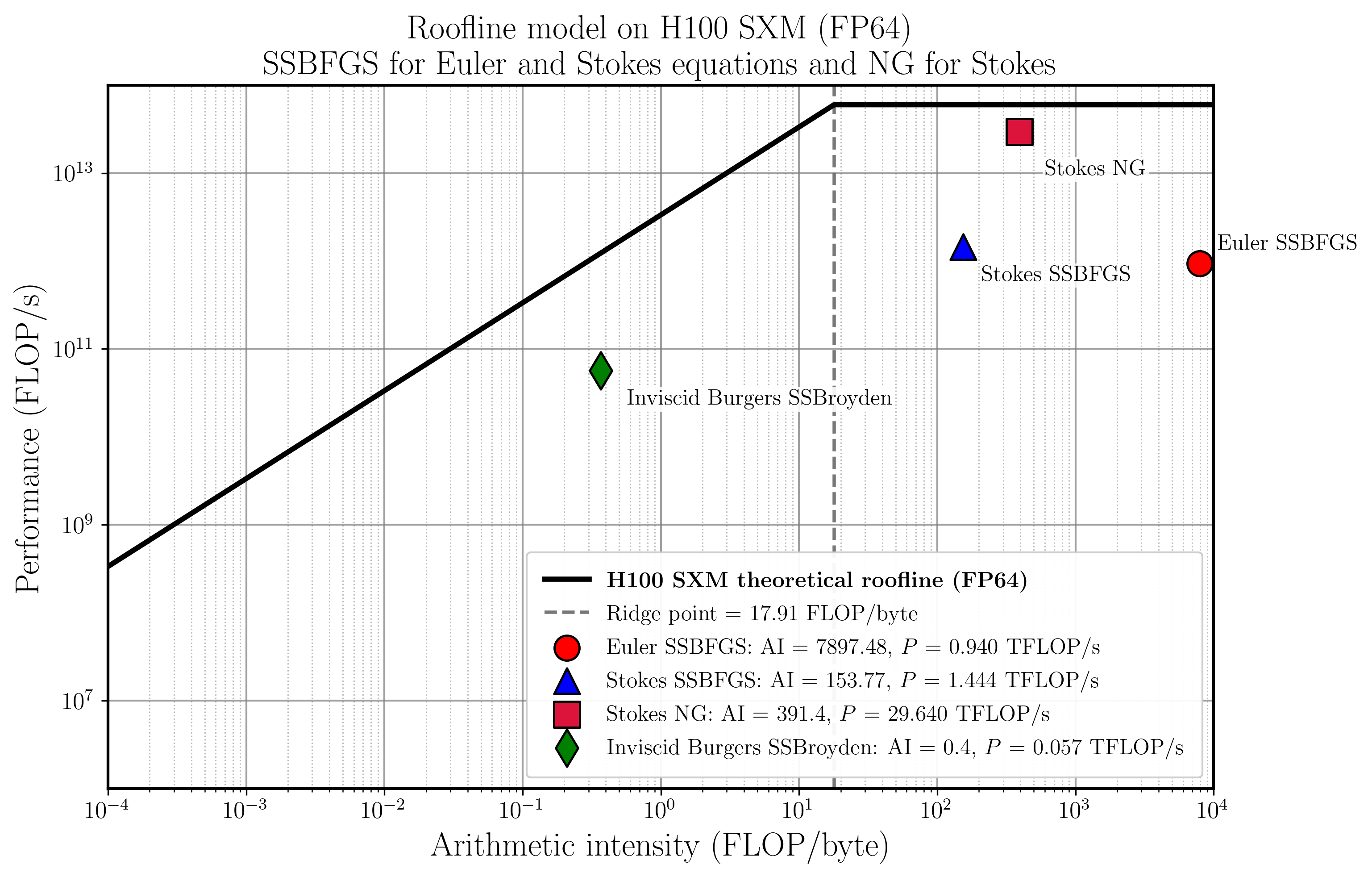}
    \caption{Roofline analysis for the SSBFGS and NG optimizers 
applied to the inviscid Burgers, Euler, and Stokes equations. The 
solid black line denotes the theoretical roofline, and the vertical 
dashed line marks the ridge point, which delineates the transition 
from the memory-bound to the compute-bound regime. }
    \label{fig:rl_SSBFGS}
\end{figure}

Three kernels---Euler SSBFGS, Stokes SSBFGS, and Stokes NG---appear to the right of the roofline ridge point, indicating that they operate in the \emph{compute-bound} regime. In this regime, performance is governed primarily by the GPU's peak floating-point throughput rather than by memory bandwidth. This behavior is desirable, as it suggests that memory traffic is not the principal bottleneck and that the available bandwidth is being utilized efficiently relative to the computational workload.

By contrast, the 1D inviscid Burgers kernel lies to the left of the ridge point, placing it in the \emph{memory-bound} regime. In this case, performance is limited primary by data movement rather than arithmetic throughput. A plausible explanation is the relatively small neural network required to represent the 1D inviscid Burgers solution: because the model is compact, the amount of computation performed per byte of data moved is reduced, leading to a lower arithmetic intensity. As a result, execution time becomes dominated by memory access and data transfer rather than by floating-point operations.

Despite sharing the same regime, the solvers exhibit distinct arithmetic intensities. The Euler SSBFGS solver achieves a higher arithmetic intensity, which can be attributed to its relatively small network architecture (approximately $3{,}000$ parameters) and a correspondingly modest dense SSBFGS Hessian. The small parameter size allows both the Hessian and network weights to reside largely within the GPU L2 cache, leading to about $75\%$ memory bandwidth utilization. This indicates substantial data reuse at the cache level rather than frequent access to DRAM.

In contrast, the Stokes SSBFGS solver employs a significantly larger network (approximately $33{,}667$ parameters) and a much larger dense Hessian matrix (around $9$~GB). As a result, it saturates both compute and memory bandwidth at nearly $100\%$, achieving slightly higher performance than the Euler case. However, the large dense Hessian induces substantial DRAM traffic at each iteration, which lowers its arithmetic intensity relative to Euler despite the increased problem size. For the Gauss-Newton solver, the arithmetic intensity is computed as $\text{AI} = 391.4~\text{FLOP/byte}$. The achieved performance measured as reliable FLOPs divided by wall-clock step time is $29.640$~TFLOP/s, further confirming operation in the compute-bound regime.

Overall, these results highlight an important scalability challenge for second-order optimization of PINNs. As the network size increases, the $\mathcal{O}(n^{2})$ memory footprint associated with dense SSBFGS becomes a dominant factor. This growth drives the solver closer to the memory bandwidth ceiling, ultimately constraining achievable arithmetic intensity and limiting compute efficiency on modern GPU architectures.

\begin{lstlisting}[caption={Instantiation of SSBroyden and SSBFGS optimizers with zoom and Wolfe line search with backtracking line search, respectively}, label={code:Class_variants}]
from typing import Callable
import optimistix as optx
class SSBroydenZoom(optx.AbstractSSBroyden):
    """SSBroyden + zoom line search search."""
    rtol: float
    atol: float
    norm: Callable = optx.max_norm
    use_inverse: bool = True
    search: optx.AbstractSearch = optx.Zoom()
    descent: optx.AbstractDescent = optx.NewtonDescent()
    verbose: frozenset[str] = frozenset()


class SSBFGSBacktrackingWolfe(optx.AbstractSSBFGS):
    """ SSBFGS + backtracking line search."""
    rtol: float
    atol: float
    norm: Callable = optx.max_norm
    use_inverse: bool = True
    search: optx.AbstractSearch = optx.BacktrackingStrongWolfe()
    descent: optx.AbstractDescent = optx.NewtonDescent()
    verbose: frozenset[str] = frozenset()
\end{lstlisting}
\section{Reproducibility} 
To promote reproducibility and facilitate wider adoption, we outline the design principles underlying our implementation and provide a publicly available codebase. All implementations are developed using the JAX framework, ensuring both efficiency and flexibility in constructing optimizer variants.

Each optimizer instance is defined by several key components:
\begin{enumerate}
\item Relative tolerance (\texttt{rtol})
\item Absolute tolerance (\texttt{atol})
\item Choice of norm for convergence criteria (e.g., $L_1$, $L_2$, or $L_{\infty}$ norm of the gradient)
\item Line search strategy (e.g., backtracking or trust-region)
\item Newton-type descent update (cf. Equation~\eqref{eq:ssbroyden_Hk})
\end{enumerate}

The selection of these components allows the optimizer—whether SSBFGS or SSBroyden—to be tailored to the specific problem under consideration. To support this flexibility, we utilize the state-based \textit{Optimistix} library~\cite{rader2024optimistix}. Different optimizer variants are implemented by extending the \texttt{AbstractQuasiNewton} class, enabling modular customization of line search strategies, tolerance settings, convergence criteria, and descent formulations. As an illustrative example, we present an implementation of the SSBFGS optimizer using a trust-region line search combined with an $L_{\infty}$-norm convergence criterion in the code listing shown in Codelisting ~\ref{code:Class_variants}.We plan to release the complete library after the acceptance of the paper.

\paragraph{Code Availability:} 
The GitHub repository associated with this work is publicly accessible at \url{https://github.com/CrunchOptimizer} and will be updated upon acceptance of the manuscript for publication.

\section{Summary}\label{summary}

This work studied curvature-aware optimization for PINNs across a broad set of challenging scientific machine learning problems. The computational experiments were organized around three main classes of PDEs—elliptic, parabolic, and hyperbolic—as well as a stiff PK--PD ODE system. Across all cases, we compared self-scaled quasi-Newton methods, NG / Gauss--Newton methods, and structured adaptive optimizers, with emphasis on convergence speed, robustness, accuracy, and scalability. The results show that curvature-aware optimizers significantly improve PINN training over standard first-order approaches. SSBFGS, SSBroyden, and NG methods performed strongly across several experiments and were particularly effective in ill-conditioned settings and when very high solution accuracy was required. At the same time, the experiments highlight that optimization alone is not sufficient for discontinuous problems: for inviscid Burgers and Euler equations, accurate solutions required shock-aware and conservation-consistent PINN formulations.

For elliptic problems, including the Helmholtz and Stokes equations, the results showed that all curvature-aware optimizers were capable of reaching high-accuracy solutions when properly configured. Among them, the NG method was often the fastest. The SSBFGS and SSBroyden methods also performed well, achieving comparable final accuracy, though sometimes at a higher computational cost. 
For the 2D Helmholtz problem with \(a_1 = 1\), \(a_2 = 4\), and \(k = 1\), SSBFGS, SSBroyden, and NG all achieved errors of order \(10^{-9}\). Among these methods, NG was the most efficient, requiring only 251\,s of training time, compared with 319\,s for SSBroyden and 564\,s for SSBFGS. In contrast, SOAP stagnated at an error of order \(10^{-4}\) for the same case while also requiring substantially longer training time.
In the Stokes problem, both methods matched the accuracy of the NG approach across all line search strategies. However, the NG method was approximately $24\times$ more computationally efficient than these self-scaled optimizers. In contrast, the SOAP optimizer failed to outperform any of the curvature-aware methods, both in terms of runtime and accuracy, and remained significantly inferior.

For parabolic problems, represented here by the 2D viscous Burgers equation, a similar trend was observed. Curvature-aware optimization again significantly outperformed standard first-order training, and all of the main optimizers considered were able to produce high-accuracy solutions. Within this class of problems, the natural-gradient method was particularly attractive because it combined strong accuracy with lower computational cost, while the quasi-Newton methods also delivered competitive accuracy and robust convergence.
For hyperbolic problems, including the inviscid Burgers equation and the 1D Euler equations, the picture was more subtle. In these cases, optimization alone was not sufficient to recover accurate physical solutions, because shocks and discontinuities make PINN training fundamentally more difficult. We found that the natural-gradient and Broyden-family methods behaved similarly in overall performance once the formulation was made shock-aware and conservation-consistent. This motivated the introduction of new PINN methodologies tailored to hyperbolic PDEs, including Roe linearization, entropy-constrained relaxation formulations, and the Harten--Lax--van Leer--Contact (HLLC) flux approach. These additions were essential for obtaining stable and accurate solutions in the presence of shocks and contact discontinuities.
%
Beyond the PDE benchmarks, the stiff PK--PD ODE system---characterized by multi-timescale dynamics (eigenvalues $\alpha \gg \beta$), extreme nonlinear saturation ($\Psi = 20$), and discontinuous forcing---confirmed that curvature-aware optimizers significantly outperform first-order methods. NG achieved the lowest errors in deeper transit states (relative $L_2 < 10^{-3}$), while structured quasi-Newton methods (SSBroyden, SSBFGS) provided balanced accuracy across all compartments. In contrast, classical BFGS showed uneven performance and Adam struggled throughout, particularly in intermediate transit compartments.

Based on these results, we recommend natural-gradient methods as a first choice for elliptic and parabolic PDEs, especially when fast convergence is the primary objective. When robustness and very high final accuracy are desired, SSBFGS and SSBroyden methods provide strong alternatives. For hyperbolic PDEs with discontinuities, we recommend combining curvature-aware optimizers with shock-aware and conservation-consistent PINN formulations, such as Roe-based linearization or HLLC-flux constructions, since optimization alone is generally not sufficient in this regime.

We further analyzed optimizer performance using roofline models, focusing on arithmetic intensity, memory bandwidth, and compute throughput. The results show that the main second-order kernels operate near the roofline ceiling and are mostly compute-bound, while smaller problems such as 1D inviscid Burgers remain memory-bound. This analysis highlights both the strong hardware efficiency of the proposed methods and the memory-scaling limitations of dense second-order optimization. 
Last but not least, we developed a novel batch training algorithm for the SSBFGS and SSBroyden methods. We validated its correctness through testing on the Helmholtz and Poisson equations, demonstrating improved stability and convergence rates. These properties make the approach readily applicable to large-scale data-driven problems.

\section*{Acknowledgments}
This research was primarily supported as part of the AIM for Composites, an Energy Frontier Research Center funded by the U.S. Department of Energy (DOE), Office of Science, Basic Energy Sciences (BES), under Award DE-SC0023389 (computational studies, data analysis). The work of KS and GEK  was partially supported by the U.S. Department of Energy (DOE), Office of Science, Advanced Scientific Computing Research (ASCR) program under the Scientific Discovery through Advanced Computing (SciDAC) Institute “LEADS: LEarning-Accelerated Domain Science,” Subcontract 831126 under DE-AC05-76RL01830. We would like to acknowledge \textbf{Prabhjyot Singh Saluja} (CCV, Brown) for providing the hardware expertise and support for the AI testbed used to benchmark the applications presented in the paper. NA is supported by the National Institutes of Health (NIH) grant R01HL154150. JFU is supported by the predoctoral fellowship
ACIF 2023, cofunded by Generalitat Valenciana and the European Union through the European Social Fund. JFU also acknowledges the support through the grant PID2021-127495NB-I00 funded by MCIN/AEI/10.13039/501100011033 and by the European Union, the Astrophysics and High Energy Physics programme of the Generalitat Valenciana ASFAE/2022/026 funded by MCIN and the European Union NextGenerationEU (PRTR-C17.I1), and the Prometeo excellence programme
grant CIPROM/2022/13. Finally, JFU gratefully acknowledges the support provided by grant CIBEFP/2024/108, cofunded by Generalitat Valenciana and the European Union through the European Social Fund, which enabled a research stay at Brown University, where part of this work was carried out, as well as the kind hospitality of the Division of Applied Mathematics.
JM acknowledges support by the European Union (ERC, FluCo, grant agreement No. 101088488). Views and opinions expressed are however those of the author(s) only and do not necessarily reflect those of the European Union or of the European Research Council. Neither the European Union nor the granting authority can be held responsible for them. A.J. acknowledges financial support from the European Union via the European Defence Fund project Archytas under grant agreement nr.~101167870. Views and opinions expressed are however those of the author(s) only and do not necessarily reflect those of the European Union or the European Commission. Neither the European Union nor the granting authority can be held responsible for them. Furthermore, A.J. acknowledges Flavio Vella for his diligent support and supervision.

\FloatBarrier
\bibliographystyle{cas-model2-names}
\bibliography{cas-refs}

\cleardoublepage

\appendix
\section{Details on Self-Scaled Quasi-Newton Methods}\label{sec:appendix_Self_Scaled}

The parameters $(\tau_k,\phi_k)$ cannot be chosen arbitrarily; they must satisfy certain conditions to guarantee convergence. One of the most fundamental requirements is that the update formulas for the matrices \(H_k\) (or \(B_k\)) preserve positive definiteness, thereby ensuring that the resulting search directions remain descent directions. This requirement imposes the following restriction on \(\eta_k\):
\begin{equation}\label{eq:thetak_star}
\eta_k > \eta_k^{*} \equiv -\frac{1}{h_k b_k - 1} = -\frac{1}{a_k},
\end{equation}
where
\begin{align}
b_k &= \frac{\mathbf{s}_k^\top B_k \mathbf{s}_k}{\mathbf{y}_k^\top \mathbf{s}_k}, \\
h_k &= \frac{\mathbf{y}_k^\top H_k \mathbf{y}_k}{\mathbf{y}_k^\top \mathbf{s}_k}.
\end{align}

Note that if \(H_k\) is positive definite, then \(\eta_k^{*}\) is negative. Indeed,
\begin{equation}
a_k \equiv h_k b_k - 1
= \frac{\left(\mathbf{y}_k^\top H_k \mathbf{y}_k\right)\left(\mathbf{s}_k^\top B_k \mathbf{s}_k\right)}{\left(\mathbf{y}_k^\top \mathbf{s}_k\right)^2} - 1
= \frac{\left\|H_k^{1/2}\mathbf{y}_k\right\|^2 \left\|B_k^{1/2}\mathbf{s}_k\right\|^2}
{\left[\left(H_k^{1/2}\mathbf{y}_k\right)^\top \left(B_k^{1/2}\mathbf{s}_k\right)\right]^2} - 1
\ge 0,
\end{equation}
where the last inequality follows from the Cauchy-Schwarz inequality.

Another restriction on these parameters can be obtained from the convergence theory of quasi-Newton methods. As shown by Al-Baali~\cite{albaali1998}, the sequence \(\{x_k\}_{k\in\mathbb{N}}\) generated by \eqref{eq:qn_1}, with \(\{H_k\}_{k\in\mathbb{N}}\) updated by the self-scaled Broyden formula and \(\{\alpha_k\}_{k\in\mathbb{N}}\) chosen to satisfy the Wolfe conditions, converges superlinearly for convex objective functions if \(\eta_k\tau_k \in [0,1)\), provided that \(0 < \tau_k \le 1\). In particular, the scaled version of BFGS, corresponding to \(\phi_k = 1\), with
\begin{equation}\label{eq:tauk_ssbfgs}
    \tau_k^{(1)} \coloneqq \min\{1,\tau_k^{\mathrm{OL}}\}.
\end{equation}
where
\begin{equation}
    \tau_k^{\mathrm{OL}}
    =
    \frac{1}{b_k}
    \equiv
    \frac{\mathbf{y}_k^\top \mathbf{s}_k}{\mathbf{s}_k^\top B_k \mathbf{s}_k}
    =
    -\frac{\mathbf{y}_k^\top \mathbf{s}_k}{\alpha_k\, \mathbf{s}_k^\top \mathbf{g}_k},
\end{equation}
where \(\tau_k^{\mathrm{OL}}\) is the Oren--Luenberger scaling factor~\cite{oren1974selfscaling}. This choice has been shown to be competitive with the unscaled BFGS algorithm, corresponding to \(\tau_k = 1\), including in PINN applications; see, for example,~\cite{urban2025unveiling,kiyani2025optimizing}. Moreover, if \(\phi_k\) is also allowed to vary with the iterations, then the following choice of \(\tau_k\) and \(\phi_k\), originally introduced by Al-Baali and Khalfan~\cite{AlBaaliKhalfan}, can be used:
\begin{align}
    \tau_k^{(2)} &= \begin{cases}
                        \tau_k^{(1)} \min \left(\sigma_k^{-1/(n-1)}, \frac{1}{\eta_k} \right) 
                        & \quad \mathrm{ if } ~ \eta_k > 0 \\
                        \min \left( \tau_k^{(1)} \sigma_k^{-1/(n-1)}, \sigma_k \right) 
                        & \quad \mathrm{ if } ~ \eta_k \leq 0, 
                    \end{cases} \label{eq:tau2} \\ 
    \eta_k^{(2)} &= \max \left(\eta_k^{-}, \min 
    \left(\eta_k^{+}, {\frac{1 - b_k}{b_k}}\right) \right), \label{eq:phi1}
\end{align}
where \(n = \mathrm{size}(\btheta_k)\) denotes the total number of trainable parameters, and
\begin{align}
    \sigma_k &= 1 + a_k \eta_k^{(2)}, \\
    \eta_k^{-} &= \frac{\rho_k^{-}-1}{a_k}, \\
    \eta_k^{+} &= \frac{1}{\rho_k^{-}}, \\
    \rho_k^{-} &= \min\!\left(1,\, h_k(1-c_k)\right), \\
    c_k &= \sqrt{\frac{a_k}{a_k + 1}}.
\end{align}
This choice has also produced very accurate results for a range of physics problems in PINNs; see, for example,~\cite{urban2025unveiling,kiyani2025optimizing}.

To preserve the positive definiteness of the quasi-Newton update, the curvature condition \( \mathbf{y}_k^\top \mathbf{s}_k > 0 \) must hold at every iteration. To enforce this requirement, suitable line-search strategies are employed. In this work, the focus is on inexact line-search methods used in conjunction with quasi-Newton optimizers. In particular, backtracking line-search procedures based on the Armijo--Wolfe conditions~\cite{wolfe1969convergence} are considered, together with the zoom line-search algorithm described in~\cite{nocedal2006numerical}.
Given a descent direction \(d_k\), the step length \(\alpha_k\) is chosen to satisfy the following conditions:

\begin{itemize}
    \item \textbf{Armijo (sufficient decrease) condition:}
    \begin{equation}
    f(x_k + \alpha_k d_k)
    \le
    f(x_k)
    +
    c_1 \alpha_k \nabla f(x_k)^{\top} d_k,
    \qquad 0 < c_1 < 1.
    \end{equation}

    \item \textbf{Wolfe curvature condition:}
    \begin{equation}
    \nabla f(x_k + \alpha_k d_k)^{\top} d_k
    \ge
    c_2 \nabla f(x_k)^{\top} d_k,
    \qquad c_1 < c_2 < 1.
    \end{equation}

    \item \textbf{Strong Wolfe condition (enforced via the zoom procedure):}
    \begin{equation}
    \left|
    \nabla f(x_k + \alpha_k d_k)^{\top} d_k
    \right|
    \le
    c_2
    \left|
    \nabla f(x_k)^{\top} d_k
    \right|.
    \end{equation}
\end{itemize}
 
In addition to line-search strategies, a linear trust-region approach is also considered as an alternative globalization mechanism~\cite{conn2000trust}. In this setting, the step is computed by minimizing a first-order model of the objective subject to a trust-region constraint,
\begin{equation}
\min_{p_k}\; m_k(p_k)
=
f(x_k) + \nabla f(x_k)^{\top} p_k,
\qquad
\text{subject to } \|p_k\| \le \Delta_k,
\end{equation}
where \(\Delta_k>0\) denotes the trust-region radius. The resulting step is given by the Cauchy, or steepest-descent, direction truncated to the trust-region boundary,
\begin{equation}
p_k = -\Delta_k \frac{\nabla f(x_k)}{\|\nabla f(x_k)\|}.
\end{equation}
The step is then accepted, and the trust-region radius is updated using the ratio of actual to predicted reduction,
\begin{equation}
\rho_k =
\frac{
f(x_k) - f(x_k + p_k)
}{
m_k(0) - m_k(p_k)
}.
\end{equation}
with $\Delta_k$ adjusted adaptively based on the value of $\rho_k$. For a more detailed and rigorous treatment of trust-region methods, we refer the reader to Chapter 4 of \cite{nocedal2006numerical}.
 

{

\section{Details on 2D Helmholtz problem }\label{sec:appendix_2D_Helmholtz_problem}

In this section, we discuss the 2D Helmholtz problem in greater detail for the cases $a_1 = 1$ and $a_2 = 4$, considering both $K = 10$ and $K = 100$.

\begin{minipage}[b]{\linewidth}
  \centering
  \scalebox{0.85}{%
  \begin{tabular}{|c|c|l|c|c|c|c|}
    \hline
    \rowcolor{cyan!45}
    \textbf{Case} & \textbf{$K$} & \textbf{Optimizer} &
    \textbf{Relative $L_{\infty}$} & \textbf{Relative $L^2$} &
    \textbf{Training time (s)} & \textbf{\# parameters} \\
    \hline
    \rowcolor{kTen} 11 & 10 & SSBFGS (\texttt{optx.TrustRegion})  & $7.6 \times 10^{-6}$ & $6.3 \times 10^{-6}$ & 564 & 4051 \\ \hline
    \rowcolor{kTen} 12 & 10 & SSBFGS (\texttt{optx.Wolfe})        & $2.0 \times 10^{+00}$ & $2.4 \times 10^{+00}$ & 449 & 4051 \\ \hline
    \rowcolor{kTen} 13 & 10 & SSBroyden (\texttt{optx.TrustRegion}) & $8.5 \times 10^{-7}$ & $6.9 \times 10^{-7}$ & 617 & 4051 \\ \hline
    \rowcolor{kTen} 14 & 10 & SSBroyden (\texttt{optx.Wolfe})      & $4.7 \times 10^{-7}$ & $4.2 \times 10^{-7}$ & 475 & 4051 \\ \hline
    \rowcolor{kTen} 15 & 10 & NG                   & $5.5 \times 10^{-8}$ & $8.0 \times 10^{-8}$ & 587 & 4051 \\ \hline

    \rowcolor{kHundred} 16 & 100 & SSBFGS (\texttt{optx.TrustRegion}) & $7.0 \times 10^{-6}$ & $3.6 \times 10^{-6}$ & 336 & 4051 \\ \hline
    \rowcolor{kHundred} 17 & 100 & SSBFGS (\texttt{optx.Wolfe})       & $5.8 \times 10^{-7}$ & $2.8 \times 10^{-7}$ & 377 & 4051 \\ \hline
    \rowcolor{kHundred} 18 & 100 & SSBroyden (\texttt{optx.TrustRegion}) & $1.0 \times 10^{-6}$ & $4.0 \times 10^{-7}$ & 341 & 4051 \\ \hline
    \rowcolor{kHundred} 19 & 100 & SSBroyden (\texttt{optx.Wolfe})    & $3.9 \times 10^{-7}$ & $2.0 \times 10^{-7}$ & 384 & 4051 \\ \hline
    \rowcolor{kHundred} 20 & 100 & NG                   & $1.0 \times 10^{-7}$ & $2.9 \times 10^{-8}$ & 296 & 4051 \\ \hline

  \end{tabular}%
  }

  \captionof{table}{\textbf{2D Helmholtz problem with $a_1=1$, $a_2=4$, and K = 1.}
  All optimizers use the same fully connected network architecture with four hidden layers and 30 neurons per layer.
  The Fourier feature mapping in~\ref{Fourier_modes} uses mode $m=1$, resulting in four input features.}
  \label{tab:Helmholtz_witha_1=1a_2=4k100}
\end{minipage}


\section{Stochastic SSBFGS and SSBroyden Methods}\label{app:algo_sSSBFGS_sSSBroyden}
This appendix presents detailed pseudocode for the stochastic SSBFGS and SSBroyden optimizers in Algorithm \ref{alg:sQuasiNewton}.
\begin{algorithm}
\DontPrintSemicolon
\SetAlgoLined
\LinesNumbered
\KwIn{Neural network model $f_\theta$ with parameters $\theta \in \mathbb{R}^n$}
\KwIn{Learning rate $\eta > 0$, damping threshold $\tau > 0$, mini-batch size $B$, number of iterations $K$}
\KwOut{Trained model parameters $\theta$}

Initialize Hessian approximation $H_0 \gets I_n$\;
Initialize previous gradient $g_0 \gets 0$\;
Initialize model parameters $\theta_0$\;

\For{$k \gets 0$ \KwTo $K-1$}{
    Sample mini-batch $\mathcal{B}_k$ of size $B$\;
    Compute mini-batch loss: $L_k \gets \text{Loss}(\theta_k; \mathcal{B}_k)$\;
    Compute gradient: $g_k \gets \nabla_\theta L_k$\;
    
    \If{$k > 0$}{
        Compute step and gradient difference:\;
        $s_{k-1} \gets -\eta H_{k-1} g_{k-1}$\;
        $y_{k-1} \gets g_k - g_{k-1}$\;
        
        \If{$y_{k-1}^\top s_{k-1} \ge \tau \|s_{k-1}\|^2$}{
            Update Hessian:\;
    
             $H_{k} \gets \frac{1}{\tau_{k-1}} \left[H_{k-1} - \frac{H_{k-1}y_{k-1}y_{k-1}^TH_{k-1}}{y_{k-1}^T H_{k-1} y_{k-1}} + \phi_{k-1} v_{k-1} v_{k-1}^T\right] + \frac{s_{k-1} s_{k-1}^T}{y_{k-1}^T s_{k-1}}$, \\
    
        }
        \Else{
            $H_k \gets H_{k-1}$\;
        }
    }
    \Else{
        $H_k \gets H_0$\;
    }
    
    Compute update direction: $\Delta \theta_k \gets - \eta H_k g_k$\;
    Update parameters: $\theta_{k+1} \gets \theta_k + \Delta \theta_k$\;
    Store previous gradient: $g_k \to g_{k-1}$\;
}

\caption{Lazy Stochastic sSSBFGS and sSSBroyden for Mini-Batch Training}
\label{alg:sQuasiNewton}
\end{algorithm}

\section{Proof of Theorem \ref{them:batch_th1eorem}} \label{app:ssbfgs_thm}
\begin{proof}
Define the function $g : [0,1] \to \mathbb{R}^n$ by
\[
g(t) := \nabla f(x_{k-1} + t s_{k-1}).
\]
Since $f \in C^2(\mathbb{R}^n)$, the function $g$ is continuously differentiable and
\[
g'(t) = \nabla^2 f(x_{k-1} + t s_{k-1})\, s_{k-1}.
\]

By the Fundamental Theorem of Calculus,
\begin{align*}
y_{k-1}
&= g(1) - g(0) \\
&= \int_0^1 g'(t)\, dt \\
&= \int_0^1 \nabla^2 f(x_{k-1} + t s_{k-1})\, s_{k-1}\, dt.
\end{align*}

Since the Hessian is continuous, the Mean Value Theorem for integrals implies that there exists
$\theta \in (0,1)$ such that
\[
y_{k-1} = \nabla^2 f(x_{k-1} + \btheta s_{k-1})\, s_{k-1},
\]
which proves~\eqref{eq:hessian_rep}.

Taking the inner product with $s_{k-1}$ yields
\[
y_{k-1}^{\top} s_{k-1}
= s_{k-1}^{\top} \nabla^2 f(\xi)\, s_{k-1}.
\]
Using the uniform positive definiteness assumption~\eqref{eq:uniform_pd}, we obtain
\[
s_{k-1}^{\top} \nabla^2 f(\xi)\, s_{k-1}
\ge \tau \, \|s_{k-1}\|^2,
\]
which establishes \eqref{eq:curvature_condition}.
\end{proof}

\section{Correctness of Algorithm \ref{alg:sQuasiNewton}}\label{app:correctness_algorithm}
To demonstrate the correctness of Algorithm~\ref{alg:sQuasiNewton}, we compare its convergence behavior with that of Adam using batch training, alongside the sSSBFGS update, for a one-dimensional Poisson equation given by
\begin{align}
\begin{aligned}\label{eq:poisson1d} 
\frac{d^2 u}{dx^2} &= f(x), 
&& x \in [0,1], \\
f(x) &= -\sin(\pi x), 
\end{aligned}
\end{align}
subject to the Dirichlet boundary conditions
\[
u(0) = 0, \quad u(1) = 0. 
\]

\begin{figure}
\includegraphics[width=\textwidth]{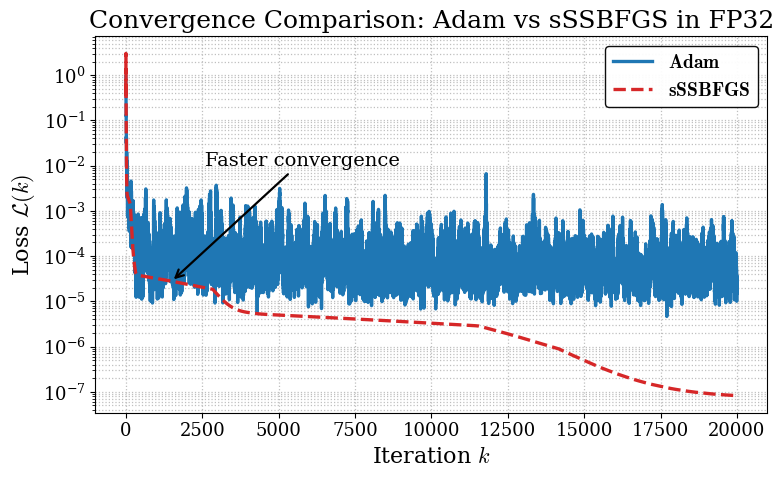}
\caption{Comparison of batch training using Adam and the sSSBFGS approach (Algorithm~\ref{alg:sQuasiNewton}) for the 1D Poisson equation. Notably, in FP32 precision, the sSSBFGS algorithm reaches loss values on the order of $10^{-7}$, significantly outperforming Adam.}
\label{fig:ssBFGS_convergence}
\end{figure}

The convergence histories of Adam and sSSBFGS for batch training are shown in Figure~\ref{fig:ssBFGS_convergence}, with results reported in FP32 precision. Notably, sSSBFGS attains FP32-level accuracy, whereas Adam does not converge to the same level, indicating that sSSBFGS achieves machine precision in the batch training setting.

\end{document}